\useunder{\uline}{\ul}{}
\newcommand{\R}{{\mathbb{R}}}
\newcommand{\N}{{\mathbb{N}}}
\newtheorem{Lemma}{Lemma}
\newtheorem{Definition}{Definition}
\newtheorem{Proposition}{Proposition}
\newtheorem{Remark}{Remark}
\newtheorem{Theorem}{Theorem}
\newtheorem{Corollary}{Corollary}
\newtheorem{Example}{Example}
\title{Sparse estimation via $\ell_q$ optimization method in high-dimensional linear regression}
\author{}
\date{}
\author{
Xin Li\thanks{School of Mathematical Sciences, Zhejiang University, Hangzhou 310027, P. R. China (11435017@zju.edu.cn).}
\and Yaohua Hu\thanks{Shenzhen Key Laboratory of Advanced Machine Learning and Applications, College of Mathematics and Statistics, Shenzhen University, Shenzhen 518060, P. R. China (mayhhu@szu.edu.cn).}
\and Chong Li\thanks{School of Mathematical Sciences, Zhejiang University, Hangzhou 310027, P. R. China (cli@zju.edu.cn).}
\and Xiaoqi Yang\thanks{Department of Applied Mathematics, The Hong Kong Polytechnic University, Kowloon, Hong Kong (mayangxq@polyu.edu.hk).}
\and Tianzi Jiang\thanks{Brainnetome Center, Institute of Automation, Chinese Academy of Sciences, Beijing 100190, P. R. China (jiangtz@nlpr.ia.ac.cn).}
}
\begin{document}
\maketitle

\begin{abstract}
In this paper, we discuss the statistical properties of the $\ell_q$ optimization methods $(0<q\leq 1)$, including the $\ell_q$ minimization method and the $\ell_q$ regularization method, for estimating a sparse parameter from noisy observations in high-dimensional linear regression with either a deterministic or random design. For this purpose, we introduce a general $q$-restricted eigenvalue condition (REC) and provide its sufficient conditions in terms of several widely-used regularity conditions such as sparse eigenvalue condition, restricted isometry property, and mutual incoherence property. By virtue of the $q$-REC, we exhibit the stable recovery property of the $\ell_q$ optimization methods for either deterministic or random designs by showing that the $\ell_2$ recovery bound $O(\epsilon^2)$ for the $\ell_q$ minimization method and the oracle inequality and $\ell_2$ recovery bound $O(\lambda^{\frac{2}{2-q}}s)$ for the $\ell_q$ regularization method hold respectively with high probability. The results in this paper are nonasymptotic and only assume the weak $q$-REC. The preliminary numerical results verify the established statistical property and demonstrate the advantages of the $\ell_q$ regularization method over some existing sparse optimization methods.
\end{abstract}

{\bf Keywords:} sparse estimation, lower-order optimization method, restricted eigenvalue condition, $\ell_2$ recovery bound, oracle property

\section{Introduction}

\noindent In various areas of applied sciences and engineering, a fundamental problem is to estimate an unknown parameter $\beta^*\in \R^n$ of a linear regression model
\begin{equation}\label{model}
y=X\beta^*+e,
\end{equation}
where $X\in \R^{m\times n}$ is a design matrix, $e\in \R^m$ is a vector containing random measurement noise, and thus $y\in \R^m$ is the corresponding vector of the noisy observations. According to the context of practical applications, the design matrix could be either deterministic or random.\\
\indent The curse of dimensionality always occurs in the high-dimensional regime of many application fields. For example, in magnetic resonance imaging \cite{Candes2006Stable}, remote sensing \cite{Aronoff2004Remote}, systems biology \cite{Qin2014Inferring}, one is typically only able to collect far fewer samples than the number of variables due to physical or economical constraints, i.e., $m\ll n$. Under the high-dimensional scenario, estimating the true underlying parameter of model \eqref{model} is a vital challenge in contemporary statistics, whereas the classical ordinary least squares (OLS) does not work well in this scenario because the corresponding linear system is seriously ill-conditioned.

\subsection{$\ell_1$ Optimization Problems}
\noindent Fortunately, in practical applications, a wide class of problems usually have certain special structures, employing which could eliminate the nonidentifiability of model \eqref{model} and enhance the predictability. One of the most popular structures is the sparsity structure, that is, the underlying parameter $\beta^*$ in the high-dimensional space is sparse. 
One common way to measure the degree of sparsity is the $\ell_q$ norm, which for $0<q\le 1$ is defined as
\begin{equation*}
\|\beta\|_q:=\left(\sum_{i=1}^n |\beta_i|^q\right)^{1/q},
\end{equation*}
while $\|\beta\|_0$ is defined as the number of nonzero entries of $\beta$. We first review the literature of sparse estimation for the case when the design matrix $X$ is deterministic. In the presence of a bounded noise (i.e., $\|e\|_2\leq \epsilon$), in order to find the sparest solution, Donoho et al. \cite{Donoho2006Stable} proposed the following (constrained) $\ell_0$ minimization problem:
\begin{equation*}
(\text{CP}_{0,\epsilon})\quad \min\, \|\beta\|_0\quad \text{s.t.}  \quad \|y-X\beta\|_2\leq \epsilon.
\end{equation*}
Unfortunately, it is NP-hard to compute its global solution due to the nonconvex and combinational natures \cite{Natarajan1995Sparse}.

To overcome this obstacle, a common technique is to use the (convex) $\ell_1$ norm to approach the $\ell_0$ norm:
\begin{equation*}
(\text{CP}_{1,\epsilon})\quad \min\, \|\beta\|_1\quad \text{s.t.}  \quad \|y-X\beta\|_2\leq \epsilon,
\end{equation*}
which can be efficiently solved by several standard methods; see \cite{Chen2001Atomic, Figueiredo2007Gradient} and references therein. The stable statistical properties of $(\text{CP}_{1,\epsilon})$ have been explored under the regularity conditions. One of the most important stable statistical properties is the $\ell_2$ recovery bound property, which is to estimate the upper bound of the error between the optimal solution of the optimization problem and the true underlying parameter in terms of the noise level $\epsilon$. More specifically, let $s\ll n$ and $\beta^*$ be an $s$-sparse parameter (i.e., $\|\beta^*\|_0\leq s$) satisfying the linear regression model \eqref{model}. The $\ell_2$ recovery bound for $(\text{CP}_{1,\epsilon})$ was provided in \cite{Donoho2006Stable} and \cite{Candes2006Stable} under the mutual incoherence property (MIP) or the restricted isometry property (RIP)\footnote{It was claimed in \cite{Cai2009Recovery} that the RIP \cite{Candes2005Decoding} is implied by the MIP \cite{Donoho2001Uncertainty}, while the restricted isometry constant (RIC) is more difficult to be calculated than the mutual incoherence constant (MIC).}, respectively:
\begin{equation*}
\|\bar{\beta}_{1,\epsilon}-\beta^*\|_2=O(\epsilon),
\end{equation*}
where $\bar{\beta}_{1,\epsilon}$ stands for the optimal solution of $(\text{CP}_{1,\epsilon})$. 

In some applications, the amplitude of noise is difficult to estimate. As such the study of the constrained sparse optimization models is underdeveloped. In such situations, the regularization technique has been widely used in statistics and machine learning, which helps to avoid the noise estimation by introducing a regularization parameter. Specifically, one solves the (unconstrained) $\ell_1$ regularization problem:
\begin{equation*}
(\text{RP}_{1,\lambda})\quad \min\, \frac{1}{2m}\|y-X\beta\|_2^2+\lambda\|\beta\|_1,
\end{equation*}
where $\lambda>0$ is the regularization parameter, providing a tradeoff between data fidelity and sparsity. The $\ell_1$ regularization model, also named the Lasso estimator \cite{Tibshirani1996Regression}, has attracted a great deal of attention in parameter estimation in the high-dimensional scenario, because its convexity structure is beneficial in designing exclusive and efficient algorithms and gaining wide applications;
see \cite{Beck2009Fast, Daubechies2010Iteratively} and references therein. For the noise-free case,
the $\ell_2$ recovery bound for $(\text{RP}_{1,\lambda})$ was provided in \cite{Geer2009On} under the RIP or the restricted eigenvalue condition (REC)\footnote{It was reported in \cite{Bickel2009Simultaneous} that the REC is implied by the RIP, and in \cite{Raskutti2010Restricted} that a broad class of correlated Gaussian design matrices satisfy the REC but violate the RIP with high probability.}:
\begin{equation*}
\|\hat{\beta}_{1,\lambda}-\beta^*\|_2^2=O(\lambda^2s),
\end{equation*}
where $\hat{\beta}_{1,\lambda}$ denotes the optimal solution of $(\text{RP}_{1,\lambda})$.
Furthermore, assuming that the noise in model \eqref{model} is normally distributed $e\sim \mathscr{N}(0,\sigma^2\mathbb{I}_m)$, it was established in \cite{Bickel2009Simultaneous, Bunea2007Sparsity, Zhang2009Some} that the following $\ell_2$ recovery bound holds with high probability
\begin{equation*}
\|\hat{\beta}_{1,\lambda}-\beta^*\|_2^2=O\left(\sigma^2s\frac{\log n}{m}\right),
\end{equation*}
when the regularization parameter is chosen as $\lambda=\sigma\sqrt{\frac{\log n}{m}}$ and under the RIP, REC or other regularity conditions, respectively. However, the $\ell_1$ minimization and regularization problems suffer several dissatisfactions in both theoretical and practical applications. In particular, it was reported by extensive theoretical and empirical studies that the $\ell_1$ minimization and regularization problems suffer from significant estimation bias when parameters have large absolute values; the induced solutions are much less sparse than the true parameter, they cannot recover a sparse signal with the least samples when applied to compressed sensing, and that they often result in sub-optimal sparsity in practice; see, e.g., \cite{Chartrand2007Exact, Fan2001Variable, Zhang2010Nearly, Xu2012Regu, Zhang2010Analysis}. Therefore, there is a great demand for developing the alternative sparse estimation technique that enjoys nice statistical theory and successful applications.

To address the bias and the sub-optimal issues induced by the $\ell_1$ norm, several nonconvex regularizers have been proposed such as the smoothly clipped absolute deviation (SCAD) \cite{Fan2001Variable}, minimax concave penalty (MCP) \cite{Zhang2010Nearly}, $\ell_0$ norm \cite{Zhang2012General}, $\ell_q$ norm ($0<q<1$) \cite{Foucart2009Sparsest}, and capped $\ell_1$ norm \cite{Loh2015Regularized}; specifically, the SCAD and MCP fall into the category of folded concave penalized (FCP) methods. It was studied in \cite{Zhang2012General} that the global solution of the FCP sparse linear regression enjoys the oracle property under the sparse eigenvalue condition; see Remark 4(iii) for details.

It is worth noting that the $\ell_q$ norm regularizer ($0<q<1$) has been recognized as an important technique for sparse optimization and gained successful applications in various applied science fields; see, e.g., \cite{Chartrand2007Exact, Qin2014Inferring, Xu2012Regu}. In the present paper, we focus on the statistical property of the $\ell_q$ optimization method, which is beyond the category of the FCP. Throughout the whole paper, we always assume that $0<q\le 1$ unless otherwise specified.

\subsection{$\ell_q$ Optimization Problems}

\noindent Due to the fact that $\lim_{q\to 0^+}\|\beta\|_q^q=\|\beta\|_0$, the $\ell_q$ norm has also been adopted as another alternative sparsity promoting penalty function of the $\ell_0$ and $\ell_1$ norms. The following $\ell_q$ optimization problems have attracted a great amount of attention and gained successful applications in a wide range of fields (see \cite{Chartrand2007Exact, Qin2014Inferring, Xu2012Regu} and references therein):
\begin{equation*}
(\text{CP}_{q,\epsilon})\quad \min\, \|\beta\|_q\quad \text{s.t.}  \quad \|y-X\beta\|_2\leq \epsilon,
\end{equation*}
and
\begin{equation*}
(\text{RP}_{q,\lambda})\quad \min\, \frac{1}{2m}\|y-X\beta\|_2^2+\lambda\|\beta\|_q^q.
\end{equation*}
In particular, the numerical results in \cite{Chartrand2007Exact} and \cite{Xu2012Regu} showed that the $\ell_q$ minimization and the $\ell_{\frac12}$ regularization admit a significantly stronger sparsity promoting capability than the $\ell_1$ minimization and the $\ell_1$ regularization, respectively; that is, they allow to obtain a more sparse solution from a smaller amount of samplings. \cite{Qin2014Inferring} revealed that the $\ell_{\frac12}$ regularization achieved a more reliable biological solution than the $\ell_1$ regularization in the field of systems biology.

The advantage of the lower-order optimization problem has also been shown in theory that it requires a weaker regularity condition to guarantee the stable statistical property than the classical $\ell_1$ optimization problem. In particular, let $\bar{\beta}_{q,\epsilon}$ and $\hat{\beta}_{q,\lambda}$ denote the optimal solution of $(\text{CP}_{q,\epsilon})$ and $(\text{RP}_{q,\lambda})$, respectively. The $\ell_2$ recovery bound for $(\text{CP}_{q,\epsilon})$ was established in \cite{Dong2016Unified} and \cite{Song2014Sparse} under MIP and RIP respectively:
\begin{equation}\label{eq-CP-rb}
\|\bar{\beta}_{q,\epsilon}-\beta^*\|_2=O(\epsilon),
\end{equation}
where the MIP or RIP is weaker than the one used in the study of $(\text{CP}_{1,\epsilon})$. \cite{Hu2017Group} established an $\ell_2$ recovery bound for $(\text{RP}_{q,\lambda})$ in the noise-free case:
\begin{equation}\label{bound-Hu}
\|\hat{\beta}_{q,\lambda}-\beta^*\|_2^2=O(\lambda^{\frac{2}{2-q}}s)
\end{equation}
under the introduced $q$-REC, which is strictly weaker than the classical REC. However, the theoretical study of the $\ell_q$ optimization problem is still limited; particularly, there is still no paper devoted to establishing the statistical property of the $\ell_q$ minimization problem when the noise is randomly distributed, and that of the $\ell_q$ regularization problem in the noise-aware case.

\subsection{Contributions of This Paper}
\noindent The main contribution of the present paper is the establishment of the statistical properties for the $\ell_q$ optimization problems, including $(\text{CP}_{q,\epsilon})$ and $(\text{RP}_{q,\lambda})$, in the noise-aware case; specifically, in the case when the linear regression model \eqref{model} involves a Gaussian noise $e\sim \mathscr{N}(0,\sigma^2\mathbb{I}_m)$.
For this purpose, we extend the $q$-REC \cite{Hu2017Group} to a more general one, which is one of the weakest regularity conditions for estimating the $\ell_2$ recovery bounds of sparse estimation models, and provide some sufficient conditions for guaranteeing the general $q$-REC in terms of REC, RIP, and MIP (with a less restrictive constant); see Propositions \ref{prop-REC} and \ref{prop-RECnew}. Under the general $q$-REC, we show that the $\ell_2$ recovery bound \eqref{eq-CP-rb} holds for $(\text{CP}_{q,\epsilon})$ with high probability, and that
\[\|\hat{\beta}_{q,\lambda}-\beta^*\|_2^2=O\left(\left(\sigma^2 \frac{\log n}{m}\right)^{\frac{1}{2-q}}s\right),\]
as well as the estimation of prediction loss and the oracle property, hold for $(\text{RP}_{q,\lambda})$ with high probability; see Theorems \ref{thm-CP} and \ref{thm-RP}, respectively. These results provide a unified framework of the statistical properties of the $\ell_q$ optimization problems, and improve the ones of the $\ell_q$ minimization problem \cite{Dong2016Unified, Song2014Sparse} and the $\ell_1$ regularization problem \cite{Bickel2009Simultaneous, Bunea2007Sparsity, Zhang2009Some} under the $q$-{\rm REC}; see Remark \ref{rmk-thm2}.
They are not only of independent interest in establishing statistical properties for the lower-order optimization problems with randomly noisy data, but also provide a useful tool for the study of the case when the design matrix $X$ is random.

Another contribution of the present paper is to explore the $\ell_2$ recovery bounds for the $\ell_q$ optimization problems with a random design matrix $X$ and random noise $e$, which is more realistic in the real-world applications; e.g., compressed sensing \cite{Candes2006Robust}, signal processing \cite{Candes2006Stable}, statistical learning \cite{Agarwal2012Fast}.
As reported in \cite{Raskutti2010Restricted}, the key issue for studying the statistical properties of a sparse estimation model with a random design matrix is to provide suitable conditions on the population covariance matrix $\Sigma$ of $X$, which can guarantee the regularity conditions with high probability; see, e.g., \cite{Candes2006Stable, Raskutti2010Restricted}.
Motivated by the real-world applications, we consider the standard case when $X$ is a Gaussian random design with i.i.d. $\mathscr{N}(0,\Sigma)$ rows and the linear regression model \eqref{model} involves a Gaussian noise, explore a sufficient condition for ensuring the $q$-REC of $X$ with high probability in terms of the $q$-REC of $\Sigma$, and apply the preceding results to establish the $\ell_2$ recovery bounds \eqref{eq-CP-rb} for $(\text{CP}_{q,\epsilon})$, and \eqref{bound-Hu}, as well as the predication loss and the oracle inequality, for $(\text{RP}_{q,\lambda})$, respectively; see Theorems \ref{thm-CP-randomX} and \ref{thm-RP-randomX}. These results provide a unified framework of the statistical properties of the $\ell_q$ optimization problems with a Gaussian random design under the $q$-{\rm REC}, which cover the ones of the $\ell_1$ optimization problems (see \cite[Theorem 3.1]{Zhou2009Restricted}) as special cases; see Corollaries \ref{corol-CP-randomX} and \ref{corol-RP-randomX}.
To the best of our knowledge, most results presented in this paper are new, either for the deterministic or random design matrix.

We also carry out the numerical experiments on the standard simulated data. The preliminary numerical results verify the established statistical properties and show that the $\ell_q$ optimization methods possess better recovery performance than the $\ell_1$ optimization method, SCAD and MCP, which coincides with existing numerical studies \cite{Hu2017Group, Xu2012Regu} on the $\ell_q$ regularization problem. More specifically, the $\ell_q$ regularization method outperforms the $\ell_1$, SCAD and MCP regularization methods in the sense that its estimated error decreases faster when the sample size increases and achieves a more accurate solution.

The remainder of this paper is organized as follows. In section 2, we introduce the lower-order REC and discuss its sufficient conditions. In section 3, we establish the $\ell_2$ recovery bounds for $(\text{CP}_{q,\epsilon})$ and $(\text{RP}_{q,\lambda})$ with a deterministic design matrix. The extension to the linear regression model with a Gaussian random design and preliminary numerical results are presented in sections 4 and 5, respectively. \\
\indent We end this section by presenting the notations adopted in this paper. We use Greek lowercase letters $\alpha,\beta,\delta$ to denote the vectors, capital letters $J$, $T$ to denote the index sets, and script captical letters $\mathscr{A}$, $\mathscr{B}$, $\mathscr{C}$ to denote the random events. For $\beta\in \R^n$ and $J\subseteq \{1,2,\dots,n\}$, we use $\beta_J$ to denote the vector in $\R^n$ that $(\beta_J)_i=\beta_i$ for $i\in J$ and zero elsewhere, $|J|$ to denote the cardinality of $J$, $J^c:=\{1,2,\dots,n\}\setminus J$ to denote the complement of $J$, and ${\rm supp}(\beta)$ to denote the support of $\beta$, i.e., the index set of nonzero entries of $\beta$. Particularly,
$\mathbb{I}_m$ stands for the identity matrix in $\R^m$, and
$\mathbb{P}(\mathscr{A})$ and $\mathbb{P}(\mathscr{A}|\mathscr{B})$ denote the probability that event $\mathscr{A}$ happens and the conditional probability that event $\mathscr{A}$ happens given that event $\mathscr{B}$ happens, respectively.

\section{Restricted Eigenvalue Conditions}
\noindent This section aims to discuss some regularity conditions imposed on the design matrix $X$ that are needed to guarantee the stable statistical properties of $(\text{\text{CP}}_{q,\epsilon})$ and $(\text{\text{RP}}_{q,\lambda})$.\\
\indent In statistics, the ordinary least squares (OLS) is a classical technique for estimating the unknown parameters in a linear regression model and has favourable properties if some regularity conditions are satisfied; see, e.g., \cite{Rao1973Linear}. For example, the OLS always requires the positive definiteness of the Gram matrix
$\Gamma(X):=X^\top X$,
that is,
\begin{equation}\label{I-9}
\min_{\beta\in \R^n:\beta\neq 0}\frac{(\beta^\top \Gamma(X)\beta)^{1/2}}{\|\beta\|_2}= \min_{\beta\in \R^n:\beta\neq 0}\frac{\|X\beta\|_2}{\|\beta\|_2}>0.
\end{equation}
However, in the high-dimensional setting, the OLS does not work well; in fact, the matrix $\Gamma(X)$ is seriously degenerate, i.e.,
\begin{equation*}
\min_{\beta\in \R^n:\beta\neq 0}\frac{\|X\beta\|_2}{\|\beta\|_2}=0.
\end{equation*}
To deal with the challenges caused by the high-dimensional data, the Lasso (least absolute shrinkage and selection operator) estimator was introduced by \cite{Tibshirani1996Regression}. Since then the Lasso estimator has gained a great success in the sparse representation and machine learning of high-dimensional data; see, e.g., \cite{Bickel2009Simultaneous, Geer2008High, Zhang2009Some} and references therein.
It was pointed out that Lasso requires a weak condition, called the restricted eigenvalue condition (REC) \cite{Bickel2009Simultaneous}, to ensure the nice statistical properties; see, e.g., \cite{Geer2009On, Loh2012High, Negahban2012Unified}. In the definition of REC, the minimum in \eqref{I-9} is replaced by a minimum over a restricted set of vectors measured by an $\ell_1$ norm inequality, and the norm $\|\beta\|_2$ in the denominator is replaced by the $\ell_2$ norm of only a part of $\beta$. The notion of REC was extended to the group-wised lower-order REC in \cite{Hu2017Group}, which was used there to explore the oracle property and $\ell_2$ recovery bound of the $\ell_{p,q}$ regularization problem in a noise-free case.\\
\indent Inspired by the ideas in \cite{Bickel2009Simultaneous, Hu2017Group}, we here introduce a lower-order REC for the $\ell_q$ optimization problems, similar to but more general than the one in \cite{Hu2017Group}, where the minimum is taken over a restricted set of vectors measured by an $\ell_q$ norm inequality. To proceed, we shall introduce some useful notations. For the remainder of this paper, let $a>0$ and $(s,t)$ be a pair of integers such that
\begin{equation}\label{s-t}
1\leq s\leq t\leq n \quad \text{and} \quad s+t\leq n.
\end{equation}
For $\delta\in \R^n$ and $J\subseteq \{1,2,\dots,n\}$, we define by $J(\delta;t)$ the index set corresponding to the first $t$ largest coordinates in absolute value of $\delta$ in $J^c$.
For $X\in \R^{m\times n}$, its $q$-restricted eigenvalue modulus relative to $(s,t,a)$ is defined by
\begin{equation}\label{eq-rec}
\phi_q(s,t,a,X):= \min \left\{\frac{\|X\delta\|_2}{\|\delta_{J\cup J(\delta;t)}\|_2}: |J|\leq s, \|\delta_{J^c}\|_q^q\leq a\|\delta_{J}\|_q^q\right\}.
\end{equation}
The lower-order REC is defined as follows.
\begin{Definition}\label{def-rec}
Let $0\le q\le 1$ and $X\in \R^{m\times n}$. $X$ is said to satisfy the $q$-restricted eigenvalue condition relative to $(s,t,a)$ ($q$-\emph{REC}$(s,t,a)$ in short) if \begin{equation*}
\phi_q(s,t,a,X)>0.
\end{equation*}
\end{Definition}
\begin{Remark}\label{rmk-sqrtm}
{\rm (i)} Clearly, the $q$-{\rm REC}$(s,t,a)$ provides a unified framework of the {\rm REC}-type conditions, e.g., it includes the classical {\rm REC} in \cite{Bickel2009Simultaneous} (when $q=1$) and the $q$-{\rm REC}$(s,t)$ in \cite{Hu2017Group} (when $a=1$) as special cases.

{\rm (ii)} The restricted eigenvalue modulus (with $q=1$) defined in \eqref{eq-rec} is slightly different from the one of the classical \emph{REC} in \cite{Bickel2009Simultaneous}, in which the factor $\sqrt{m}$ appears in the denominator there. The reason is that we consider not only the linear regression with a deterministic design as in \cite{Bickel2009Simultaneous}, but also a random design case; for the later case, the $q$-\emph{REC} is assumed to be satisfied for the population covariance matrix of $X$, in which the sample size $m$ does not appear. Hence, to make it consistent for both two cases, we introduce a new definition of the restricted eigenvalue modulus in \eqref{eq-rec} by removing the factor $\sqrt{m}$ from the denominator. Hereby, this is the difference between the restricted eigenvalue modulus \eqref{eq-rec} and that in \cite{Bickel2009Simultaneous}. For example, if the matrix $X$ has i.i.d. Gaussian entries, the restricted eigenvalue modulus in \cite{Bickel2009Simultaneous} scales as a constant, equally, $\phi_q(s,t,a,X)$ given by \eqref{eq-rec} scales as $\sqrt{m}$, independent of $s$, $m$, and $n$, whenever $\frac{s}m \log n$ is bounded. Consequently, the terms in the denominator of conclusions of Theorem \ref{thm-RP} and Corollary \ref{corol-RP} scale as a constant in this situation.
\end{Remark}

It is natural to study the relationships between the $q$-RECs and other types of regularity conditions. To this end, we first recall some basic properties of the $\ell_q$ norm in the following lemmas; particularly, Lemma \ref{lem-1} is taken from \cite[Section 8.12]{Herman2000Equations} and \cite[Lemmas 1 and 2]{Hu2017Group}.
\begin{Lemma}\label{lem-1}
Let $\alpha,\beta\in \R^n$. Then the following relations are true:
\begin{equation}\label{I-1}
\|\beta\|_{q_2}\le \|\beta\|_{q_1} \leq n^{\frac{1}{q_1}-\frac{1}{q_2}}\|\beta\|_{q_2} \quad \mbox{for any } 0<q_1\leq q_2<+\infty,
\end{equation}
\begin{equation}\label{I-2}
\|\alpha\|_q^q-\|\beta\|_q^q\leq \|\alpha+\beta\|_q^q\leq \|\alpha\|_q^q+\|\beta\|_q^q \quad \mbox{for any } 0<q\leq 1.
\end{equation}
\end{Lemma}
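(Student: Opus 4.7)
The plan is to establish the two displayed inequalities separately by standard manipulations of the $\ell_q$ quasi-norm, relying only on elementary monotonicity and concavity facts about the map $x\mapsto x^q$.

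For \eqref{I-1}, the first step is the inequality $\|\beta\|_{q_2}\le\|\beta\|_{q_1}$. I would reduce to the normalized case $\|\beta\|_{q_1}=1$ by homogeneity. In that case each $|\beta_i|^{q_1}\le 1$ forces $|\beta_i|\le 1$, hence $|\beta_i|^{q_2}\le|\beta_i|^{q_1}$ since $q_2\ge q_1$; summing and taking the $q_2$-th root gives $\|\beta\|_{q_2}\le 1$. For the second inequality, I would apply Hölder with conjugate exponents $q_2/q_1$ and $q_2/(q_2-q_1)$ to the product $\sum_i|\beta_i|^{q_1}\cdot 1$, yielding
\[
\sum_{i=1}^n|\beta_i|^{q_1}\;\le\;\Bigl(\sum_{i=1}^n|\beta_i|^{q_2}\Bigr)^{q_1/q_2}\cdot n^{\,1-q_1/q_2},
\]
and then raise both sides to the $1/q_1$-th power to recover the stated bound $\|\beta\|_{q_1}\le n^{1/q_1-1/q_2}\|\beta\|_{q_2}$.

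For \eqref{I-2}, the right inequality is the workhorse. The key observation is that the function $t\mapsto t^q$ is subadditive on $[0,\infty)$ when $0<q\le 1$, so componentwise
\[
|\alpha_i+\beta_i|^q\;\le\;(|\alpha_i|+|\beta_i|)^q\;\le\;|\alpha_i|^q+|\beta_i|^q,
\]
and summing over $i$ gives $\|\alpha+\beta\|_q^q\le\|\alpha\|_q^q+\|\beta\|_q^q$. The left inequality then follows by substitution: write $\alpha=(\alpha+\beta)+(-\beta)$, apply the right inequality already proved, and use $\|-\beta\|_q^q=\|\beta\|_q^q$, which rearranges to $\|\alpha\|_q^q-\|\beta\|_q^q\le\|\alpha+\beta\|_q^q$.

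There is no real obstacle here; the main technical point is to handle $0<q<1$, where $\|\cdot\|_q$ fails to be a norm, so I must argue via the $q$-th powers rather than invoking the triangle inequality. The subadditivity of $t\mapsto t^q$ on $[0,\infty)$ for $0<q\le 1$ is precisely the replacement and drives all the reverse-triangle-type manipulations in \eqref{I-2}; for \eqref{I-1} the delicate piece is ensuring the Hölder exponents are admissible, which they are since $q_2>q_1>0$ forces $q_2/q_1>1$.
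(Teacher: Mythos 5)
Your proof is correct and is the standard argument; the paper itself does not prove this lemma but cites it from \cite{Herman2000Equations} and \cite{Hu2017Group}, and your normalization trick for the first inequality, H\"older's inequality for the second, and the subadditivity of $t\mapsto t^q$ on $[0,\infty)$ for the third are exactly the classical route. The only cosmetic point is the boundary case $q_1=q_2$, where your H\"older step degenerates but the inequality holds trivially with $n^{0}=1$.
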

\begin{Lemma}\label{lem-2}
Let $p\geq 1$, $n_1,n_2\in \N$, $\alpha\in \R_+^{n_1}$, $\beta\in \R_+^{n_2}$
and $c>0$ be such that
\begin{equation}\label{I-4}
\max\limits_{1\leq i\leq n_1}\alpha_i\leq \min\limits_{1\leq j\leq n_2}\beta_j\quad\text{and}\quad \sum\limits_{i=1}^{n_1}\alpha_i\leq c\sum\limits_{j=1}^{n_2}\beta_j.
\end{equation}
Then
\begin{equation}\label{I-5}
\sum\limits_{i=1}^{n_1}\alpha_i^p\leq c\sum\limits_{j=1}^{n_2}\beta_j^p.
\end{equation}
\end{Lemma}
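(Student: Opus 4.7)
The plan is to exploit the fact that for $p\ge 1$, the map $x\mapsto x^{p-1}$ is nondecreasing on $[0,\infty)$, so the separation hypothesis $\max_i\alpha_i\le\min_j\beta_j$ lets me sandwich the $p$-th powers of the $\alpha_i$ from above, and the $p$-th powers of the $\beta_j$ from below, by the same constant $m^{p-1}$, where $m:=\min_{1\le j\le n_2}\beta_j$. This converts the hypothesis on $\ell_1$-sums into the desired inequality on $\ell_p$-sums without ever invoking a convexity inequality like H\"older or power-mean, which would be cumbersome because the two sums run over different index sets.

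Concretely, I would introduce $M:=\max_{1\le i\le n_1}\alpha_i$ and $m:=\min_{1\le j\le n_2}\beta_j$, so that the first hypothesis in \eqref{I-4} reads $0\le M\le m$. For each $i$, factor $\alpha_i^p=\alpha_i\cdot\alpha_i^{p-1}\le\alpha_i\cdot M^{p-1}$, sum over $i$, and apply the second hypothesis in \eqref{I-4} to obtain
\[
\sum_{i=1}^{n_1}\alpha_i^p \;\le\; M^{p-1}\sum_{i=1}^{n_1}\alpha_i \;\le\; c\,M^{p-1}\sum_{j=1}^{n_2}\beta_j.
\]
Symmetrically, since $\beta_j\ge m$ and $p-1\ge 0$, I have $\beta_j^p=\beta_j\cdot\beta_j^{p-1}\ge\beta_j\cdot m^{p-1}$, so $\sum_j\beta_j^p\ge m^{p-1}\sum_j\beta_j$. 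Using $M^{p-1}\le m^{p-1}$ to chain the two bounds yields $\sum_i\alpha_i^p\le c\,m^{p-1}\sum_j\beta_j\le c\sum_j\beta_j^p$, which is \eqref{I-5}.

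The only care needed is for the boundary cases. If $p=1$, the conclusion is identical to the second hypothesis in \eqref{I-4}, so nothing to prove. If $m=0$, then $M=0$ forces $\alpha_i=0$ for all $i$, and both sides of \eqref{I-5} are nonnegative with the left side equal to zero; if instead $M=0<m$, the left side is zero while the right side is nonnegative. In every case the inequality holds. I do not anticipate any real obstacle: this is a bookkeeping argument relying on monotonicity of $x^{p-1}$ on $[0,\infty)$ for $p\ge 1$ together with the nonnegativity of the entries, and the whole proof is essentially two lines of algebra once $M$ and $m$ are introduced.
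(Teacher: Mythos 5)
Your proof is correct and is essentially the same argument as the paper's: both rest on the bounds $\alpha_i^p\le \alpha_{\max}^{p-1}\alpha_i$ and $\beta_j^p\ge \beta_{\min}^{p-1}\beta_j$ together with $\alpha_{\max}^{p-1}\le\beta_{\min}^{p-1}$. The only difference is presentational — you chain the three inequalities directly, while the paper multiplies two inequalities and cancels a positive factor — and your explicit treatment of the degenerate cases $\alpha_{\max}=0$ is a slight tidying of the paper's ``without loss of generality'' step.
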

\begin{proof}
Let $\alpha_{\max}:=\max\limits_{1\leq i\leq n_1}\alpha_i$ and $\beta_{\min}:=\min\limits_{1\leq j\leq n_2}\beta_j$. Then it holds that
\begin{equation}\label{I-6}
\alpha_{\max}\sum_{i=1}^{n_1}\alpha_i^p\leq \alpha_{\max}^p\sum_{i=1}^{n_1}\alpha_i\quad \mbox{and}\quad
\beta_{\min}^p\sum_{j=1}^{n_2}\beta_j\leq \beta_{\min}\sum_{j=1}^{n_2}\beta_j^p.
\end{equation}
Without loss of generality, we assume that $\alpha_{\max}>0$; otherwise, \eqref{I-5} holds automatically. Thus, by the first inequality of \eqref{I-4} and noting $p\geq 1$, we have that
\begin{equation}\label{I-8}
0<\alpha_{\max}^p\beta_{\min}\leq \alpha_{\max}\beta_{\min}^p.
\end{equation}
Multiplying the inequalities in \eqref{I-6} by $\beta_{\min}\sum\limits_{j=1}^{n_2}\beta_j$ and $\alpha_{\max}\sum\limits_{i=1}^{n_1}\alpha_i$ respectively, we obtain that
\begin{equation*}
\begin{aligned}
\alpha_{\max}\beta_{\min}\sum_{i=1}^{n_1}\alpha_i^p\sum_{j=1}^{n_2}\beta_j &\leq \alpha_{\max}^p\beta_{\min}\sum_{i=1}^{n_1}\alpha_i\sum_{j=1}^{n_2}\beta_j\\
&\leq
\alpha_{\max}\beta_{\min}^p\sum_{i=1}^{n_1}\alpha_i\sum_{j=1}^{n_2}\beta_j\\
&\leq
\alpha_{\max}\beta_{\min}\sum_{i=1}^{n_1}\alpha_i\sum_{j=1}^{n_2}\beta_j^p,
\end{aligned}
\end{equation*}
where the second inequality follows from \eqref{I-8}. This, together with the second inequality of \eqref{I-4}, yields \eqref{I-5}. The proof is complete.
\end{proof}
\indent Extending \cite[Proposition 5]{Hu2017Group} to the general $q$-REC, the following proposition validates the relationship between the $q$-RECs:
the lower the $q$, the weaker the $q$-REC. However, the inverse of this implication is not true; see \cite[Example 1]{Hu2017Group} for a counter example. We provide the proof so as to make this paper self-contained, although the idea is similar to that of \cite[Proposition 5]{Hu2017Group}.
\begin{Proposition}\label{prop-REC}
Let $X\in \R^{m\times n}$, $a>0$, and $(s,t)$ be a pair of integers satisfying \eqref{s-t}. Suppose that $0<q_1\leq q_2\leq 1$ and that $X$ satisfies the $q_2$-\emph{REC}$(s,t,a)$. Then $X$ satisfies the $q_1$-\emph{REC}$(s,t,a)$.
\end{Proposition}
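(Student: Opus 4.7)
The plan is to show directly that $\phi_{q_1}(s,t,a,X)\geq \phi_{q_2}(s,t,a,X)$, which is enough because the right-hand side is positive by hypothesis. To establish this, I will take an arbitrary pair $(\delta,J')$ admissible for the $q_1$-problem (i.e., $|J'|\leq s$ and $\|\delta_{(J')^c}\|_{q_1}^{q_1}\leq a\|\delta_{J'}\|_{q_1}^{q_1}$) and produce a pair $(\delta,J)$ admissible for the $q_2$-problem with a denominator no smaller than the one for $J'$.

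The natural choice for $J$ is the index set of the $s$ largest coordinates of $|\delta|$ (ties broken arbitrarily). First I would argue that this choice is still $q_1$-feasible: since $J$ maximizes $\|\delta_J\|_{q_1}^{q_1}$ over $|J|\leq s$ and correspondingly minimizes $\|\delta_{J^c}\|_{q_1}^{q_1}$, we get $\|\delta_{J^c}\|_{q_1}^{q_1}\leq \|\delta_{(J')^c}\|_{q_1}^{q_1}\leq a\|\delta_{J'}\|_{q_1}^{q_1}\leq a\|\delta_J\|_{q_1}^{q_1}$. At the same time, $J\cup J(\delta;t)$ is the index set of the $s+t$ largest coordinates of $|\delta|$, so $\|\delta_{J\cup J(\delta;t)}\|_2 \geq \|\delta_{J'\cup J'(\delta;t)}\|_2$. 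Consequently,
\[
\frac{\|X\delta\|_2}{\|\delta_{J'\cup J'(\delta;t)}\|_2}\;\geq\;\frac{\|X\delta\|_2}{\|\delta_{J\cup J(\delta;t)}\|_2},
\]
and it suffices to bound the right-hand side below by $\phi_{q_2}(s,t,a,X)$.

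The key step is then to promote $q_1$-feasibility to $q_2$-feasibility for this particular $J$, and this is where Lemma \ref{lem-2} does the work. Setting $\alpha_i=|\delta_i|^{q_1}$ for $i\in J^c$ and $\beta_j=|\delta_j|^{q_1}$ for $j\in J$, the definition of $J$ as a top-$s$ index set gives $\max_{i\in J^c}\alpha_i\leq \min_{j\in J}\beta_j$, while the $q_1$-feasibility established above gives $\sum_i \alpha_i \leq a\sum_j \beta_j$. Applying Lemma \ref{lem-2} with the exponent $p:=q_2/q_1\geq 1$ yields $\|\delta_{J^c}\|_{q_2}^{q_2}\leq a\|\delta_J\|_{q_2}^{q_2}$, so $(\delta,J)$ is admissible for the $q_2$-problem. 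Hence $\|X\delta\|_2/\|\delta_{J\cup J(\delta;t)}\|_2\geq \phi_{q_2}(s,t,a,X)$, and the chain of inequalities completes the proof.

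The only genuinely delicate point is the passage from the $\ell_{q_1}$-cone constraint to the $\ell_{q_2}$-cone constraint; everything else is bookkeeping about top-$s$ index sets. Here the role of Lemma \ref{lem-2} is essential, and one should notice that the hypothesis $\max\alpha_i\leq \min\beta_j$ required by that lemma is precisely what forces us to choose $J$ as the top-$s$ set rather than the original $J'$. Once that observation is made, the argument reduces to the elementary monotonicity comments in the previous paragraph, so I do not expect any further obstacles.
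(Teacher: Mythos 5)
Your proof is correct and follows essentially the same route as the paper: pass to the index set of the $s$ largest coordinates and use Lemma \ref{lem-2} with $p=q_2/q_1$ to upgrade the $\ell_{q_1}$-cone constraint to the $\ell_{q_2}$-cone constraint. You are in fact slightly more careful than the paper, which only asserts the containment $C_{q_1}(s,a)\subseteq C_{q_2}(s,a)$ of the feasible sets of $\delta$ and leaves implicit the monotonicity of the denominator $\|\delta_{J\cup J(\delta;t)}\|_2$ under the change of $J$ that you spell out.
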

\begin{proof}
Associated with the $q$-REC$(s,t,a)$, we define the feasible set
\begin{equation}\label{eq-rec-fs}
C_q(s,a):=\{\delta\in \R^n:\|\delta_{J^c}\|_q^q\leq a\|\delta_{J}\|_q^q\ \text{for some}\  |J|\leq s\}.
\end{equation}
By Definition \ref{def-rec}, it remains to show that $C_{q_1}(s,a)\subseteq C_{q_2}(s,a)$. To this end, let $\delta\in C_{q_1}(s,a)$, and let $J_0$ denote the index set of the first $s$ largest coordinates in absolute value of $\delta$. By the assumption that $\delta\in C_{q_1}(s,a)$ and by the construction of $J_0$, one has $\|\delta_{J_0^c}\|_{q_1}^{q_1}\leq a\|\delta_{J_0}\|_{q_1}^{q_1}$. Then we obtain by Lemma \ref{lem-2} (with $q_2/q_1$ in place of $p$) that $\|\delta_{J_0^c}\|_{q_2}^{q_2}\leq a\|\delta_{J_0}\|_{q_2}^{q_2}$; consequently, $\delta\in C_{q_2}(s,a)$. Hence, it follows that $C_{q_1}(s,a)\subseteq C_{q_2}(s,a)$, and the proof is complete.
\end{proof}
\indent It is revealed from Proposition \ref{prop-REC} that the classical REC is a sufficient condition of the lower-order REC. In the sequel, we will further discuss some other types of regularity conditions: the sparse eigenvalues condition (SEC), the restricted isometry property (RIP), and the mutual incoherence property (MIP), which have been widely used in the literature of statistics and engineering, for ensuring the lower-order REC.

The SEC is a popular regularity condition required to guarantee the nice properties of sparse representation; see \cite{Bickel2009Simultaneous, Donoho2006Most, Zhang2012General} and references therein. For $\Delta \in \R^{n\times n}$ and $s\in \N$, the $s$-sparse minimal eigenvalue and $s$-sparse maximal eigenvalue of $\Delta$ are respectively defined by
\begin{equation}\label{sparse-eigen}
\sigma_{\min}(s,\Delta ):=\min_{\beta\in \R^n:1\leq \|\beta\|_0\leq s}\frac{\beta^\top \Delta \beta}{\beta^\top \beta},\quad
\sigma_{\max}(s,\Delta ):=\max_{\beta\in \R^n:1\leq \|\beta\|_0\leq s}\frac{\beta^\top \Delta\beta}{\beta^\top \beta}.
\end{equation}
The SEC was first introduced in \cite{Donoho2006Most} to show that the optimal solution of $(\text{CP}_{1,\epsilon})$ well approximates that of $(\text{CP}_{0,\epsilon})$ whenever $\sigma_{\min}(2s,\Gamma(X))>0$.

The RIP is another well-known regularity condition in the scenario of sparse learning, which was introduced by \cite{Candes2005Decoding} and has been widely used in the study of the oracle property and $\ell_2$ recovery bound for the high-dimensional regression model; see \cite{Bickel2009Simultaneous, Candes2006Stable, Recht2010Guaranteed} and references therein. Below, we recall the RIP-type notions from \cite{Candes2005Decoding}.
\begin{Definition}\cite{Candes2005Decoding}\label{def-RIP}
Let $X\in \R^{m \times n}$ and let $s,t\in \N$ be such that $s+t\leq n$.
\begin{enumerate}[{\rm (i)}]
  \item The $s$-restricted isometry constant of $X$, denoted by $\eta_s(X)$, is defined to be the smallest quantity such that, for
any $\beta\in \R^n$ and $J\subseteq \{1,\dots,n\}$ with $|J|\leq s$,
\begin{equation}\label{eq-RIC}
(1-\eta_s(X))\|\beta_J\|_2^2\leq \|X\beta_J\|_2^2\leq (1+\eta_s(X))\|\beta_J\|_2^2.
\end{equation}
  \item The $(s,t)$-restricted orthogonality constant of $X$, denoted by $\theta_{s,t}(X)$, is defined to be the smallest quantity such
that, for any $\beta\in \R^n$ and $J,T\subseteq \{1,\dots,n\}$ with $|J|\leq s$, $|T|\leq t$ and $J\cap T=\emptyset$,
\begin{equation}\label{eq-ROC}
|\langle X\beta_J,X\beta_{T}\rangle|\leq \theta_{s,t}(X)\|\beta_J\|_2\|\beta_{T}\|_2.
\end{equation}
\end{enumerate}
\end{Definition}

The MIP is also a well-known regularity condition in the scenario of sparse learning, which was introduced by \cite{Donoho2001Uncertainty} and has been used in \cite{Bickel2009Simultaneous, Cai2009Recovery, Donoho2006Most, Donoho2006Stable} and references therein. In the case when each diagonal element of the Gram matrix $\Gamma(X)$ is 1, $\theta_{1,1}(X)$ coincides with the mutual incoherence constant; see \cite{Donoho2001Uncertainty}.

The following lemmas are useful for establishing the relationship between the $q$-REC and other types of regularity conditions; in particular, Lemmas \ref{lem-RIP1} and \ref{lem-RIP2} are taken from \cite[Lemma 1.1]{Candes2005Decoding} and \cite[Lemma 3.1]{Geer2009On}, respectively.
\begin{Lemma}\label{lem-RIP1}
Let $X\in \R^{m\times n}$ and $s, t \in \N$ be such that $s+t\leq n$. Then
\[\theta_{s,t}(X)\leq \eta_{s+t}(X)\leq \theta_{s,t}(X)+\max\{\eta_s(X),\eta_{t}(X)\}.\]
\end{Lemma}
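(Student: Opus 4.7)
The plan is to prove the two inequalities separately, both by short arguments using only Definition \ref{def-RIP}, the Pythagorean identity for disjoint supports, and elementary polarization/AM--GM reasoning.

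For the first inequality $\theta_{s,t}(X)\leq \eta_{s+t}(X)$, I would employ the polarization identity. Take any disjoint index sets $J,T$ with $|J|\leq s$ and $|T|\leq t$, and any coefficients normalized so that $\|\beta_J\|_2=\|\beta_T\|_2=1$. Then $\beta_J\pm\beta_T$ are both supported on $J\cup T$, which has cardinality at most $s+t$, so Definition \ref{def-RIP}(i) gives $(1-\eta_{s+t}(X))\|\beta_J\pm\beta_T\|_2^2 \leq \|X(\beta_J\pm\beta_T)\|_2^2 \leq (1+\eta_{s+t}(X))\|\beta_J\pm\beta_T\|_2^2$. Because $J\cap T=\emptyset$, we have $\|\beta_J\pm\beta_T\|_2^2 = \|\beta_J\|_2^2 + \|\beta_T\|_2^2 = 2$. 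Subtracting the lower bound for $\beta_J-\beta_T$ from the upper bound for $\beta_J+\beta_T$ and invoking $\|X(\beta_J+\beta_T)\|_2^2 - \|X(\beta_J-\beta_T)\|_2^2 = 4\langle X\beta_J, X\beta_T\rangle$ yields $|\langle X\beta_J, X\beta_T\rangle|\leq \eta_{s+t}(X)$. Homogeneity in $\beta_J,\beta_T$ together with the minimality in the definition of $\theta_{s,t}(X)$ delivers $\theta_{s,t}(X)\leq \eta_{s+t}(X)$.

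For the second inequality $\eta_{s+t}(X)\leq \theta_{s,t}(X)+\max\{\eta_s(X),\eta_t(X)\}$, fix an arbitrary $\beta\in\R^n$ with $|\mathrm{supp}(\beta)|\leq s+t$, and partition $\mathrm{supp}(\beta)=J\sqcup T$ with $|J|\leq s$ and $|T|\leq t$ (e.g.\ $J$ being the first $\min(s,|\mathrm{supp}(\beta)|)$ coordinates). Expand
\[\|X\beta\|_2^2 = \|X\beta_J\|_2^2 + 2\langle X\beta_J, X\beta_T\rangle + \|X\beta_T\|_2^2.\]
Use Definition \ref{def-RIP}(i) to bound $\|X\beta_J\|_2^2$ and $\|X\beta_T\|_2^2$ above and below by $(1\pm\eta_s(X))\|\beta_J\|_2^2$ and $(1\pm\eta_t(X))\|\beta_T\|_2^2$, use Definition \ref{def-RIP}(ii) together with the AM--GM inequality to control the cross term by $\tfrac{\theta_{s,t}(X)}{2}(\|\beta_J\|_2^2+\|\beta_T\|_2^2)$, and exploit the disjointness $J\cap T=\emptyset$ to identify $\|\beta_J\|_2^2+\|\beta_T\|_2^2$ with $\|\beta\|_2^2$. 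This gives simultaneously
\[(1-\max\{\eta_s(X),\eta_t(X)\}-\theta_{s,t}(X))\|\beta\|_2^2 \;\leq\; \|X\beta\|_2^2 \;\leq\; (1+\max\{\eta_s(X),\eta_t(X)\}+\theta_{s,t}(X))\|\beta\|_2^2,\]
and the minimality in the definition of $\eta_{s+t}(X)$ finishes the argument.

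I do not expect any genuine obstacle: both directions reduce to careful bookkeeping with the definitions. The only minor subtlety is ensuring that the cross-term bound in the second step does not inflate the constant beyond $\theta_{s,t}(X)$, which is why I symmetrize via AM--GM (this is tight when $\|\beta_J\|_2=\|\beta_T\|_2$, but is harmless because the resulting factor multiplies $\|\beta_J\|_2^2+\|\beta_T\|_2^2=\|\beta\|_2^2$ directly).
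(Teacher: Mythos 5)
Your proof is correct. The paper does not actually prove this lemma --- it imports it verbatim as \cite[Lemma 1.1]{Candes2005Decoding} --- and your two-part argument (polarization with the parallelogram identity for $\theta_{s,t}\le\eta_{s+t}$, then the support decomposition with AM--GM on the cross term for the reverse direction) is precisely the standard proof given in that reference, so there is nothing to flag.
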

\begin{Lemma}\label{lem-RIP2}
Let $\alpha,\beta\in \R^n$ and $0<\tau<1$ be such that $-\langle\alpha,\beta\rangle\leq \tau\|\alpha\|_2^2$. Then $(1-\tau)\|\alpha\|_2\leq \|\alpha+\beta\|_2$.
\end{Lemma}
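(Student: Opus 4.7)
The plan is to reduce the statement to a one-line application of the Cauchy--Schwarz inequality to the inner product $\langle \alpha,\alpha+\beta\rangle$. First I would dispose of the trivial case $\alpha=0$, where both sides of the desired inequality vanish. Assuming $\alpha\neq 0$, I would expand
\[
\langle \alpha,\alpha+\beta\rangle=\|\alpha\|_2^2+\langle\alpha,\beta\rangle,
\]
and then use the hypothesis $-\langle\alpha,\beta\rangle\leq\tau\|\alpha\|_2^2$ to obtain the lower bound
\[
\langle\alpha,\alpha+\beta\rangle\geq \|\alpha\|_2^2-\tau\|\alpha\|_2^2=(1-\tau)\|\alpha\|_2^2.
\]
Since $0<\tau<1$, this lower bound is strictly positive, which also guarantees that $\alpha+\beta\neq 0$.

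Next I would apply Cauchy--Schwarz to the same inner product to produce the upper bound
\[
\langle\alpha,\alpha+\beta\rangle\leq \|\alpha\|_2\,\|\alpha+\beta\|_2.
\]
Chaining these two estimates gives $(1-\tau)\|\alpha\|_2^2\leq \|\alpha\|_2\,\|\alpha+\beta\|_2$; dividing by $\|\alpha\|_2>0$ yields the claim. There is essentially no obstacle here: the assumption on $-\langle\alpha,\beta\rangle$ is tailored precisely to give the needed lower bound on $\langle\alpha,\alpha+\beta\rangle$, after which Cauchy--Schwarz finishes the argument. The only subtleties are the degenerate case $\alpha=0$ and ensuring the division by $\|\alpha\|_2$ is valid, both of which are handled up front.
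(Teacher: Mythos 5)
Your proof is correct: the hypothesis gives $\langle\alpha,\alpha+\beta\rangle\geq(1-\tau)\|\alpha\|_2^2$, Cauchy--Schwarz gives $\langle\alpha,\alpha+\beta\rangle\leq\|\alpha\|_2\|\alpha+\beta\|_2$, and dividing by $\|\alpha\|_2$ (with the case $\alpha=0$ handled trivially) yields the claim. The paper does not prove this lemma but cites it from van de Geer and B\"uhlmann, and your argument is the standard one for that result, so there is nothing further to compare.
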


For the sake of simplicity, a partition structure and some notations are presented. For a vector $\delta\in \R^n$ and an index set $J\subseteq \{1,2,\dots,n\}$, we use ${\rm rank}(\delta_i;J^c)$ to denote the rank of the absolute value of $\delta_i$ in $J^c$ (in a decreasing order) and $J_k(\delta;t)$ to denote the index set of the $k$-th batch of the first $t$ largest coordinates in absolute value of $\delta$ in $J^c$. That is,
\begin{equation}\label{eq-Jk}
J_k(\delta;t):= \left\{i\in J^c:{\rm rank}(\delta_i;J^c)\in\{kt+1,\dots,(k+1)t\}\right\}\quad \mbox{for each } k\in \N.
\end{equation}
\begin{Lemma}\label{lem-SEC}
Let $X\in \R^{m\times n}$, $0<q\leq 1$, $a>0$, and $(s,t)$ be a pair of integers satisfying \eqref{s-t}. Then the following relations are true:
\begin{equation}\label{phi-1}
\phi_q(s,t,a,X)\geq \sqrt{\sigma_{\min}(s+t,\Gamma(X))}-a^{\frac1q}\left(\frac{s}t\right)^{\frac1q-\frac12}\sqrt{\sigma_{\max}(t,\Gamma(X))},
\end{equation}
\begin{equation}\label{phi-2}
\phi_q(s,t,a,X)\leq \sqrt{\sigma_{\max}(s+t,\Gamma(X))}+a^{\frac1q}\left(\frac{s}t\right)^{\frac1q-\frac12}\sqrt{\sigma_{\max}(t,\Gamma(X))}.
\end{equation}
\end{Lemma}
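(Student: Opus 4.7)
The plan is to fix an arbitrary feasible pair $(J,\delta)$ for the minimization in \eqref{eq-rec} (so $|J|\le s$ and $\|\delta_{J^c}\|_q^q\le a\|\delta_J\|_q^q$), decompose $J^c$ into the disjoint batches $\{J_k(\delta;t)\}_{k\ge 0}$ of \eqref{eq-Jk}, and split $X\delta = X\delta_{J\cup J_0(\delta;t)} + \sum_{k\ge 1} X\delta_{J_k(\delta;t)}$. The reverse and forward triangle inequalities on $\|X\delta\|_2$ will then yield the lower bound \eqref{phi-1} and the upper bound \eqref{phi-2}, respectively, provided one can (a) control each summand by sparse eigenvalues of $\Gamma(X)$, and (b) dominate the tail $\sum_{k\ge 1}\|\delta_{J_k(\delta;t)}\|_2$ by $\|\delta_{J\cup J_0(\delta;t)}\|_2$.

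Step (a) is immediate from \eqref{sparse-eigen}: the head vector $\delta_{J\cup J_0(\delta;t)}$ is $(s+t)$-sparse, so
\[
\sqrt{\sigma_{\min}(s+t,\Gamma(X))}\,\|\delta_{J\cup J_0(\delta;t)}\|_2 \;\le\; \|X\delta_{J\cup J_0(\delta;t)}\|_2 \;\le\; \sqrt{\sigma_{\max}(s+t,\Gamma(X))}\,\|\delta_{J\cup J_0(\delta;t)}\|_2,
\]
and each tail piece $\delta_{J_k(\delta;t)}$ with $k\ge 1$ is $t$-sparse, so $\|X\delta_{J_k(\delta;t)}\|_2\le\sqrt{\sigma_{\max}(t,\Gamma(X))}\,\|\delta_{J_k(\delta;t)}\|_2$. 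For step (b), I would exploit the monotonicity of the batches: every coordinate of $\delta_{J_k(\delta;t)}$ with $k\ge 1$ is dominated in absolute value by the smallest coordinate of $\delta_{J_{k-1}(\delta;t)}$, hence $|\delta_i|^q\le t^{-1}\|\delta_{J_{k-1}(\delta;t)}\|_q^q$ for $i\in J_k(\delta;t)$, which gives $\|\delta_{J_k(\delta;t)}\|_2 \le t^{1/2-1/q}\|\delta_{J_{k-1}(\delta;t)}\|_q$. Summing on $k\ge 1$ and invoking Lemma \ref{lem-1} in the form $\|y\|_1\le \|y\|_q$ (valid because $q\le 1$) on the nonnegative sequence $y_k=\|\delta_{J_k(\delta;t)}\|_q$ — which, combined with the partition identity $\sum_{k\ge 0}\|\delta_{J_k(\delta;t)}\|_q^q = \|\delta_{J^c}\|_q^q$, yields $\sum_{k\ge 0}\|\delta_{J_k(\delta;t)}\|_q\le \|\delta_{J^c}\|_q$ — produces $\sum_{k\ge 1}\|\delta_{J_k(\delta;t)}\|_2\le t^{1/2-1/q}\|\delta_{J^c}\|_q$. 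Finally, the feasibility constraint $\|\delta_{J^c}\|_q\le a^{1/q}\|\delta_J\|_q$ together with the comparison $\|\delta_J\|_q\le s^{1/q-1/2}\|\delta_J\|_2\le s^{1/q-1/2}\|\delta_{J\cup J_0(\delta;t)}\|_2$ (again Lemma \ref{lem-1}) closes the tail estimate as
\[
\sum_{k\ge 1}\|\delta_{J_k(\delta;t)}\|_2 \;\le\; a^{1/q}\left(\frac{s}{t}\right)^{1/q-1/2}\|\delta_{J\cup J_0(\delta;t)}\|_2.
\]

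Plugging the head and tail bounds into the two triangle inequalities and dividing through by $\|\delta_{J\cup J_0(\delta;t)}\|_2$ gives both inequalities uniformly in the feasible $\delta$; taking the minimum over the feasible set then produces \eqref{phi-1} and \eqref{phi-2} simultaneously. The principal obstacle is the bookkeeping in step (b): one must chain a pointwise domination across consecutive batches, the subadditivity of $\|\cdot\|_q^q$, and the $\ell_1\le\ell_q$ comparison that is only available because $q\le 1$, in order to convert the per-batch inequalities into a single estimate whose right-hand side is an $\ell_2$ norm of the head with precisely the exponent $(s/t)^{1/q-1/2}$.
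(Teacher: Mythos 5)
Your proposal is correct and follows essentially the same route as the paper: the same head/tail decomposition into $J\cup J_0(\delta;t)$ and the batches $J_k(\delta;t)$, the same sparse-eigenvalue bounds on each piece, the same tail estimate $\sum_{k\ge 1}\|\delta_{J_k}\|_2\le a^{1/q}(s/t)^{1/q-1/2}\|\delta_{J\cup J_0}\|_2$, and the two triangle inequalities. The only difference is that you derive the key chaining bound $\sum_{k\ge 1}\|\delta_{J_k}\|_2\le t^{1/2-1/q}\|\delta_{J^c}\|_q$ from scratch (correctly), whereas the paper imports it as a black box from \cite[Lemma 7]{Hu2017Group}.
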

\begin{proof}
Fix $\delta \in C_q(s,a)$, as defined by \eqref{eq-rec-fs}. Then there exists $J\subseteq \{1,2,\dots,n\}$ such that
\begin{equation}\label{eq-new1}
|J|\leq s \quad \mbox{and}\quad \|\delta_{J^c}\|_q^q\leq a\|\delta_{J}\|_q^q.
\end{equation}
Write $r:=\lceil\frac{n-s}{t}\rceil$ (where $\lceil u\rceil$ denotes the largest integer not greater than $u$), $J_k:=J_k(\delta;t)$ (defined by \eqref{eq-Jk}) for each $k\in \N$ and $J_*:=J\cup J_0$.
Then it follows from \cite[Lemma 7]{Hu2017Group} and \eqref{eq-new1} that
\begin{equation}\label{I-14}
\sum_{k=1}^{r}\|\delta_{J_k}\|_2\leq t^{\frac12-\frac1q}\|\delta_{J^c}\|_q
\leq a^{\frac1q}t^{\frac12-\frac1q}\|\delta_{J}\|_q\leq a^{\frac1q}\left(\frac{s}t\right)^{\frac1q-\frac12}\|\delta_{J}\|_2
\end{equation}
(due to \eqref{I-1}). Noting by \eqref{eq-Jk} and \eqref{eq-new1} that $|J_*|\le s+t$ and $|J_k|\le t$ for each $k\in \N$, one has by \eqref{sparse-eigen} that
\[
\sqrt{\sigma_{\min}(s+t,\Gamma(X))}\|\delta_{J_*}\|_2 \le
\|X\delta_{J_*}\|_2 \leq \sqrt{\sigma_{\max}(s+t,\Gamma(X))}\|\delta_{J_*}\|_2,
\]
\[
\|X\delta_{J_k}\|_2
\leq \sqrt{\sigma_{\max}(t,\Gamma(X))}\|\delta_{J_k}\|_2\quad \text{for each }\ k\in \N.
\]
These, together with \eqref{I-14}, imply that
\begin{equation*}
\begin{aligned}
\|X\delta\|_2
&\geq \|X\delta_{J_*}\|_2-\sum_{k=1}^{r}\|X\delta_{J_k}\|_2\\
&\geq \left(\sqrt{\sigma_{\min}(s+t,\Gamma(X))}-a^{\frac1q}\left(\frac{s}t\right)^{\frac1q-\frac12}\sqrt{\sigma_{\max}(t,\Gamma(X))}\right)\|\delta_{J_*}\|_2.
\end{aligned}
\end{equation*}
Since $\delta$ and $J$ satisfying \eqref{eq-new1} are arbitrary, \eqref{phi-1} is shown to hold by \eqref{eq-rec} and the fact that $J_*=J\cup J(\delta;t)$. One can prove \eqref{phi-2} in a similar way, and thus, the details are omitted.
\end{proof}
\indent  The following proposition provides the sufficient conditions for the $q$-REC in terms of the SEC, RIP and MIP; see (a), (b) and (c) below respectively.
\begin{Proposition}\label{prop-RECnew}
Let $X\in \R^{m\times n}$, $0<q\leq 1$, $a>0$, and $(s,t)$ be a pair of integers satisfying \eqref{s-t}. Then $X$ satisfies the $q$-\emph{REC}$(s,t,a)$ provided that one of the following conditions:
\begin{enumerate}[{\rm (a)}]
  \item $\sigma_{\min}(s+t,\Gamma(X))>a\left(\frac{as}{t}\right)^{\frac2q-1}\sigma_{\max}(t,\Gamma(X)).$
  \item $\eta_{t}(X)+\theta_{s,t}(X)+a^{\frac12}\left(\frac{as}t\right)^{\frac1q-\frac12}\theta_{t,s+t}(X)<1$.
  \item each diagonal element of $\Gamma(X)$ is $1$ and $$\theta_{1,1}(X)<\left(\left(1+2a\left(\frac{as}t\right)^{\frac1q-1}\right)(s+t)\right)^{-1}.$$
\end{enumerate}
\end{Proposition}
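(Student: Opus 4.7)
I treat the three implications separately. In each case the goal is to show that for every $\delta \in C_q(s,a)$ (cf.\ \eqref{eq-rec-fs}) with a witnessing set $J$ of size at most $s$, the ratio $\|X\delta\|_2/\|\delta_{J_*}\|_2$ is bounded below by a positive constant; here $J_k := J_k(\delta;t)$ as in \eqref{eq-Jk} and $J_* := J\cup J_0$.

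Part (a) is immediate from Lemma \ref{lem-SEC}. Squaring the positivity condition coming from \eqref{phi-1}, namely $\sqrt{\sigma_{\min}(s+t,\Gamma(X))} > a^{1/q}(s/t)^{1/q-1/2}\sqrt{\sigma_{\max}(t,\Gamma(X))}$, and applying the algebraic identity $a^{2/q}(s/t)^{2/q-1} = a(as/t)^{2/q-1}$, reproduces exactly the inequality assumed in (a). Hence $\phi_q(s,t,a,X) > 0$ as required.

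For part (b), the idea is to write $X\delta = \alpha + \beta$ with $\alpha := X\delta_{J_*}$ and $\beta := \sum_{k\ge 1} X\delta_{J_k}$, and then apply Lemma \ref{lem-RIP2}. Setting $A := a^{1/2}(as/t)^{1/q-1/2} = a^{1/q}(s/t)^{1/q-1/2}$, the restricted orthogonality constant $\theta_{t,s+t}(X)$ and \eqref{I-14} yield
\[
|\langle\alpha,\beta\rangle| \le \theta_{t,s+t}(X)\,\|\delta_{J_*}\|_2 \sum_{k\ge 1}\|\delta_{J_k}\|_2 \le \theta_{t,s+t}(X)\, A\, \|\delta_{J_*}\|_2^2;
\]
the RIP together with Lemma \ref{lem-RIP1}, invoking $\eta_s(X)\le \eta_t(X)$ since $s \le t$, gives $\|\alpha\|_2^2 \ge (1-\eta_t(X)-\theta_{s,t}(X))\|\delta_{J_*}\|_2^2$. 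Condition (b) then forces $\tau := \theta_{t,s+t}(X)A/(1-\eta_t(X)-\theta_{s,t}(X)) \in (0,1)$, and Lemma \ref{lem-RIP2} delivers $\|X\delta\|_2 \ge (1-\tau)\sqrt{1-\eta_t(X)-\theta_{s,t}(X)}\,\|\delta_{J_*}\|_2 > 0$.

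For part (c), I work directly from the MIC. Dropping the nonnegative remainder when $\|X\delta\|_2^2$ is expanded along $\delta = \delta_{J_*} + \delta_{(J_*)^c}$,
\[
\|X\delta\|_2^2 \ge \|X\delta_{J_*}\|_2^2 - 2|\langle X\delta_{J_*}, X\delta_{(J_*)^c}\rangle|.
\]
Using the unit diagonals of $\Gamma(X)$, the off-diagonal bound $|\Gamma(X)_{ij}|\le \theta_{1,1}(X)$, and $\|\delta_{J_*}\|_1^2\le (s+t)\|\delta_{J_*}\|_2^2$, the first term is at least $(1-(s+t-1)\theta_{1,1}(X))\|\delta_{J_*}\|_2^2$. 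The block estimate $\|\delta_{(J_*)^c}\|_1 \le \sqrt t \sum_{k\ge 1}\|\delta_{J_k}\|_2 \le \sqrt t\, A\,\|\delta_{J_*}\|_2$, again a consequence of \eqref{I-14}, then bounds the cross term by $\theta_{1,1}(X)\sqrt{t(s+t)}\,A\,\|\delta_{J_*}\|_2^2$, so that
\[
\|X\delta\|_2^2 \ge \big[1 - \theta_{1,1}(X)\big((s+t-1) + 2A\sqrt{t(s+t)}\big)\big]\|\delta_{J_*}\|_2^2.
\]
The main obstacle is reconciling this bracket with the cleaner constant appearing in (c). Writing $B := a(as/t)^{1/q-1}$, a direct computation gives the identity $A = B\sqrt{s/t}$, hence $A\sqrt{t(s+t)} = B\sqrt{s(s+t)} \le B(s+t)$ (since $s\le s+t$), whence $(s+t-1) + 2A\sqrt{t(s+t)} \le (s+t)(1+2B)$; condition (c) then closes the argument.
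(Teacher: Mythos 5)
Your proof is correct and follows essentially the same route as the paper's: (a) via Lemma \ref{lem-SEC}, (b) via the restricted orthogonality bound on $\langle X\delta_{J_*},X\delta_{J_*^c}\rangle$ combined with Lemmas \ref{lem-RIP1} and \ref{lem-RIP2}, and (c) via the diagonal/off-diagonal splitting of $\|X\delta_{J_*}\|_2^2$ together with the block estimate \eqref{I-14}. The only cosmetic differences are that in (b) you fold $\eta_{s+t}(X)\le\eta_t(X)+\theta_{s,t}(X)$ into the definition of $\tau$ from the start, and in (c) you bound $\|\delta_{J_*^c}\|_1$ through the $\ell_2$ norms of the blocks (needing $\sqrt{s(s+t)}\le s+t$) rather than through \eqref{MIP-1} directly, arriving at the same (in fact marginally sharper, $s+t-1$ versus $s+t$) constant.
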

\begin{proof}
It directly follows from Lemma \ref{lem-SEC} (cf. \eqref{phi-1}) that $X$ satisfies the $q$-REC$(s,t,a)$ provided that condition (a) holds.
Fix $\delta \in C_q(s,a)$, and let $J$, $r$, $J_k$ (for each $k\in \N$) and $J_*$ be defined, respectively, as in the beginning of the
proof of Lemma \ref{lem-SEC}. Then \eqref{I-14} follows directly and it
follows from \cite[Lemma 7]{Hu2017Group} and (17) that
\begin{equation}\label{MIP-1}
\|\delta_{J_*^c}\|_1=\sum_{k=1}^{r}\|\delta_{J_k}\|_1\leq t^{1-\frac1q}\|\delta_{J^c}\|_q
\leq a^{\frac1q}t^{1-\frac1q}\|\delta_{J}\|_q\leq a^{\frac1q}\left(\frac{s}t\right)^{\frac1q-1}\|\delta_{J}\|_1.
\end{equation}

Suppose that condition (b) is satisfied. By Definition 2 (cf. \eqref{eq-ROC}), one has that
\[
|\langle X\delta_{J_*}, X\delta_{J_*^c}\rangle|
\leq \sum_{k=1}^r|\langle X\delta_{J_*}, X\delta_{J_k}\rangle|\leq \theta_{t,s+t}(X) \|\delta_{J_*}\|_2 \sum_{k=1}^r\|\delta_{J_k}\|_2.
\]
Then it follows from \eqref{I-14} that
\begin{eqnarray}
|\langle X\delta_{J_*}, X\delta_{J_*^c}\rangle|
&\leq& a^{\frac1q}\left(\frac{s}t\right)^{\frac1q-\frac12}\theta_{t,s+t}(X)\|\delta_{J_*}\|_2\|\delta_{J}\|_2 \nonumber \\
&\leq& \frac{a^{\frac1q}\left(\frac{s}t\right)^{\frac1q-\frac12}\theta_{t,s+t}(X)}{1-\eta_{s+t}(X)}\|X\delta_{J_*}\|_2^2 \label{eq-RIP-1}
\end{eqnarray}
(by \eqref{eq-RIC}). Since $s\le t$ (by \eqref{s-t}), one has by Definition \ref{def-RIP}(i) that $\eta_s(X)\leq \eta_t(X)$, and then by Lemma \ref{lem-RIP1} that $\eta_{s+t}(X)\leq \theta_{s,t}(X)+\eta_t(X)$. Then it follows from (b) that
\begin{equation}\label{eq-RIP-2}
0<\frac{a^{\frac1q}\left(\frac{s}t\right)^{\frac1q-\frac12}\theta_{t,s+t}(X)}{1-\eta_{s+t}(X)}
\le \frac{a^{\frac1q}\left(\frac{s}t\right)^{\frac1q-\frac12}\theta_{t,s+t}(X)}{1-(\eta_{t}(X)+\theta_{s,t}(X))}<1.
\end{equation}
This, together with \eqref{eq-RIP-1}, shows that Lemma \ref{lem-RIP2} is applicable (with $X\delta_{J_*}$, $X\delta_{J_*^c}$, $\frac{a^{\frac1q}\left(\frac{s}t\right)^{\frac1q-\frac12}\theta_{t,s+t}(X)}{1-\eta_{s+t}(X)}$ in place of $\alpha$, $\beta$, $\tau$) to concluding that
\begin{eqnarray*}
\|X\delta\|_2^2
&\geq& \left(1-\frac{a^{\frac1q}\left(\frac{s}t\right)^{\frac1q-\frac12}\theta_{t,s+t}(X)}{1-\eta_{s+t}(X)}\right)^2\|X\delta_{J_*}\|_2^2\\
&\geq& (1-\eta_{s+t}(X))\left(1-\frac{a^{\frac1q}\left(\frac{s}t\right)^{\frac1q-\frac12}\theta_{t,s+t}(X)}{1-\eta_{s+t}(X)}\right)^2\|\delta_{J_*}\|_2^2
\end{eqnarray*}
(due to \eqref{eq-RIC}).
Since $\delta$ and $J$ satisfying \eqref{eq-new1} are arbitrary, we derive by \eqref{eq-rec} and \eqref{eq-RIP-2} that
\[
\phi_q(s,t,a,X)\geq \sqrt{1-\eta_{s+t}(X)}\left(1-\frac{a^{\frac1q}\left(\frac{s}t\right)^{\frac1q-\frac12}\theta_{t,s+t}(X)}{1-\eta_{s+t}(X)}\right)>0;
\]
consequently, $X$ satisfies the $q$-REC$(s,t,a)$.

Suppose that (c) is satisfied.
Then we have by \eqref{MIP-1} and Definition 2 (cf. \eqref{eq-ROC}) that
\begin{equation}\label{eq-MIP-new}
\begin{aligned}
\|X\delta\|_2^2&= \|X\delta_{J_*}\|_2^2+2\langle X\delta_{J_*}, X\delta_{J_*^c}\rangle+\|X\delta_{J_*^c}\|_2^2\\
&\geq \|X\delta_{J_*}\|_2^2-2|\langle X\delta_{J_*}, X\delta_{J_*^c}\rangle|\\
&\geq \|X\delta_{J_*}\|_2^2-2\theta_{1,1}(X)\|\delta_{J_*}\|_1\|\delta_{J_*^c}\|_1\\
&\geq \|X\delta_{J_*}\|_2^2-2a^{\frac1q}\left(\frac{s}t\right)^{\frac1q-1}\theta_{1,1}(X)\|\delta_{J_*}\|_1^2.
\end{aligned}
\end{equation}
Separating the diagonal and off-diagonal terms of the quadratic form $\delta_{J_*}^TX^TX\delta_{J_*}$, one has by \eqref{I-1} and (c) that
\begin{equation*}
\begin{aligned}
\|X\delta_{J_*}\|_2^2&= \sum_{i=1}^{n}(X^TX)_{i,i}(\delta_{J_*})_i(\delta_{J_*})_i+\sum_{j\ne k}^{n}(X^TX)_{j,k}(\delta_{J_*})_j(\delta_{J_*})_k\\
&= \|\delta_{J_*}\|_2^2+\sum_{j\ne k}^{n}\langle X_{\cdot j}(\delta_{J_*})_j, X_{\cdot k}(\delta_{J_*})_k \rangle\\
&\geq \|\delta_{J_*}\|_2^2-\theta_{1,1}(X)\|\delta_{J_*}\|_1^2\\
&\geq (1-(s+t)\theta_{1,1}(X))\|\delta_{J_*}\|_2^2.
\end{aligned}
\end{equation*}
Combining this inequality with \eqref{eq-MIP-new}, we get that
\begin{equation*}
\|X\delta\|_2^2\geq
\left(1-\left(1+2a^{\frac1q}\left(\frac{s}t\right)^{\frac1q-1}\right)(s+t)\theta_{1,1}(X)\right)\|\delta_{J_*}\|_2^2.
\end{equation*}
Since $\delta$ and $J$ satisfying \eqref{eq-new1} are arbitrary, we derive by \eqref{eq-rec} and (c) that
\[
\phi_q(s,t,a,X)\geq 1-\left(1+2a^{\frac1q}\left(\frac{s}t\right)^{\frac1q-1}\right)(s+t)\theta_{1,1}(X)>0;
\]
consequently, $X$ satisfies the $q$-REC$(s,t,a)$. The proof is complete.
\end{proof}
\begin{Remark}
It was established  in \cite[Lemma 4.1(ii)]{Bickel2009Simultaneous}, \cite[Corollary 7.1 and 3.1]{Geer2009On} and \cite[Assumption 5]{Bickel2009Simultaneous} that $X$ satisfies the classical \emph{REC} under one of the following conditions:
\begin{enumerate}[{\rm (a')}]
  \item $\sigma_{\min}(s+t,\Gamma(X))>\frac{s}{t}a^2\sigma_{\max}(t,\Gamma(X)).$
  \item $\eta_{t}(X)+\theta_{s,t}(X)+\left(\frac{s}t\right)^{\frac12}a\theta_{t,s+t}(X)<1$.
  \item each diagonal element of $\Gamma(X)$ is $1$ and $\theta_{1,1}(X)<((1+2a)(s+t))^{-1}.$
\end{enumerate}
Proposition \ref{prop-RECnew} extends the existing results to the general case when $0<q\le 1$ and partially improves them; in particular, each of conditions {\rm (a)}-{\rm (c)} in Proposition \ref{prop-RECnew} required for the $q$-\emph{REC} is less restrictive than the corresponding one of conditions {\rm (a')}-{\rm (c')} required for the classical \emph{REC} in the situation when $t>as$, which usually occurs in the high-dimensional scenario (see, e.g., \cite{Bickel2009Simultaneous, Candes2006Stable, Zhou2009Restricted}). Moreover, by Propositions \ref{prop-REC} and \ref{prop-RECnew},
we achieve that the $q$-\emph{REC}$(s,t,a)$ is satisfied provided that one of the following conditions:
\begin{enumerate}[{\rm (a$^\circ$)}]
  \item $\sigma_{\min}(s+t,\Gamma(X))>\min\left\{1,\left(\frac{as}{t}\right)^{\frac2q-2}\right\}\frac{s}{t}a^2\sigma_{\max}(t,\Gamma(X)).$
  \item $\eta_{t}(X)+\theta_{s,t}(X)+\min\left\{1, \left(\frac{as}t\right)^{\frac1q-1}\right\}\left(\frac{s}{t}\right)^{\frac12}a\theta_{t,s+t}(X)<1$.
  \item each diagonal element of $\Gamma(X)$ is $1$ and $$\theta_{1,1}(X)<\left(\left(1+2a\min\left\{1,\left(\frac{as}{t}\right)^{\frac{1}{q}-1}\right\}\right)(s+t)\right)^{-1}.$$
\end{enumerate}
\end{Remark}

\section{Recovery Bounds for Deterministic Design}
\noindent This section is devoted to establishing the $\ell_2$ recovery bounds for $(\text{CP}_{q,\epsilon})$ and $(\text{RP}_{q,\lambda})$ in the case that $X$ is deterministic.
Throughout this paper, we assume that the linear regression model \eqref{model} involves a Gaussian noise, i.e., $e\sim \mathscr{N}(0,\sigma^2\mathbb{I}_m)$, and adopt the following notations:
\begin{equation*}\label{eq-solution}
\mbox{let $\beta^*$ be a solution of \eqref{model}, $J:={\rm supp}(\beta^*)$, $s:=|J|$, and let $t\in \N$ satisfy \eqref{s-t}.}
\end{equation*}

The $\ell_2$ recovery bound of the $\ell_1$ regularization problem (i.e., Lasso estimator) was established in \cite{Bickel2009Simultaneous} under the assumption of the classical REC. The deduction of the $\ell_2$ recovery bound is based on an important property of the optimal solution. More precisely, let $\bar{\beta}_{1,\epsilon}$ and $\hat{\beta}_{1,\lambda}$ be the solutions of the $\ell_1$ minimization and the $\ell_1$ regularization problems, respectively. It was reported in \cite[Eq. (2.2)]{Candes2006Stable} and \cite[Corollary B.2]{Bickel2009Simultaneous} that the corresponding residuals satisfy the following dominant properties, with high probability,
\begin{equation*}
\|(\bar{\beta}_{1,\epsilon}-\beta^*)_{J^c}\|_1\leq \|(\bar{\beta}_{1,\epsilon}-\beta^*)_{J}\|_1
\end{equation*}
and
\begin{equation*}
\|(\hat{\beta}_{1,\lambda}-\beta^*)_{J^c}\|_1\leq 3\|(\hat{\beta}_{1,\lambda}-\beta^*)_{J}\|_1
\end{equation*}
for the $\ell_1$ minimization and the $\ell_1$ regularization problems, respectively.\\
\indent In the study of the $\ell_q$ minimization and the $\ell_q$ regularization problems, a natural question arises whether the residuals of solutions of $(\text{CP}_{q,\epsilon})$ or $(\text{RP}_{q,\lambda})$ satisfy such a dominant property on the support of the true underlying parameter of linear regression \eqref{model} with high probability. Below, we provide a positive answer for this question in Propositions \ref{prop-cone1} and \ref{prop-cone2}. To this end, we present some preliminary lemmas to measure the probabilities of random events related to the linear regression model \eqref{model}, in which Lemma \ref{lem-Xe-bound} is taken from \cite[Lemma C.1]{Zhou2009Restricted}.
\begin{Lemma}\label{lem-Xe-bound}
Let $0\leq \theta<1$ and $b\geq 0$. Suppose that
\begin{equation}\label{X-1}
\max\limits_{1\leq j\leq n}\|X_{\cdot j}\|_2\leq (1+\theta)\sqrt{m}.
\end{equation}
Then
\begin{equation*}
\mathbb{P}\left(\frac{\|X^{\top}e\|_\infty}{m}\geq \sigma(1+\theta)\sqrt{\frac{2(1+b)\log n}{m}}\right)\leq \left(n^b\sqrt{\pi\log n}\right)^{-1}.
\end{equation*}
\end{Lemma}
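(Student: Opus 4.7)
The plan is to exploit the Gaussianity of $e$ columnwise, pass to a standard normal, apply a sharp Gaussian tail bound, and then use a union bound over the $n$ columns of $X$.

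First I would fix a column index $j\in\{1,\ldots,n\}$ and observe that $X_{\cdot j}^{\top}e$ is a linear functional of the Gaussian vector $e\sim\mathscr{N}(0,\sigma^{2}\mathbb{I}_m)$; hence
\[
X_{\cdot j}^{\top}e \;\sim\; \mathscr{N}\!\bigl(0,\;\sigma^{2}\|X_{\cdot j}\|_{2}^{2}\bigr),
\]
so that $Z_{j}:=X_{\cdot j}^{\top}e/(\sigma\|X_{\cdot j}\|_{2})$ is standard normal. Using the column-norm bound \eqref{X-1}, the event $\{|X_{\cdot j}^{\top}e|/m \ge \sigma(1+\theta)\sqrt{2(1+b)\log n /m}\}$ is contained in $\{|Z_{j}|\ge \sqrt{2(1+b)\log n}\}$, because
\[
\frac{|X_{\cdot j}^{\top}e|}{m} \;\le\; \frac{\sigma(1+\theta)\sqrt{m}\,|Z_{j}|}{m} \;=\; \frac{\sigma(1+\theta)\,|Z_{j}|}{\sqrt{m}}.
\]

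Next I would apply the classical Mill's ratio estimate for a standard Gaussian $Z$, namely $\mathbb{P}(|Z|\ge u)\le \sqrt{2/\pi}\,e^{-u^{2}/2}/u$ for $u>0$. Setting $u=\sqrt{2(1+b)\log n}$ gives $e^{-u^{2}/2}=n^{-(1+b)}$ and hence
\[
\mathbb{P}\!\left(|Z_{j}|\ge \sqrt{2(1+b)\log n}\right) \;\le\; \frac{n^{-(1+b)}}{\sqrt{\pi(1+b)\log n}}.
\]
A union bound over $j=1,\ldots,n$ then produces the factor $n$, yielding
\[
\mathbb{P}\!\left(\frac{\|X^{\top}e\|_{\infty}}{m}\ge \sigma(1+\theta)\sqrt{\tfrac{2(1+b)\log n}{m}}\right) \;\le\; \frac{n^{-b}}{\sqrt{\pi(1+b)\log n}}.
\]
Finally, since $b\ge 0$ implies $1+b\ge 1$, the right-hand side is bounded above by $(n^{b}\sqrt{\pi\log n})^{-1}$, which is the claimed estimate.

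There is essentially no obstacle here; the only subtle point is to use the sharp Mill's ratio bound (which supplies the $1/\sqrt{\log n}$ factor) rather than the cruder $e^{-u^{2}/2}$ bound, since the latter would lose exactly that factor and fail to match the stated constant. Everything else—Gaussian marginals, normalization by the column norm, and the union bound—is routine.
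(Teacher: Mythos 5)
Your proof is correct: the reduction to a standard normal via the column norms, the Mill's ratio bound $\mathbb{P}(|Z|\ge u)\le \sqrt{2/\pi}\,e^{-u^{2}/2}/u$ with $u=\sqrt{2(1+b)\log n}$, and the union bound over the $n$ columns together give exactly the stated estimate (after discarding the harmless factor $\sqrt{1+b}\ge 1$). The paper does not prove this lemma itself but imports it from \cite[Lemma C.1]{Zhou2009Restricted}, and your argument is precisely the standard proof of that result, including the key observation that the sharp Gaussian tail bound (rather than the crude $e^{-u^2/2}$) is what produces the $1/\sqrt{\pi\log n}$ factor.
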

\begin{Lemma}\label{lem-e-bound}
Let $d\geq 5$. Then
\begin{equation*}
\mathbb{P}\left(\|e\|_2^2\geq dm\sigma^2\right)\leq \exp\left(-\frac{d-1}{4}m\right).
\end{equation*}
\end{Lemma}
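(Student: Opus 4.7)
The plan is to apply a standard Chernoff-type argument to the chi-squared tail. Since $e\sim \mathscr{N}(0,\sigma^2\mathbb{I}_m)$, the random variable $\|e\|_2^2/\sigma^2$ has the $\chi_m^2$ distribution, so its moment generating function is known in closed form:
\[
\mathbb{E}\exp\!\left(\frac{t\|e\|_2^2}{\sigma^2}\right) = (1-2t)^{-m/2}\quad \text{for all } 0<t<\tfrac12.
\]
First, I would invoke Markov's inequality applied to the exponential moment: for any $t\in(0,1/2)$,
\[
\mathbb{P}\bigl(\|e\|_2^2 \geq dm\sigma^2\bigr) \;=\; \mathbb{P}\!\left(e^{t\|e\|_2^2/\sigma^2}\geq e^{tdm}\right) \;\leq\; e^{-tdm}(1-2t)^{-m/2}.
\]

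Next, I would optimize over $t$. A direct differentiation of $-tdm - (m/2)\log(1-2t)$ shows that the minimizer is $t^\star = (d-1)/(2d)$, which lies in $(0,1/2)$ for any $d>1$. Substituting $t^\star$ back gives the sharper bound
\[
\mathbb{P}\bigl(\|e\|_2^2 \geq dm\sigma^2\bigr) \leq \exp\!\left(\tfrac{m}{2}(1 - d + \log d)\right).
\]

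Finally, to match the form stated in the lemma, I would show the elementary numeric inequality
\[
\tfrac12(1-d+\log d) \leq -\tfrac{d-1}{4},\qquad\text{i.e.,}\qquad 2\log d \leq d-1,
\]
for all $d\geq 5$. This is where the hypothesis $d\geq 5$ enters: the function $f(d):=d-1-2\log d$ satisfies $f'(d)=1-2/d>0$ for $d>2$, so $f$ is increasing on $[5,\infty)$, and $f(5)=4-2\log 5>0$ by direct evaluation. Hence $2\log d\leq d-1$ holds throughout $[5,\infty)$, and chaining the two inequalities yields the claim.

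The argument is essentially routine; the only step requiring attention is the last one, where the somewhat loose exponent $(d-1)/4$ (rather than the sharp $(d-1-\log d)/2$) forces the restriction $d\ge 5$. No deeper obstacle is anticipated, since the MGF of a chi-squared variable is elementary and the rest amounts to a one-dimensional optimization together with the verification of a scalar inequality.
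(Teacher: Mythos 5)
Your proof is correct, and it follows the same basic route as the paper: rescale so that $\|e\|_2^2/\sigma^2$ is a $\chi^2_m$ variate and then control its upper tail. The only difference is that the paper simply cites a standard chi-square tail bound (from Raskutti et al., which in the relevant regime reads $\mathbb{P}(Z-m\geq mx)\leq \exp(-mx/4)$ for $x\geq 4$, i.e.\ $d\geq 5$), whereas you derive the tail bound from scratch via the MGF $(1-2t)^{-m/2}$, the Chernoff--Markov step, the optimal choice $t^\star=(d-1)/(2d)$, and the scalar inequality $2\log d\leq d-1$ for $d\geq 5$. Your version is self-contained, yields the sharper intermediate exponent $\tfrac{m}{2}(1-d+\log d)$, and makes transparent exactly where the hypothesis $d\geq 5$ is used; all the computations (the optimizer, the resulting bound, and the monotonicity argument for $f(d)=d-1-2\log d$ on $[5,\infty)$ with $f(5)>0$) check out.
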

\begin{proof}
Recall that $e=(e_1,\dots,e_m)^\top\sim \mathscr{N}(0,\sigma^2\mathbb{I}_m)$. Let $u_i:=\frac1{\sigma}e_i$ for $i=1,\dots,m$. Then one has that $u_1,\dots,u_m$ are i.i.d. Gaussian variables with $u_i\sim \mathscr{N}(0,1)$ for $i=1,\dots,m$. Let $u:=(u_1,\dots,u_m)^\top$. Clearly, $\|u\|_2^2=\frac1{\sigma^2}\|e\|_2^2$ is a chi-square random variable with $m$ degrees of freedom (see, e.g., \cite[Section 5.6]{Ross2009First}). Then it follows from standard tail bounds of chi-square random variable (see, e.g., \cite[Appendix I]{Raskutti2011Minimax}) that
\begin{equation*}
\mathbb{P}\left(\frac{\|u\|_2^2-m}{m}\geq d-1\right)\leq \exp\left(-\frac{d-1}{4}m\right)
\end{equation*}
(as $d\geq 5$). Consequently, we obtain that
\begin{equation*}
\mathbb{P}\left(\|e\|_2^2\geq dm\sigma^2\right)=\mathbb{P}\left(\|u\|_2^2\geq dm\right)\leq \exp\left(-\frac{d-1}{4}m\right).
\end{equation*}
The proof is complete.
\end{proof}
\indent Recall that $\beta^*$ satisfies the linear regression model \eqref{model}.
\begin{Lemma}\label{lem-RP}
Let $\hat{\beta}_{q,\lambda}$ be an optimal solution of  $(\emph{RP}_{q,\lambda})$. Then
\[
\frac{1}{2m}\|X\beta^*-X\hat{\beta}_{q,\lambda}\|_2^2\le \lambda\|\beta^*\|_q^q-\lambda\|\hat{\beta}_{q,\lambda}\|_q^q+
\frac{1}{m}\|\hat{\beta}_{q,\lambda}-\beta^*\|_1\|X^{\top}e\|_{\infty}.
\]
\end{Lemma}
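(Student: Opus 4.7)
The plan is to exploit the optimality of $\hat{\beta}_{q,\lambda}$ for the unconstrained objective $\frac{1}{2m}\|y-X\beta\|_2^2+\lambda\|\beta\|_q^q$ by comparing its value at $\hat{\beta}_{q,\lambda}$ against its value at the reference point $\beta^*$. First I would write down the inequality
\[
\frac{1}{2m}\|y-X\hat{\beta}_{q,\lambda}\|_2^2+\lambda\|\hat{\beta}_{q,\lambda}\|_q^q \;\le\; \frac{1}{2m}\|y-X\beta^*\|_2^2+\lambda\|\beta^*\|_q^q,
\]
then substitute $y = X\beta^* + e$ (so that $y - X\beta^* = e$ and $y - X\hat{\beta}_{q,\lambda} = e - X(\hat{\beta}_{q,\lambda} - \beta^*)$) to eliminate the unknown quantity $y$ in favour of the noise vector $e$.

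Next I would expand the squared norm on the left using $\|e - X(\hat\beta_{q,\lambda}-\beta^*)\|_2^2 = \|e\|_2^2 - 2\langle e, X(\hat\beta_{q,\lambda}-\beta^*)\rangle + \|X(\hat\beta_{q,\lambda}-\beta^*)\|_2^2$. The $\|e\|_2^2$ terms on the two sides cancel, and rearranging yields
\[
\frac{1}{2m}\|X\beta^*-X\hat\beta_{q,\lambda}\|_2^2 \;\le\; \lambda\|\beta^*\|_q^q - \lambda\|\hat\beta_{q,\lambda}\|_q^q + \frac{1}{m}\langle e, X(\hat\beta_{q,\lambda}-\beta^*)\rangle.
\]

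To finish, I would estimate the inner product by transferring $X$ to the other factor and applying the Hölder-type $\ell_1$-$\ell_\infty$ duality, giving
\[
\langle e, X(\hat\beta_{q,\lambda}-\beta^*)\rangle = \langle X^\top e, \hat\beta_{q,\lambda}-\beta^*\rangle \le \|X^\top e\|_\infty\,\|\hat\beta_{q,\lambda}-\beta^*\|_1.
\]
Plugging this bound into the previous inequality produces the claimed estimate. There is no real obstacle here: the argument is a straightforward basic-inequality manipulation, and the only mild subtlety is recognising that the nonconvexity of the $\ell_q^q$ penalty (for $q<1$) plays no role because we use optimality only through a function-value comparison rather than through any first-order condition.
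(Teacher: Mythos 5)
Your proposal is correct and follows essentially the same route as the paper's proof: compare objective values at $\hat\beta_{q,\lambda}$ and $\beta^*$, substitute $y=X\beta^*+e$ and expand the squares so the $\|e\|_2^2$ terms cancel, then bound $\langle X^\top e,\hat\beta_{q,\lambda}-\beta^*\rangle$ by $\|X^\top e\|_\infty\|\hat\beta_{q,\lambda}-\beta^*\|_1$. No issues.
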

\begin{proof}
Since $\hat{\beta}_{q,\lambda}$ is an optimal solution of $(\text{RP}_{q,\lambda})$, it follows that
\begin{equation*}
\frac{1}{2m}\|y-X\hat{\beta}_{q,\lambda}\|_2^2+\lambda\|\hat{\beta}_{q,\lambda}\|_q^q\leq \frac{1}{2m}\|y-X\beta^*\|_2^2+\lambda\|\beta^*\|_q^q.
\end{equation*}
This, together with \eqref{model}, yields that
\begin{equation*}
\begin{aligned}
\lambda\|\hat{\beta}_{q,\lambda}\|_q^q-\lambda\|\beta^*\|_q^q
&\leq \frac{1}{2m}\|y-X\beta^*\|_2^2-\frac{1}{2m}\|y-X\hat{\beta}_{q,\lambda}\|_2^2\\
&= \frac{1}{m}\left\langle X(\hat{\beta}_{q,\lambda}-\beta^*),e\right\rangle-\frac{1}{2m}\|X\beta^*-X\hat{\beta}_{q,\lambda}\|_2^2\\
&\leq \frac{1}{m}\|\hat{\beta}_{q,\lambda}-\beta^*\|_1\|X^{\top}e\|_{\infty}-\frac{1}{2m}\|X\beta^*-X\hat{\beta}_{q,\lambda}\|_2^2.
\end{aligned}
\end{equation*}
The proof is complete.
\end{proof}
\indent Below, we present some notations that are useful for the following discussion of the $\ell_2$ recovery bounds. Recall that $\beta^*$ is a solution of \eqref{model}. Throughout the remainder of this paper, let
\begin{equation}\label{eq-a-b}
a>1, \quad 0\le \theta<1, \quad b\ge0,
\end{equation}
unless otherwise specified, and let $r>0$ be such that
\begin{equation}\label{eq-r-1}
r\ge \|\beta^*\|_q.
\end{equation}
Let
\begin{equation}\label{rho}
\epsilon:=\sigma\sqrt{5m} \quad \mbox{and}\quad \rho:=\left(\frac{5\sigma^2}{2\lambda}+r^q\right)^{1/q},
\end{equation}
and select the regularization parameter in $(\text{RP}_{q,\lambda})$ as
\begin{equation}\label{lambda-1}
\lambda:=\max\left\{\frac{a+1}{a-1}\sigma(1+\theta) 2^{1-q}(1+r^q)^{\frac{1-q}{q}}\sqrt{\frac{2(1+b)\log n}{m}},\ \frac{5}{2}\sigma^2\right\}.
\end{equation}
Define the following two random events relative to linear regression model \eqref{model} by
\begin{equation}\label{A-ev}
\mathscr{A}:=\{e:\|e\|_2\leq \epsilon\}
\end{equation}
and
\begin{equation}\label{B-ev}
\mathscr{B}:=\left\{e:\frac{a+1}{(a-1)m}(2\rho)^{1-q}\|X^{\top}e\|_{\infty}\leq \lambda\right\}.
\end{equation}
\indent The following lemma estimates the probabilities of events $\mathscr{A}$ and $\mathscr{B}$.
\begin{Lemma}\label{lem-AB-pro}
The probability of event $\mathscr{A}$ satisfies
\begin{equation}\label{A-pro}
\mathbb{P}(\mathscr{A})\geq 1-\exp(-m).
\end{equation}
Moreover, suppose that \eqref{X-1} is satisfied. Then
\begin{equation}\label{B-pro}
\mathbb{P}(\mathscr{B}) \geq 1-\left(n^b\sqrt{\pi\log n}\right)^{-1},
\end{equation}
\begin{equation}\label{ACapB-pro}
\mathbb{P}(\mathscr{A}\cap \mathscr{B}) \geq 1-\exp(-m)-\left(n^b\sqrt{\pi\log n}\right)^{-1}.
\end{equation}
\end{Lemma}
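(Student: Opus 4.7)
The proof is essentially a three-part calculation that chains the two concentration lemmas through the algebraic definitions of $\epsilon$, $\rho$, and $\lambda$. I will handle the three claims in sequence.

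For \eqref{A-pro}, the plan is to apply Lemma \ref{lem-e-bound} directly with $d=5$. Since $\epsilon^2=5m\sigma^2$ by the definition in \eqref{rho}, the event $\mathscr{A}^c=\{\|e\|_2>\epsilon\}$ coincides with $\{\|e\|_2^2>5m\sigma^2\}$, so Lemma \ref{lem-e-bound} gives $\mathbb{P}(\mathscr{A}^c)\le\exp(-\tfrac{5-1}{4}m)=\exp(-m)$, and complementing yields the bound. This step is routine.

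For \eqref{B-pro}, the plan is to apply Lemma \ref{lem-Xe-bound} and then verify that its conclusion implies membership in $\mathscr{B}$. Under the hypothesis \eqref{X-1}, with probability at least $1-(n^b\sqrt{\pi\log n})^{-1}$ one has
\[
\frac{\|X^\top e\|_\infty}{m}<\sigma(1+\theta)\sqrt{\frac{2(1+b)\log n}{m}}.
\]
The core algebraic task is then to show that this inequality, combined with the definition \eqref{lambda-1} of $\lambda$ and \eqref{rho} of $\rho$, yields $\frac{a+1}{(a-1)m}(2\rho)^{1-q}\|X^\top e\|_\infty\le\lambda$. Rearranging, it suffices to prove that $(2\rho)^{1-q}\cdot \frac{a+1}{a-1}\sigma(1+\theta)\sqrt{2(1+b)\log n/m}\le\lambda$; by the first term inside the $\max$ defining $\lambda$, this reduces to the purely algebraic inequality $\rho^{1-q}\le (1+r^q)^{(1-q)/q}$, equivalently $\rho\le (1+r^q)^{1/q}$. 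Substituting the formula $\rho=(\tfrac{5\sigma^2}{2\lambda}+r^q)^{1/q}$, this in turn reduces to $\tfrac{5\sigma^2}{2\lambda}\le 1$, which is exactly the second term inside the $\max$ in \eqref{lambda-1}. This is the step requiring the most care, since it is where the two branches of the $\max$ in \eqref{lambda-1} play distinct roles: the first branch calibrates the scale against the Gaussian tail bound, while the second branch controls the size of $\rho$ itself so that the bootstrapped factor $(2\rho)^{1-q}$ does not overwhelm $\lambda$.

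For \eqref{ACapB-pro}, the plan is a one-line union bound: $\mathbb{P}(\mathscr{A}\cap\mathscr{B})\ge 1-\mathbb{P}(\mathscr{A}^c)-\mathbb{P}(\mathscr{B}^c)$, into which the two previously established estimates are substituted. The only obstacle throughout is the compatibility check in part (2); once the interplay between $\lambda$, $\rho$, and $r$ is unwound, the rest is bookkeeping.
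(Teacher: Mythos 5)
Your proposal is correct and follows essentially the same route as the paper's proof: Lemma \ref{lem-e-bound} with $d=5$ for $\mathbb{P}(\mathscr{A})$, the reduction of the key algebraic inequality to $\tfrac{5\sigma^2}{2\lambda}\le 1$ via the second branch of the $\max$ in \eqref{lambda-1} so that Lemma \ref{lem-Xe-bound} applies for $\mathbb{P}(\mathscr{B})$, and the elementary bound $\mathbb{P}(\mathscr{A}\cap\mathscr{B})\ge \mathbb{P}(\mathscr{A})-\mathbb{P}(\mathscr{B}^c)$ for the intersection. No gaps.
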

\begin{proof}
By \eqref{rho} and \eqref{A-ev}, Lemma \ref{lem-e-bound} is applicable (with $d=5$) to showing that $\mathbb{P}(\mathscr{A}^c)\leq \exp(-m)$, that is, \eqref{A-pro} is proved. Then it remains to show \eqref{B-pro} and \eqref{ACapB-pro}. For this purpose, we have by \eqref{lambda-1} that
$\lambda\geq \frac{5}{2}\sigma^2$, and noting that $0<q\le 1$,
\begin{align*}
\lambda &\geq \frac{a+1}{a-1}\sigma(1+\theta) 2^{1-q}\left(\frac{5\sigma^2}{2\lambda}+r^q\right)^{\frac{1-q}{q}}\sqrt{\frac{2(1+b)\log n}{m}}\\
&= \frac{a+1}{a-1}\sigma(1+\theta)(2\rho)^{1-q}\sqrt{\frac{2(1+b)\log n}{m}}
\end{align*}
(due to \eqref{rho}). Then one has by \eqref{B-ev} that
\begin{eqnarray*}
\mathbb{P}(\mathscr{B}^c)
&\leq& \mathbb{P}\left(\frac{a+1}{(a-1)m}(2\rho)^{1-q}\|X^{\top}e\|_\infty\geq \frac{a+1}{a-1}\sigma(1+\theta)(2\rho)^{1-q}\sqrt{\frac{2(1+b)\log n}{m}}\right)\\
&=&
\mathbb{P}\left(\frac{\|X^{\top}e\|_\infty}{m}\geq \sigma(1+\theta)\sqrt{\frac{2(1+b)\log n}{m}}\right).
\end{eqnarray*}
Hence, by assumption \eqref{X-1}, Lemma \ref{lem-Xe-bound} is applicable to ensuring \eqref{B-pro}. Moreover, it follows from the elementary probability theory that
\begin{equation*}
\mathbb{P}(\mathscr{A}\cap \mathscr{B})\geq \mathbb{P}(\mathscr{A})-\mathbb{P}(\mathscr{B}^c)\geq 1-\exp(-m)-\left(n^b\sqrt{\pi\log n}\right)^{-1}.
\end{equation*}
The proof is complete.
\end{proof}
\indent We show in the following two propositions that the optimal solution $\hat{\beta}$ of the $\ell_q$ minimization problem $({\rm CP}_{q,\epsilon})$ or the $\ell_q$ regularization problem $({\rm RP}_{q,\lambda})$ satisfies the following dominant property on the support of the true underlying parameter of \eqref{model} with high probability:
\begin{equation}\label{dominant}
\|(\hat{\beta}-\beta^*)_{J^c}\|^q_q\leq c \|(\hat{\beta}-\beta^*)_{J}\|^q_q
\end{equation}
with $c=1$ or $c=a$, respectively.
\begin{Proposition}\label{prop-cone1}
Let  $\bar{\beta}_{q,\epsilon}$ be an optimal solution of $(\emph{CP}_{q,\epsilon})$ with $\epsilon$ given by \eqref{rho}. Then it holds under the event $\mathscr{A}$ that
\begin{equation}\label{I-16}
\|(\bar{\beta}_{q,\epsilon}-\beta^*)_{J^c}\|_q\leq \|(\bar{\beta}_{q,\epsilon}-\beta^*)_{J}\|_q.
\end{equation}
\end{Proposition}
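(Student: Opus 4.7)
The plan is to exploit the feasibility of $\beta^*$ for $(\text{CP}_{q,\epsilon})$ under event $\mathscr{A}$ together with the optimality of $\bar{\beta}_{q,\epsilon}$, and then apply the reverse triangle inequality for the $\ell_q$ quasi-norm (inequality \eqref{I-2} in Lemma \ref{lem-1}) to separate contributions on $J$ and $J^c$.

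First, I would observe that on the event $\mathscr{A}$ we have $\|e\|_2 \le \epsilon$, so by the linear regression model \eqref{model} it follows that $\|y - X\beta^*\|_2 = \|e\|_2 \le \epsilon$; hence $\beta^*$ is feasible for $(\text{CP}_{q,\epsilon})$. Since $\bar{\beta}_{q,\epsilon}$ is an optimal solution, this yields $\|\bar{\beta}_{q,\epsilon}\|_q^q \le \|\beta^*\|_q^q$.

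Next, set $\delta := \bar{\beta}_{q,\epsilon} - \beta^*$ and recall that $J = \mathrm{supp}(\beta^*)$, so $\beta^*_{J^c} = 0$. Decomposing according to $J$ and $J^c$ gives
\begin{equation*}
\|\bar{\beta}_{q,\epsilon}\|_q^q = \|\beta^*_J + \delta_J\|_q^q + \|\delta_{J^c}\|_q^q,
\end{equation*}
since the supports $J$ and $J^c$ are disjoint and the $\ell_q$ quasi-norm raised to the $q$-th power is separable across coordinates. Combining with the optimality inequality above and using the reverse triangle inequality \eqref{I-2} in the form $\|\beta^*_J\|_q^q - \|\delta_J\|_q^q \le \|\beta^*_J + \delta_J\|_q^q$, I would deduce
\begin{equation*}
\|\beta^*_J\|_q^q - \|\delta_J\|_q^q + \|\delta_{J^c}\|_q^q \le \|\beta^*_J + \delta_J\|_q^q + \|\delta_{J^c}\|_q^q \le \|\beta^*\|_q^q = \|\beta^*_J\|_q^q.
\end{equation*}
Cancelling $\|\beta^*_J\|_q^q$ yields $\|\delta_{J^c}\|_q^q \le \|\delta_J\|_q^q$, and taking $q$-th roots gives \eqref{I-16}.

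There is essentially no obstacle here: the proof is a short deterministic chain using only feasibility of $\beta^*$ (which is exactly what event $\mathscr{A}$ ensures through $\epsilon = \sigma\sqrt{5m}$), optimality, separability of $\|\cdot\|_q^q$ across disjoint supports, and the subadditivity/reverse subadditivity of $\|\cdot\|_q^q$ from Lemma \ref{lem-1}. The only mild subtlety to flag is that this dominant property is stated in terms of $\|\cdot\|_q$ rather than $\|\cdot\|_q^q$, so I would be careful to take the $q$-th root only at the end, after all the additive manipulations are complete.
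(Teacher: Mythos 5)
Your proof is correct and follows essentially the same route as the paper's: feasibility of $\beta^*$ under $\mathscr{A}$, optimality of $\bar{\beta}_{q,\epsilon}$, separability of $\|\cdot\|_q^q$ over the disjoint supports $J$ and $J^c$, and the reverse triangle inequality \eqref{I-2}. No gaps.
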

\begin{proof}
Let $e\in \mathscr{A}$. Recall that $\beta^*$ satisfies the linear regression model \eqref{model}, one has that $\|y-X\beta^*\|_2=\|e\|_2\leq \epsilon$ (under the event $\mathscr{A}$), and so, $\beta^*$ is a feasible vector of $(\text{CP}_{q,\epsilon})$. Consequently, by the optimality of $\bar{\beta}_{q,\epsilon}$ for $(\text{CP}_{q,\epsilon})$, it follows that $\|\bar{\beta}_{q,\epsilon}\|_q\leq \|\beta^*\|_q$. Write $\delta:=\bar{\beta}_{q,\epsilon}-\beta^*$. Then we obtain that
\begin{equation}\label{eq-CP-solu}
\|\beta^*\|_q^q\geq \|\beta^*+\delta\|_q^q= \|\beta^*+\delta_{J}+\delta_{J^c}\|_q^q= \|\beta^*+\delta_{J}\|_q^q+\|\delta_{J^c}\|_q^q,
\end{equation}
where the last equality holds because $\beta^*_{J^c}=0$.
On the other hand, one has by \eqref{I-2} that
$\|\beta^*+\delta_{J}\|_q^q\geq \|\beta^*\|_q^q-\|\delta_{J}\|_q^q$.
This, together with \eqref{eq-CP-solu}, implies \eqref{I-16}. The proof is complete.
\end{proof}
\begin{Proposition}\label{prop-cone2}
Let $\hat{\beta}_{q,\lambda}$ be an optimal solution of $(\emph{RP}_{q,\lambda})$ with $\lambda$ given by \eqref{lambda-1}. Suppose that \eqref{X-1} is satisfied. Then
\begin{equation}\label{I-21}
\|\hat{\beta}_{q,\lambda}-\beta^*\|_1\leq (2\rho)^{1-q}\|\hat{\beta}_{q,\lambda}-\beta^*\|_q^q
\end{equation}
under the event $\mathscr{A}$, and
\begin{equation}\label{I-18}
\|(\hat{\beta}_{q,\lambda}-\beta^*)_{J^c}\|_q^q\leq a\|(\hat{\beta}_{q,\lambda}-\beta^*)_{J}\|_q^q
\end{equation}
under the event $\mathscr{A}\cap \mathscr{B}$.
\end{Proposition}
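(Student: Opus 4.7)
The plan is to prove \eqref{I-21} first, since it will be used in the derivation of \eqref{I-18}. Starting from the optimality of $\hat{\beta}_{q,\lambda}$ for $(\text{RP}_{q,\lambda})$ and the bound $\|e\|_2^2 \le 5m\sigma^2 = \epsilon^2$ which holds under $\mathscr{A}$, the inequality
\[
\lambda\|\hat{\beta}_{q,\lambda}\|_q^q \le \frac{1}{2m}\|e\|_2^2 + \lambda\|\beta^*\|_q^q \le \frac{5\sigma^2}{2} + \lambda r^q
\]
gives, after dividing by $\lambda$ and recalling the definition of $\rho$ in \eqref{rho}, that $\|\hat{\beta}_{q,\lambda}\|_q^q \le \rho^q$. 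Combined with $\|\beta^*\|_q \le r \le \rho$ and the elementary inequality $\|\cdot\|_\infty \le \|\cdot\|_q$ from \eqref{I-1}, this yields $\|\hat{\beta}_{q,\lambda} - \beta^*\|_\infty \le \|\hat{\beta}_{q,\lambda}\|_\infty + \|\beta^*\|_\infty \le 2\rho$. Then the coordinatewise bound
\[
\|\delta\|_1 = \sum_i |\delta_i|^{1-q}|\delta_i|^q \le \|\delta\|_\infty^{1-q}\|\delta\|_q^q,
\]
applied to $\delta:=\hat{\beta}_{q,\lambda}-\beta^*$, immediately delivers \eqref{I-21}.

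For \eqref{I-18}, I would invoke Lemma \ref{lem-RP}, drop the nonnegative term $\frac{1}{2m}\|X\beta^*-X\hat{\beta}_{q,\lambda}\|_2^2$, and split $\hat{\beta}_{q,\lambda}=\beta^*+\delta$ on $J$ and $J^c$. Since $\beta^*_{J^c}=0$, the $\ell_q$ quasi-norm separates as $\|\hat{\beta}_{q,\lambda}\|_q^q = \|\beta^*_J+\delta_J\|_q^q + \|\delta_{J^c}\|_q^q$, and the reverse triangle inequality in \eqref{I-2} gives $\|\beta^*_J+\delta_J\|_q^q \ge \|\beta^*\|_q^q - \|\delta_J\|_q^q$. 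Substituting back produces
\[
\lambda\|\delta_{J^c}\|_q^q \le \lambda\|\delta_J\|_q^q + \frac{1}{m}\|\delta\|_1\,\|X^\top e\|_\infty.
\]

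Now I would feed the already-established \eqref{I-21} into the last term, giving $\|\delta\|_1 \le (2\rho)^{1-q}(\|\delta_J\|_q^q + \|\delta_{J^c}\|_q^q)$, and then apply the definition of event $\mathscr{B}$, which supplies $\frac{1}{m}(2\rho)^{1-q}\|X^\top e\|_\infty \le \frac{a-1}{a+1}\lambda$. Plugging these in gives
\[
\lambda\|\delta_{J^c}\|_q^q \le \lambda\|\delta_J\|_q^q + \frac{a-1}{a+1}\lambda\bigl(\|\delta_J\|_q^q + \|\delta_{J^c}\|_q^q\bigr);
\]
dividing by $\lambda$, collecting like terms on each side and multiplying through by $\frac{a+1}{2}$ then yields \eqref{I-18}. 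The main work is essentially bookkeeping: the nontrivial structural input is the separation of $\|\hat{\beta}_{q,\lambda}\|_q^q$ into contributions on $J$ and $J^c$ (which only works because $\beta^*$ is supported on $J$), and the most delicate step will be confirming that the calibration of the constants in $\rho$ and $\lambda$ in \eqref{rho}--\eqref{lambda-1} is exactly what makes the factor $\frac{a-1}{a+1}$ emerge so that the coefficient of $\|\delta_{J^c}\|_q^q$ on the right is strictly smaller than $1$ after the algebra is completed.
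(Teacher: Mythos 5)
Your proposal is correct and follows essentially the same route as the paper: the optimality inequality plus the event $\mathscr{A}$ gives $\|\hat{\beta}_{q,\lambda}\|_q^q\le\rho^q$, leading to \eqref{I-21}, and then Lemma \ref{lem-RP} combined with the $\ell_q$ separation on $J$ and $J^c$, \eqref{I-21}, and the definition of $\mathscr{B}$ yields \eqref{I-18} after the same algebra. The only (immaterial) variation is that you pass through $\|\delta\|_\infty\le 2\rho$ and a coordinatewise H\"older-type bound to get \eqref{I-21}, whereas the paper bounds $\|\delta\|_1\le 2\rho$ directly via \eqref{I-1} and uses $\frac{\|\delta\|_1}{2\rho}\le\bigl(\frac{\|\delta\|_1}{2\rho}\bigr)^q$; both are one-line interpolation arguments.
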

\begin{proof}
Let $e\in \mathscr{A}$. Since $\hat{\beta}_{q,\lambda}$ is an optimal solution of $(\text{RP}_{q,\lambda})$, one has that
\begin{equation*}
\frac{1}{2m}\|y-X\hat{\beta}_{q,\lambda}\|_2^2+\lambda\|\hat{\beta}_{q,\lambda}\|_q^q\leq \frac{1}{2m}\|y-X\beta^*\|_2^2+\lambda\|\beta^*\|_q^q.
\end{equation*}
Then, by \eqref{model} and \eqref{eq-r-1}, it follows that
\begin{equation*}
\|\hat{\beta}_{q,\lambda}\|_q^q\leq \frac{1}{2m\lambda}\|y-X\beta^*\|_2^2+\|\beta^*\|_q^q
\leq \frac{1}{2m\lambda}\|e\|_2^2+r^q
\leq \rho^q
\end{equation*}
(due to \eqref{rho} and $\eqref{A-ev}$). Write $\delta:=\hat{\beta}_{q,\lambda}-\beta^*$. Then, we obtain by \eqref{I-1} and \eqref{eq-r-1} that
\begin{equation*}
\|\delta\|_1\leq \|\hat{\beta}_{q,\lambda}\|_1+\|\beta^*\|_1\leq \|\hat{\beta}_{q,\lambda}\|_q+\|\beta^*\|_q\leq \rho+r<2\rho.
\end{equation*}
Consequently, noting that $0<q\le 1$, one sees that
$\frac{\|\delta\|_1}{2\rho}\leq \left(\frac{\|\delta\|_1}{2\rho}\right)^q$,
and then, by \eqref{I-1} that
\begin{equation}\label{eq-RP-so}
\|\delta\|_1\leq (2\rho)^{1-q}\|\delta\|_1^q\leq (2\rho)^{1-q}\|\delta\|_q^q.
\end{equation}
This shows that \eqref{I-21} is proved. Then it remains to claim \eqref{I-18}. To this end, noting that $\beta^*_{J^c}=0$,
we derive by Lemma \ref{lem-RP} that
\begin{equation*}
\begin{aligned}
-\frac{1}{m}\|\delta\|_1\|X^{\top}e\|_{\infty}
&\leq \lambda\|\beta^*\|_q^q-\lambda\|\beta^*+\delta\|_q^q \\
&= \lambda\|\beta_{J}^*\|_q^q-\lambda\|\beta_{J}^*+\delta_{J}\|_q^q-\lambda\|\delta_{J^c}\|_q^q \\
&\leq \lambda\left(\|\delta_{J}\|_q^q-\|\delta_{J^c}\|_q^q\right)
\end{aligned}
\end{equation*}
(by \eqref{I-2}). This, together with \eqref{eq-RP-so}, yields that
\begin{equation*}
\lambda\left(\|\delta_{J}\|_q^q-\|\delta_{J^c}\|_q^q\right)\ge -\frac{1}{m}(2\rho)^{1-q}\|\delta\|_q^q\|X^{\top}e\|_{\infty}.
\end{equation*}
Then, under the event $\mathscr{A}\cap \mathscr{B}$, we obtain by \eqref{B-ev} that
\[
(a+1)\left(\|\delta_{J}\|_q^q-\|\delta_{J^c}\|_q^q\right)\ge -(a-1)\|\delta\|_q^q= -(a-1)(\|\delta_{J}\|_q^q+\|\delta_{J^c}\|_q^q),
\]
which yields \eqref{I-18}. The proof is complete.
\end{proof}
\begin{Remark}\label{rem-domin}
By Lemma \ref{lem-AB-pro}, Propositions \ref{prop-cone1} and \ref{prop-cone2} show that \eqref{I-16} holds with probability at least $1-\exp(-m)$, and \eqref {I-18} holds with probability at least $1-\exp(-m)-\left(n^b\sqrt{\pi\log n}\right)^{-1}$ if \eqref{X-1} is satisfied, respectively.
\end{Remark}

By virtue of Lemma \ref{lem-AB-pro} and Proposition \ref{prop-cone1}, one of the main theorems of this section is as follows, in which we establish the $\ell_2$ recovery bound for the $\ell_q$ minimization problem $(\mbox{CP}_{q,\epsilon})$ under the $q$-REC. This theorem provides a unified framework to show that one can stably recover the underlying parameter with high probability via solving the $\ell_q$ minimization problem when the design matrix satisfies the weak $q$-REC.
\begin{Theorem}\label{thm-CP}
Let $\bar{\beta}_{q,\epsilon}$ be an optimal solution of $(\emph{CP}_{q,\epsilon})$ with $\epsilon$ given by \eqref{rho}. Suppose that $X$ satisfies the $q$-\emph{REC}$(s,t,1)$.  Then, with probability at least $1-\exp(-m)$, we have that
\begin{equation}\label{I-27}
\|\bar{\beta}_{q,\epsilon}-\beta^*\|_2^2\leq \frac{1+\left(\frac{s}t\right)^{\frac2q-1}}{\phi_q^2(s,t,1,X)}4\epsilon^2.
\end{equation}
\end{Theorem}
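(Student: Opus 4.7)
The plan is to work under the high-probability event $\mathscr{A} = \{\|e\|_2 \leq \epsilon\}$, which by Lemma \ref{lem-AB-pro} occurs with probability at least $1 - \exp(-m)$, and then carry out a deterministic argument that combines feasibility of $\beta^{*}$, the dominant (cone) property from Proposition \ref{prop-cone1}, and the $q$-\emph{REC}$(s,t,1)$ bound. Write $\delta := \bar{\beta}_{q,\epsilon} - \beta^{*}$ and $J_{*} := J \cup J(\delta;t)$.

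First I would show that $\|X\delta\|_2 \leq 2\epsilon$: under $\mathscr{A}$, the true parameter $\beta^{*}$ is feasible for $(\text{CP}_{q,\epsilon})$ since $\|y-X\beta^{*}\|_2 = \|e\|_2 \leq \epsilon$, and by feasibility of $\bar{\beta}_{q,\epsilon}$ we also have $\|y-X\bar{\beta}_{q,\epsilon}\|_2 \leq \epsilon$, so the triangle inequality yields $\|X\delta\|_2 \leq 2\epsilon$. Proposition \ref{prop-cone1} then puts $\delta$ in the feasible set $C_q(s,1)$ (with $|J| = s$), which is exactly the set over which $\phi_q(s,t,1,X)$ is defined, so the $q$-\emph{REC} gives
\[
\|\delta_{J_{*}}\|_2 \leq \frac{\|X\delta\|_2}{\phi_q(s,t,1,X)} \leq \frac{2\epsilon}{\phi_q(s,t,1,X)}.
\]

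Next, I would control the tail $\|\delta_{J_{*}^c}\|_2$ using the batched decomposition $J_k(\delta;t)$ from \eqref{eq-Jk}. By the same computation underlying \eqref{I-14} (invoking \cite[Lemma 7]{Hu2017Group} and \eqref{I-1}, together with the cone inequality $\|\delta_{J^c}\|_q \leq \|\delta_J\|_q$ from Proposition \ref{prop-cone1}), I obtain
\[
\|\delta_{J_{*}^c}\|_2 \leq \sum_{k=1}^{r}\|\delta_{J_k}\|_2 \leq t^{\tfrac12-\tfrac1q}\|\delta_{J^c}\|_q \leq \left(\tfrac{s}{t}\right)^{\tfrac1q-\tfrac12}\|\delta_J\|_2 \leq \left(\tfrac{s}{t}\right)^{\tfrac1q-\tfrac12}\|\delta_{J_{*}}\|_2.
\]
Squaring and adding $\|\delta_{J_{*}}\|_2^2$ to both sides gives $\|\delta\|_2^2 = \|\delta_{J_{*}}\|_2^2 + \|\delta_{J_{*}^c}\|_2^2 \leq \bigl(1+(s/t)^{2/q-1}\bigr)\|\delta_{J_{*}}\|_2^2$, and inserting the $q$-\emph{REC} bound from the previous step delivers \eqref{I-27}. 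The probability statement then follows by restricting to the event $\mathscr{A}$.

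The argument is short and essentially deterministic once the cone property is in place; the only step that requires care is the tail bound on $\|\delta_{J_{*}^c}\|_2$, since it chains three different inequalities (the batched $\ell_2$-vs-$\ell_q$ bound of \cite[Lemma 7]{Hu2017Group}, the cone inequality on $J^c$, and the reverse embedding $\|\delta_J\|_q \leq s^{1/q-1/2}\|\delta_J\|_2$ from \eqref{I-1}), and one must track the exponents on $s/t$ carefully to land exactly on the factor $(s/t)^{2/q-1}$ appearing in the statement. Apart from that bookkeeping, no further obstacle is expected.
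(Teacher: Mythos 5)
Your proposal is correct and follows essentially the same route as the paper's proof: work under the event $\mathscr{A}$, get $\|X\delta\|_2\le 2\epsilon$ from feasibility, use Proposition \ref{prop-cone1} to place $\delta$ in $C_q(s,1)$ so the $q$-REC controls $\|\delta_{J_*}\|_2$, and bound the tail $\|\delta_{J_*^c}\|_2$ via the batched $\ell_2$-vs-$\ell_q$ estimate combined with the cone inequality and \eqref{I-1}. The only cosmetic difference is that the paper bounds $\|\delta_{J_*^c}\|_2^2$ directly by $t^{1-2/q}\|\delta_{J^c}\|_q^2$ rather than summing $\|\delta_{J_k}\|_2$ over batches; the exponent bookkeeping lands on the same factor $(s/t)^{2/q-1}$ either way.
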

\begin{proof}
Write $\delta:=\bar{\beta}_{q,\epsilon}-\beta^*$, and let $J_*:=J\cup J_0(\delta;t)$ (defined by \eqref{eq-Jk}).
Fix $e\in \mathscr{A}$. Then it follows from \cite[Lemma 7]{Hu2017Group} and Proposition \ref{prop-cone1} that
\begin{equation*}
\|\delta_{J_*^c}\|_2^2\leq t^{1-\frac2q}\|\delta_{J^c}\|_q^2\leq t^{1-\frac2q}\|\delta_{J}\|_q^2\leq \left(\frac{s}t\right)^{\frac2q-1}\|\delta_J\|_2^2\leq \left(\frac{s}t\right)^{\frac2q-1}\|\delta_{J_*}\|_2^2
\end{equation*}
(by \eqref{I-1}), and so
\begin{equation}\label{I-28}
\|\delta\|_2^2=\|\delta_{J_*}\|_2^2+\|\delta_{J_*^c}\|_2^2 \leq \left(1+\left(\frac{s}t\right)^{\frac2q-1}\right)\|\delta_{J_*}\|_2^2.
\end{equation}
Recalling that $\beta^*$ satisfies the linear regression model \eqref{model}, we have that $\|y-X\beta^*\|_2=\|e\|_2\leq \epsilon$ (by \eqref{A-ev}), and then
\begin{equation}\label{I-29}
\|X\delta\|_2=\|X\bar{\beta}_{q,\epsilon}-X\beta^*\|_2\leq\|X\bar{\beta}_{q,\epsilon}-y\|_2+\|X\beta^*-y\|_2\leq 2\epsilon.
\end{equation}
On the other hand, Proposition \ref{prop-cone1} is applicable to concluding that \eqref{I-16} holds, which shows $\delta\in C_q(s,1)$ (cf. \eqref{eq-rec-fs}). Consequently, we obtain by the assumption of the $q$-REC$(s,t,1)$ that
\begin{equation*}
\|\delta_{J_*}\|_2\leq \frac{\|X\delta\|_2}{\phi_q(s,t,1,X)}.
\end{equation*}
This, together with \eqref{I-28} and \eqref{I-29}, implies that \eqref{I-27} holds under the event $\mathscr{A}$. Noting from Lemma \ref{lem-AB-pro} that $\mathbb{P}(\mathscr{A})\geq 1-\exp(-m)$, we obtain the conclusion. The proof is complete.
\end{proof}
\indent In the special case when the underlying data is noise-free, Theorem \ref{thm-CP} shows that $(\text{CP}_{q,\epsilon})$ can exactly predict the parameter for the deterministic linear regression with high probability under the lower-order REC. For the realistic scenario where the measurements are noisy-aware, Theorem \ref{thm-CP} illustrates the stable recovery capability of $(\text{CP}_{q,\epsilon})$ in the sense that its solution approaches to the true sparse parameter within a tolerance proportional to the noise level with high probability. Moreover, Theorem \ref{thm-CP} establishes the $\ell_2$ recovery bound $\|\bar{\beta}_{q,\epsilon}-\beta^*\|_2=O(\epsilon)$ under a weaker assumption than the RIP-type or MIP-type condition used in \cite{Dong2016Unified, Song2014Sparse}, respectively.

As a special case of Theorem \ref{thm-CP} when $q=1$, the following corollary presents the $\ell_2$ recovery bound of the $\ell_1$ minimization problem $(\text{CP}_{1,\epsilon})$ as
\begin{equation}\label{eq-rb-l1}
\|\bar{\beta}_{1,\epsilon}-\beta^*\|_2^2=O(\epsilon^2)
\end{equation}
under the classical REC. This result improves the ones in \cite{ Cai2009Recovery, Candes2006Stable} under a weaker assumption, in which the $\ell_2$ recovery bound \eqref{eq-rb-l1} was obtained under the RIP-type conditions.
\begin{Corollary}
Let $\bar{\beta}_{1,\epsilon}$ be an optimal solution of $(\emph{CP}_{1,\epsilon})$ with $\epsilon$ given by \eqref{rho}. Suppose that $X$ satisfies the $1$-\emph{REC}$(s,t,1)$.  Then, with probability at least $1-\exp(-m)$, we have that
\begin{equation*}
\|\bar{\beta}_{1,\epsilon}-\beta^*\|_2^2\leq \frac{1+\frac{s}t}{\phi_1^2(s,t,1,X)}4\epsilon^2.
\end{equation*}
\end{Corollary}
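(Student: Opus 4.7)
The plan is to obtain this corollary as a direct specialization of Theorem \ref{thm-CP} to the case $q=1$. Indeed, since the hypotheses of the corollary are precisely the hypotheses of Theorem \ref{thm-CP} with $q$ replaced by $1$ (optimality of $\bar{\beta}_{1,\epsilon}$ for the $\ell_1$ constrained problem, $\epsilon$ chosen according to \eqref{rho}, and the $1$-REC$(s,t,1)$ assumption on $X$), the conclusion follows by inspecting the exponents in the general bound \eqref{I-27}.

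More concretely, I would apply Theorem \ref{thm-CP} with $q=1$ and observe that the factor $\left(\frac{s}{t}\right)^{\frac{2}{q}-1}$ appearing in \eqref{I-27} reduces to $\left(\frac{s}{t}\right)^{2-1}=\frac{s}{t}$, and the modulus $\phi_q(s,t,1,X)$ becomes $\phi_1(s,t,1,X)$. Substituting these into \eqref{I-27} gives exactly the stated recovery bound, and the probability estimate $1-\exp(-m)$ carries over verbatim since it comes from the probability of event $\mathscr{A}$ in Lemma \ref{lem-AB-pro}, which does not depend on $q$.

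There is no genuine obstacle here: the corollary is literally a substitution, so the proof is a one-line invocation of Theorem \ref{thm-CP} together with the identity $\left(\frac{s}{t}\right)^{\frac{2}{q}-1}\big|_{q=1}=\frac{s}{t}$. The only thing worth writing down is this simplification of the exponent to make the statement transparent to the reader.
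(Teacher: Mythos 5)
Your proposal is correct and matches the paper exactly: the corollary is presented there as the $q=1$ specialization of Theorem \ref{thm-CP}, with the exponent $\left(\frac{s}{t}\right)^{\frac{2}{q}-1}$ reducing to $\frac{s}{t}$ and the probability bound inherited from event $\mathscr{A}$. Nothing further is needed.
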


The other main theorem of this section is as follows, in which we exploit the statistical properties of the $\ell_q$ regularization problem $(\text{RP}_{q,\lambda})$ under the $q$-REC. The results include the estimation of prediction loss and recovery bound of parameter approximation, and also the oracle property, which provides an upper bound on the prediction loss plus the violation of false parameter estimation.
\begin{Theorem}\label{thm-RP}
Let $\hat{\beta}_{q,\lambda}$ be an optimal solution of $(\emph{RP}_{q,\lambda})$ with $\lambda$ given by \eqref{lambda-1}. Suppose that $X$ satisfies the $q$-\emph{REC}$(s,t,a)$ and that \eqref{X-1} is satisfied.  Then, with probability at least $1-\exp(-m)-\left(n^b\sqrt{\pi\log n}\right)^{-1}$, we have that
\begin{equation}\label{I-31}
\frac{1}{m}\|X\hat{\beta}_{q,\lambda}-X\beta^*\|_2^2\leq \left(\frac{2a\lambda}{(\phi_q(s,t,a,X)/\sqrt{m})^q}\right)^{\frac{2}{2-q}}s,
\end{equation}
\begin{equation}\label{I-32}
\frac{1}{2m}\|X\hat{\beta}_{q,\lambda}-X\beta^*\|_2^2+\lambda\|(\hat{\beta}_{q,\lambda})_{ J^c}\|_q^q\leq \left(\frac{2^{\frac{q}2}a\lambda}{(\phi_q(s,t,a,X)/\sqrt{m})^q}\right)^{\frac{2}{2-q}}s,
\end{equation}
\begin{equation}\label{I-33}
\|\hat{\beta}_{q,\lambda}-\beta^*\|_2^2\leq \left(1+a^{\frac2q}\left(\frac{s}{t}\right)^{\frac2q-1}\right)\left(\frac{2a\lambda}{(\phi_q(s,t,a,X)/\sqrt{m})^2}\right)^{\frac{2}{2-q}}s.
\end{equation}
\end{Theorem}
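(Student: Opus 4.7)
The plan is to establish all three inequalities simultaneously while conditioning on the event $\mathscr{A}\cap\mathscr{B}$ from Section~2, which by Lemma~\ref{lem-AB-pro} carries probability at least $1-\exp(-m)-(n^b\sqrt{\pi\log n})^{-1}$, automatically delivering the stated high-probability guarantee. Write $\delta:=\hat{\beta}_{q,\lambda}-\beta^*$ and $J_*:=J\cup J(\delta;t)$. The overall strategy is to reduce all three bounds to a single master inequality that compares $\tfrac{1}{2m}\|X\delta\|_2^2+\lambda\|\delta_{J^c}\|_q^q$ to $a\lambda\|\delta_J\|_q^q$, and then to close the loop via the $q$-REC$(s,t,a)$ to control $\|\delta_J\|_q^q$ in terms of $\|X\delta\|_2$.

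To derive the master inequality, start from Lemma~\ref{lem-RP}. Because $\beta^*_{J^c}=0$, one has $\|\hat{\beta}_{q,\lambda}\|_q^q=\|(\beta^*+\delta)_J\|_q^q+\|\delta_{J^c}\|_q^q$, and the reverse-triangle form of \eqref{I-2} gives $\|(\beta^*+\delta)_J\|_q^q\ge\|\beta^*\|_q^q-\|\delta_J\|_q^q$. Under $\mathscr{A}$, Proposition~\ref{prop-cone2} supplies $\|\delta\|_1\le(2\rho)^{1-q}\|\delta\|_q^q$, while the definition \eqref{B-ev} of $\mathscr{B}$ forces $\frac{(2\rho)^{1-q}\|X^{\top}e\|_\infty}{m}\le\frac{a-1}{a+1}\lambda$. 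Substituting into Lemma~\ref{lem-RP} and collecting the $\|\delta_J\|_q^q$ and $\|\delta_{J^c}\|_q^q$ terms on each side produces
\begin{equation*}
\tfrac{1}{2m}\|X\delta\|_2^2+\lambda\|\delta_{J^c}\|_q^q\le\tfrac{2a}{a+1}\lambda\|\delta_J\|_q^q+\tfrac{a-1}{a+1}\lambda\|\delta_{J^c}\|_q^q.
\end{equation*}
Finally, invoking the cone condition $\|\delta_{J^c}\|_q^q\le a\|\delta_J\|_q^q$ (i.e.\ \eqref{I-18}, also from Proposition~\ref{prop-cone2}) to dominate the last term on the right collapses the right-hand side to $a\lambda\|\delta_J\|_q^q$, yielding the master inequality.

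To convert this into the stated bounds, estimate $\|\delta_J\|_q^q\le s^{1-q/2}\|\delta_J\|_2^q\le s^{1-q/2}\|\delta_{J_*}\|_2^q$ by \eqref{I-1}, and apply the $q$-REC (valid since $\delta\in C_q(s,a)$ by the cone condition) to write $\|\delta_{J_*}\|_2\le\|X\delta\|_2/\phi_q(s,t,a,X)$. Letting $T$ denote the LHS of the master inequality, these substitutions give $T\le a\lambda s^{1-q/2}(\|X\delta\|_2/\phi_q(s,t,a,X))^q$; using $\|X\delta\|_2^2\le 2mT$ to eliminate $\|X\delta\|_2$ produces $T^{1-q/2}\le a\lambda s^{1-q/2}(2m)^{q/2}/\phi_q(s,t,a,X)^q$, and raising to the power $2/(2-q)$ yields \eqref{I-32}. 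Dropping the nonnegative $\lambda\|\delta_{J^c}\|_q^q$ from $T$ and repeating the same computation gives \eqref{I-31}. For \eqref{I-33}, decompose $\|\delta\|_2^2=\|\delta_{J_*}\|_2^2+\|\delta_{J_*^c}\|_2^2$: the $q$-REC combined with \eqref{I-31} yields $\|\delta_{J_*}\|_2^2\le(2a\lambda/(\phi_q(s,t,a,X)/\sqrt{m})^2)^{2/(2-q)}s$, and \cite[Lemma~7]{Hu2017Group} applied together with the cone condition and \eqref{I-1}, exactly as in the proof of Theorem~\ref{thm-CP}, gives $\|\delta_{J_*^c}\|_2^2\le a^{2/q}(s/t)^{2/q-1}\|\delta_{J_*}\|_2^2$; summing the two contributions produces \eqref{I-33}.

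The main obstacle is the master-inequality step: the calibration of $\lambda$ in \eqref{lambda-1} and the coefficient $\frac{a-1}{a+1}$ built into \eqref{B-ev} are tuned precisely so that Lemma~\ref{lem-RP}, the $\mathscr{A}\cap\mathscr{B}$ noise estimates, and the cone condition \eqref{I-18} together collapse into the clean bound $T\le a\lambda\|\delta_J\|_q^q$. Everything afterwards is standard bookkeeping of powers of $s$, $m$, $\lambda$, and $\phi_q$, the only subtlety being that the $q$-REC must be applied to $\delta_{J_*}$ rather than $\delta_J$ so that the tail of $\delta$ outside $J_*$ can be reabsorbed in the final $\ell_2$ step via \cite[Lemma~7]{Hu2017Group}.
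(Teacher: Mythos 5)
Your proposal is correct and follows essentially the same route as the paper's own proof: both condition on $\mathscr{A}\cap\mathscr{B}$ (probability via Lemma \ref{lem-AB-pro}), combine Lemma \ref{lem-RP} with \eqref{I-21}, \eqref{B-ev}, \eqref{I-2} and the cone condition \eqref{I-18} to reach $\frac{1}{2m}\|X\delta\|_2^2+\lambda\|\delta_{J^c}\|_q^q\le a\lambda\|\delta_J\|_q^q\le a\lambda s^{1-q/2}\|X\delta\|_2^q/\phi_q^q(s,t,a,X)$, solve for the prediction loss, and then obtain \eqref{I-33} from the $J_*$/$J_*^c$ decomposition via \cite[Lemma 7]{Hu2017Group}. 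The only difference is cosmetic (you isolate an intermediate ``master inequality'' before invoking the cone condition), and your exponent bookkeeping for \eqref{I-31}--\eqref{I-33} checks out.
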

\begin{proof}
Write $\delta:=\hat{\beta}_{q,\lambda}-\beta^*$ and fix $e\in \mathscr{A}\cap \mathscr{B}$. Note by \eqref{I-21} and \eqref{B-ev} that
\begin{equation*}
\frac{1}{m}\|\delta\|_1\|X^{\top}e\|_\infty\leq \frac{a-1}{a+1}\lambda\|\delta\|_q^q.
\end{equation*}
This, together with Lemma \ref{lem-RP}, implies that
\begin{equation}\label{I-35}
\begin{aligned}
\frac{1}{2m}\|X\hat{\beta}_{q,\lambda}-X\beta^*\|_2^2
&\leq \lambda\|\beta^*\|_q^q-\lambda\|\hat{\beta}_{q,\lambda}\|_q^q+\frac{a-1}{a+1}\lambda\|\delta\|_q^q\\
&\le \lambda\|\delta_{J}\|_q^q-\lambda\|(\hat{\beta}_{q,\lambda})_{J^c}\|_q^q+\frac{a-1}{a+1}\lambda\|\delta\|_q^q
\end{aligned}
\end{equation}
(noting that $\beta^*_{J^c}=0$ and by \eqref{I-2}). Let $J_*:=J\cup J_0(\delta;t)$. One has by \eqref{I-18} and \eqref{I-1} that
\begin{equation*}
\lambda\|\delta_{J}\|_q^q+\frac{a-1}{a+1}\lambda\|\delta\|_q^q  \leq a\lambda\|\delta_{J}\|_q^q  \leq a\lambda s^{1-\frac{q}2}\|\delta_{J}\|_2^q,
\end{equation*}
and by the assumption of the $q$-REC$(s,t,a)$ that
\begin{equation*}
\|\delta_{J}\|_2\leq \|\delta_{J^*}\|_2\leq \frac{\|X\delta\|_2}{\phi_q(s,t,a,X)}.
\end{equation*}
These two inequalities, together with \eqref{I-35}, imply that
\begin{equation*}
\frac{1}{2m}\|X\hat{\beta}_{q,\lambda}-X\beta^*\|_2^2+\lambda\|(\hat{\beta}_{q,\lambda})_{J^c}\|_q^q \leq \frac{a\lambda s^{1-\frac{q}2}}{\phi_q^q(s,t,a,X)}\|X\hat{\beta}_{q,\lambda}-X\beta^*\|_2^q.
\end{equation*}
This yields that
\begin{equation}\label{eq-RB-RP-2}
\mbox{\eqref{I-31} and \eqref{I-32} hold under the event $\mathscr{A}\cap \mathscr{B}$.}
\end{equation}
Furthermore, it follows from \cite[Lemma 7]{Hu2017Group} that
\begin{equation*}
\|\delta_{J_*^c}\|_2^2\leq t^{1-\frac2q}\|\delta_{J^c}\|_q^2\leq a^{\frac2q}t^{1-\frac2q}\|\delta_{J}\|_q^2\leq a^{\frac2q}\left(\frac{s}{t}\right)^{\frac2q-1}\|\delta_{J}\|_2^2.
\end{equation*}
(by \eqref{I-18} and \eqref{I-1}). By the assumption of the $q$-REC$(s,t,a)$, one has by \eqref{I-31} that
\begin{equation*}
\|\delta_{J_*}\|_2^2\leq \frac{\|X\delta\|_2^2}{\phi_q^2(s,t,a,X)}\leq \left(\frac{2a\lambda}{(\phi_q(s,t,a,X)/\sqrt{m})^2}\right)^{\frac{2}{2-q}}s.
\end{equation*}
Hence we obtain that
\begin{equation*}
\begin{aligned}
\|\hat{\beta}_{q,\lambda}-\beta^*\|_2^2
&= \|\delta_{J_*}\|_2^2+\|\delta_{J_*^c}\|_2^2\leq \left(1+a^{\frac2q}\left(\frac{s}{t}\right)^{\frac2q-1}\right)\|\delta_{J_*}\|_2^2 \\
&\leq \left(1+a^{\frac2q}\left(\frac{s}{t}\right)^{\frac2q-1}\right)
\left(\frac{2a\lambda}{(\phi_q(s,t,a,X)/\sqrt{m})^2}\right)^{\frac{2}{2-q}}s. \end{aligned}
\end{equation*}
This shows that
\begin{equation}\label{eq-RB-RP-3}
\mbox{\eqref{I-33} holds under the event $\mathscr{A}\cap \mathscr{B}$.}
\end{equation}
By assumption \eqref{X-1}, Lemma \ref{lem-AB-pro} is applicable to concluding that
\[
\mathbb{P}(\mathscr{A}\cap \mathscr{B}) \geq 1-\exp(-m)-\left(n^b\sqrt{\pi\log n}\right)^{-1}.
\]
This, together with \eqref{eq-RB-RP-2} and \eqref{eq-RB-RP-3}, yields that \eqref{I-31}-\eqref{I-33} hold with probability at least $1-\exp(-m)
-\left(n^b\sqrt{\pi\log n}\right)^{-1}$. The proof is complete.
\end{proof}
\begin{Remark}\label{rmk-thm2}
{\rm (i)} It is worth noting that each of the estimations provided in Theorem \ref{thm-RP} (cf. \eqref{I-31}-\eqref{I-33}) involves the term $\phi_q(s,t,a,X)/\sqrt{m}$ in the denominator, which scales as a constant if $X$ has i.i.d. Gaussian entries; see Remark \ref{rmk-sqrtm}(ii).

{\rm (ii)} Theorem \ref{thm-RP} provides a unified framework of the statistical properties of the $\ell_q$ regularization problem under the weak $q$-{\rm REC} that is one of the weakest regularity conditions in the literature, in which each of the obtained estimations depends on the noise amplitude and sample size.
In particular, for the regularization parameter scaling as $\lambda\asymp \max\left(\sigma\sqrt{\frac{\log n}{m}},\ \sigma^2\right)$ (cf. \eqref{lambda-1}), Theorem \ref{thm-RP}  indicates the prediction loss and the $\ell_2$ recovery bound for $({\rm RP}_{q,\lambda})$ scale as
\[
\frac{1}{m}\|X\hat{\beta}_{q,\lambda}-X\beta^*\|_2^2=O\left(\left(\sigma^2 \frac{\log n}{m}\right)^{\frac{1}{2-q}}s\right),
\]
and
\begin{equation}\label{eq-l2bound}
\|\hat{\beta}_{q,\lambda}-\beta^*\|_2^2=O\left(\left(\sigma^2 \frac{\log n}{m}\right)^{\frac{1}{2-q}}s\right).
\end{equation}
Though the rate \eqref{eq-l2bound} in the case $q<1$ is not as good as that of \emph{Lasso}, the required regularity condition is substantially weaker. Specifically, for some applications that the $q$-\emph{REC} is satisfied but not the classical \emph{REC} (e.g., Example \ref{exam-REC} below), the recovery bound for \emph{Lasso} may violate and lead to a bad estimation while the $\ell_q$ regularization model still works and produces a comprehensive estimation.

{\rm (iii)} It was shown in \cite{Zhang2012General} that the global solution of the {\rm FCP} sparse linear regression, including the {\rm SCAD} and {\rm MCP} as special cases, has an $\ell_2$ recovery bound $O(\lambda^2 s)$ under the {\rm SEC}. Though the recovery bounds are slightly better than \eqref{eq-l2bound}, the condition required is substantially stronger than the $q$-{\rm REC}. In \cite{Zhang2012General}, the authors also established the oracle property for the $\ell_0$ regularization method under the {\rm SEC}; while its $\ell_2$ recovery bound cannot be guaranteed in their work. We shall see in section 5 that the $\ell_q$ regularization method performs better in parameter estimation than either the {\rm SCAD/MCP} or the $\ell_0$ regularization method via several numerical experiments.

{\rm (iv)} Mazumder et al. \cite{Mazumder2017The, Mazumder2017Subset} considered the following $\ell_0$ optimization problems
\begin{equation}\label{mazu1}
\min\, \|\beta\|_0,\quad {\rm s.t.}\quad \left\|\frac{1}{m}X^\top(y-X\beta)\right\|_\infty\leq \epsilon,
\end{equation}
and
\begin{equation}\label{mazu2}
\min\, \frac{1}{2m}\|y-X\beta\|_2^2+\lambda\|\beta\|_p,\quad {\rm s.t.}\quad \|\beta\|_0\leq s\ (p=1\ {\rm or}\ 2),
\end{equation}
respectively.
It was shown in \cite{Mazumder2017The} that the $\ell_2$ recovery bound for problem \eqref{mazu1} scales as $O(\epsilon^2)$ with high probability, which is similar to \eqref{eq-rb-l1}, under the \emph{SEC}-type condition.
While its assumed regularity condition is stronger than the $q$-{\rm REC}; see Proposition \ref{prop-RECnew}.
In \cite{Mazumder2017Subset}, the authors established the prediction loss for problem \eqref{mazu2}, i.e., $O(\sigma\sqrt{\log n}\|\beta^*\|_1)$ when $p=1$, and $O(\sigma\sqrt{s\log n}\|\beta^*\|_2)$ when $p=2$. However, the $\ell_2$ recovery bound was not obtained yet therein.
\end{Remark}

\begin{Remark}
Recently, some works concerned the statistical property for the local minimum of some nonconvex regularization problems; see \cite{Liu2017Folded, Loh2015Regularized}.

{\rm (i)} Loh and Wainwright \cite{Loh2015Regularized} studied the $\ell_2$ recovery bound for the local minimum of a general regularization problem:
\begin{equation}\label{JMLR-func}
\min\, \mathcal{L}_m(\beta;X)+\sum_{j=1}^n\rho_\lambda(\beta_j),
\end{equation}
where $\mathcal{L}_m:\R^n\times \R^{m\times n}\to \R$ is the loss function, and $\rho_\lambda:\R\to \R$ is the (possibly nonconvex) penalty function. In \cite{Loh2015Regularized}, the penalty function $\rho_\lambda$ is assumed to satisfy the following assumptions:

{\rm (a)} $\rho_\lambda(0)=0$ and is symmetric around zero;

{\rm (b)} $\rho_\lambda$ is nondecreasing on $\R_+$;

{\rm (c)} For $t>0$, the function $t\mapsto \frac{\rho_\lambda(t)}{t}$ is nonincreasing in $t$;

{\rm (d)} $\rho_\lambda$ is differentiable for each $t\neq 0$ and subdifferentiable at $t=0$, with $\lim\limits_{t\to 0^+}\rho_\lambda^{'}(t)=\lambda L$;

{\rm (e)} There exists $\mu>0$ such that $\rho_{\lambda,\mu}(t):=\rho_\lambda(t)+\frac{\mu}{2}t^2$ is convex.\\
Loh and Wainwright established in \cite[Theorem 1]{Loh2015Regularized} the $\ell_2$ recovery bound for the critical point satisfying the first-order necessary condition of \eqref{JMLR-func} under the restricted strong convex condition, which is a variant of the classical {\rm REC}.

The $\ell_q$ norm can be reformulated as the penalty function $\rho_\lambda(\beta_j):=\lambda|\beta_j|^q$, however, it does not satisfy assumptions {\rm (d)} or {\rm (e)}; in particular, assumption {\rm (e)} plays a key role in the establishment of oracle property and $\ell_2$ recovery bound for the local minimum. Therefore, the result in \cite{Loh2015Regularized} cannot be directly applied to the $\ell_q$ regularization problem, and the oracle property for the general local minimum of the $\ell_q$ regularization problem is still an open question at this moment.

{\rm (ii)} Liu et al. \cite{Liu2017Folded} studied the statistical property of the {\rm FCP} sparse linear regression and presented the oracle property and $\ell_2$ recovery bound for the certain local minimum, which satisfies a subspace second-order necessary condition and lies in the level set of the {\rm FCP} regularized function at the true solution, under the {\rm SEC}. Although the $\ell_q$ regularizer is beyond the {\rm FCP}, our established Theorem \ref{thm-RP} provides a theoretical result similar to \cite{Liu2017Folded} in the sense that the oracle property and $\ell_2$ recovery bound are shown for the local minimum within the level set of the $\ell_q$ regularized function at the true solution.
\end{Remark}

\begin{Example}\label{exam-REC}
Consider the linear regression problem \eqref{model}, where
\[
X:=\begin{pmatrix}
    2 & 3 & 1 \\
    2 & 1 & 3
  \end{pmatrix},\quad
\beta^*:=(1,0,0)^\top,\quad
e\sim \mathscr{N}(0,0.01).
\]
It was validated in \cite[Example 1]{Hu2017Group} that the matrix $X$ satisfies $1/2$-\emph{REC}$(1,1,1)$ but not the classical \emph{REC}$(1,1,1)$; hence the recovery bound for the $\ell_{1/2}$ regularization problem is satisfied but may not for \emph{Lasso}. To show the performance of the $\ell_{1/2}$ regularization problem and \emph{Lasso} in this case, for each regularization parameter $\lambda$ varying from $10^{-8}$ to $1$, we randomly generate the Gaussian noise 500 times and calculate the estimated errors $\|\hat{\beta}_{q,\lambda}-\beta^*\|_2^2$ for the $\ell_{1/2}$ regularization problem and \emph{Lasso}, respectively. We employ \emph{FISTA} \cite{Beck2009Fast} and the filled function method \cite{Ge90} to find the global optimal solution of \emph{Lasso} and the $\ell_{1/2}$ regularization problem, respectively. The results are illustrated in Figure \ref{f-REC}, in which the error bars represent the 95\% confidence intervals and the curves of recovery bounds stand for the terms in the right-hand side of \eqref{I-33} (cf. \cite[Example 2]{Hu2017Group}) and \eqref{eq-l2bound}, respectively. It is observed from Figure \ref{f-REC}(a) that the recovery bound \eqref{I-33} is satisfied with high probability for most of $\lambda$'s and tight when $\lambda\thickapprox \frac12$ for the $\ell_{1/2}$ regularization problem. Figure \ref{f-REC}(b) shows that the estimated error \eqref{eq-l2bound} for \emph{Lasso} is not satisfied when $\lambda$ is small because the classical \emph{REC} violates. Moreover, the solutions of \emph{Lasso} are always equal-contributed among 3 components that leads to the failure approach to a sparse solution.
%
\begin{figure}[h]
\centering
\mbox{ \subfigure[The $\ell_{1/2}$ regularization problem.]{\includegraphics[width=6.5cm]{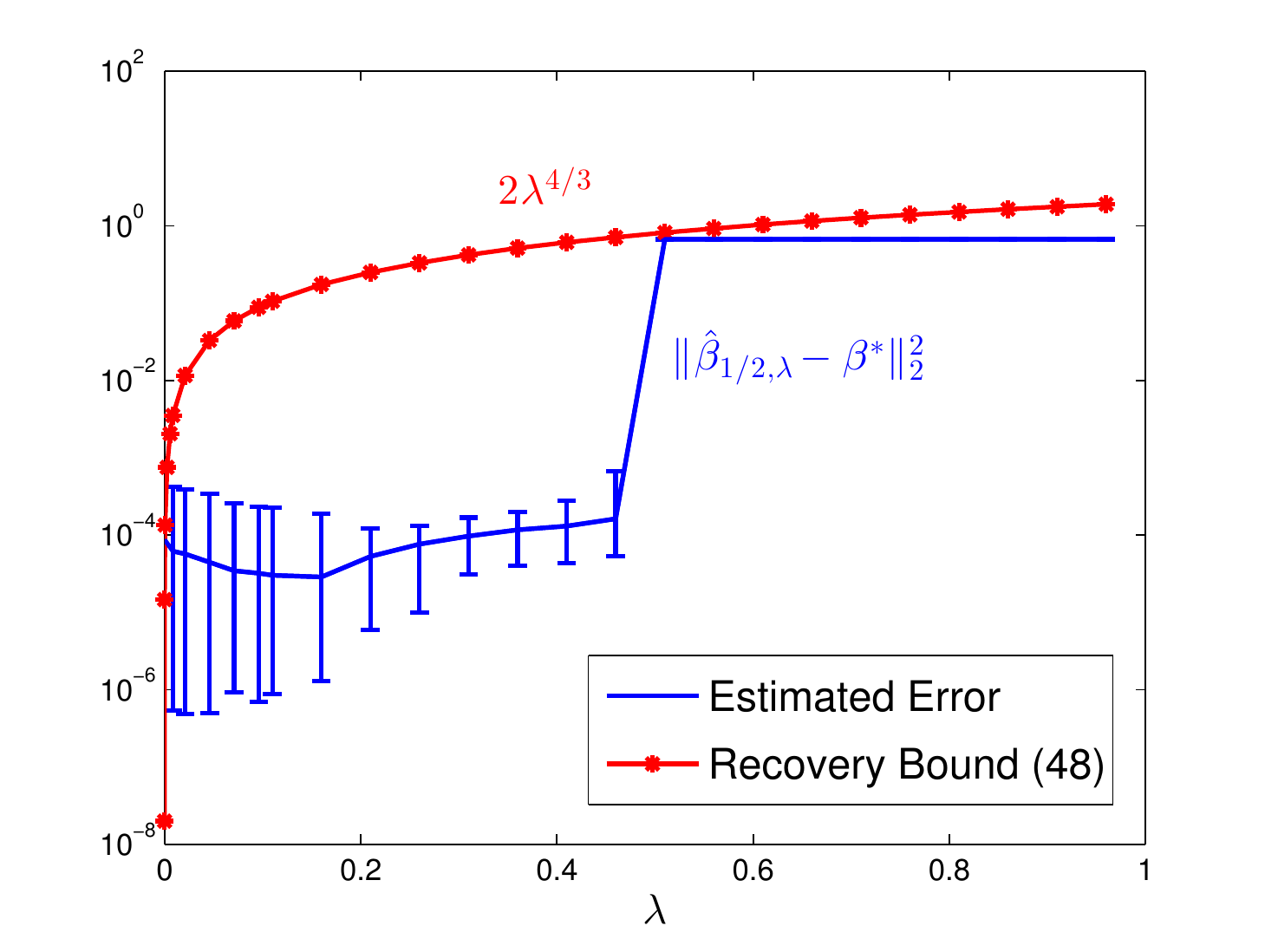}} \quad
\subfigure[Lasso.]{\includegraphics[width=6.5cm]{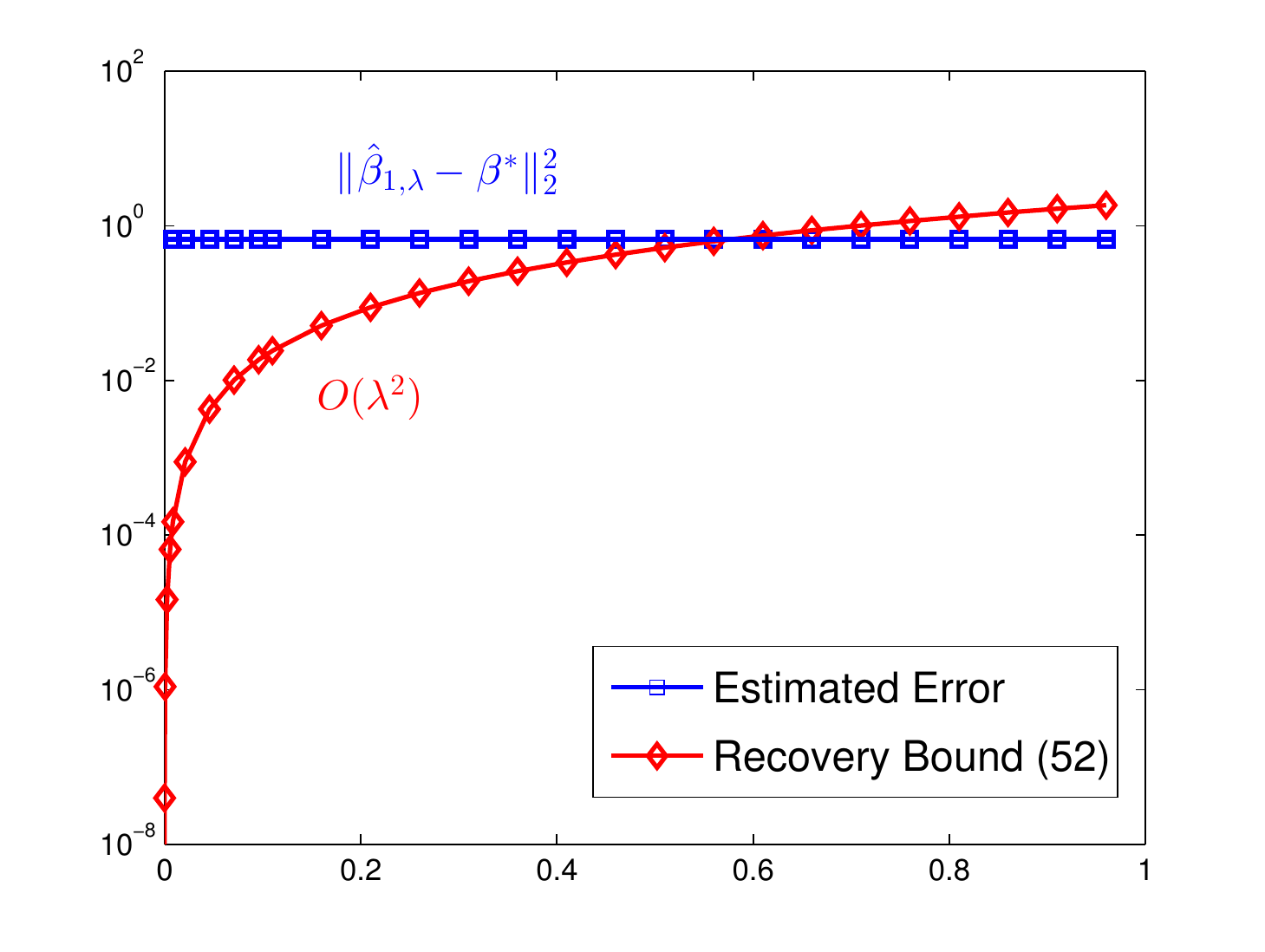}} }
\caption{The illustration of recovery bounds and estimated errors.}\label{f-REC}
\label{fig-LR}
\end{figure}
\end{Example}

As an application of Theorem \ref{thm-RP} to the case when $q=1$, the following corollary presents the statistical properties of the $\ell_1$ regularization problem under the classical REC, which covers \cite[Theorem 7.2]{Bickel2009Simultaneous} as a special case when $a=3$, $\theta=0$ and $b=0$. The same $\ell_2$ recovery bound rate $O(\sigma^2s\log n/m)$ was reported in \cite{Zhang2008The} under the sparse Riesz condition, which is comparable with the classical REC; while the same oracle inequality rate $O(\sigma^2s\log n/m)$ was established in \cite{Geer2009On} under the compatibility condition, which is slightly weaker than the classical REC but cannot guarantee the $\ell_2$ recovery bound.
\begin{Corollary}\label{corol-RP}
Let $\hat{\beta}_{1,\lambda}$ be an optimal solution of $(\emph{RP}_{1,\lambda})$ with
\[
\lambda=2\sigma(1+\theta)\sqrt{\frac{2(1+b)\log n}{m}}.
\]
Suppose that $X$ satisfies the $1$-\emph{REC}$(s,t,3)$ and that \eqref{X-1} is satisfied.
Then, with probability at least $1-\left(n^b\sqrt{\pi\log n}\right)^{-1}$, we have that
\begin{equation*}
\frac{1}{m}\|X\hat{\beta}_{1,\lambda}-X\beta^*\|_2^2\leq
\frac{288(1+b)(1+\theta)^2}{\phi_1^2(s,t,3,X)/m}\sigma^2 s\frac{\log n}{m},
\end{equation*}
\begin{equation*}
\frac{1}{2m}\|X\hat{\beta}_{1,\lambda}-X\beta^*\|_2^2+\lambda\|(\hat{\beta}_{1,\lambda})_{ J^c}\|_1\leq \frac{144(1+b)(1+\theta)^2}{\phi_1^2(s,t,3,X)/m}\sigma^2 s\frac{\log n}{m},
\end{equation*}
\begin{equation*}
\|\hat{\beta}_{1,\lambda}-\beta^*\|_2^2\leq 
\frac{288(1+b)(1+\theta)^2\left(1+9\frac{s}t\right)}{\phi_1^4(s,t,3,X)/m^2}\sigma^2 s\frac{\log n}{m}.
\end{equation*}
\end{Corollary}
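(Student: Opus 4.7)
The plan is to derive this as a corollary of Theorem \ref{thm-RP} by specializing to $q=1$ and $a=3$, exploiting simplifications that occur at $q=1$. First I would verify the choice of regularization parameter. Substituting $q=1$ into the first term of the max in \eqref{lambda-1} collapses the factors $2^{1-q}$ and $(1+r^q)^{(1-q)/q}$ to $1$, so with $a=3$ the first term becomes exactly $2\sigma(1+\theta)\sqrt{2(1+b)\log n/m}$, matching the $\lambda$ of the corollary. The second term $5\sigma^2/2$ in \eqref{lambda-1} was introduced in the proof of Lemma \ref{lem-AB-pro} only in order to absorb the factor $(2\rho)^{1-q}$ appearing in event $\mathscr{B}$ into the constant $(1+r^q)^{(1-q)/q}$; since $(2\rho)^{1-q}=1$ at $q=1$, this condition is unnecessary here, and the event $\mathscr{B}$ of \eqref{B-ev} reduces to $\{e:\|X^\top e\|_\infty/m\le \frac{a-1}{a+1}\lambda\}$, whose probability bound from Lemma \ref{lem-AB-pro} remains valid.

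Second, I would observe that the probability in the corollary is strictly better than $1-\exp(-m)-(n^b\sqrt{\pi\log n})^{-1}$ that Theorem \ref{thm-RP} gives in general: it drops the $\exp(-m)$ term. The reason is that event $\mathscr{A}$ enters the proof of Theorem \ref{thm-RP} solely through the inequality \eqref{I-21} of Proposition \ref{prop-cone2}, namely $\|\hat{\beta}_{q,\lambda}-\beta^*\|_1\le (2\rho)^{1-q}\|\hat{\beta}_{q,\lambda}-\beta^*\|_q^q$; for $q=1$ this collapses to the trivial identity $\|\delta\|_1\le\|\delta\|_1$, which holds unconditionally. Consequently, the dominant property \eqref{I-18} and the whole chain of estimates leading to \eqref{I-31}-\eqref{I-33} hold on the event $\mathscr{B}$ alone, which by Lemma \ref{lem-AB-pro} has probability at least $1-(n^b\sqrt{\pi\log n})^{-1}$.

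Finally, I would plug $q=1$, $a=3$, and $\lambda^2=8(1+b)(1+\theta)^2\sigma^2\log n/m$ into the three bounds of Theorem \ref{thm-RP}. With exponent $\tfrac{2}{2-q}=2$, the bracket in \eqref{I-31} becomes $\bigl(6\lambda/(\phi_1(s,t,3,X)/\sqrt{m})\bigr)^2 s = 36\lambda^2 s/(\phi_1^2(s,t,3,X)/m)$, and inserting the value of $\lambda^2$ yields the constant $36\cdot 8=288$. The same manipulation on \eqref{I-32}, where the prefactor is $2^{q/2}a=\sqrt{2}\cdot 3$, gives $18\lambda^2 s/(\phi_1^2/m)$ and thus constant $18\cdot 8=144$. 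For \eqref{I-33}, the extra factor $1+a^{2/q}(s/t)^{2/q-1}=1+9s/t$ combines with $36\lambda^2 s/(\phi_1^4/m^2)$ to give the stated $288(1+b)(1+\theta)^2(1+9s/t)$. None of these steps is delicate; the one conceptual point — and the step I view as the main obstacle — is justifying the improved probability by tracing carefully where event $\mathscr{A}$ is actually used and recognizing that its role disappears when $q=1$.
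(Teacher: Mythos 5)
Your proposal is correct and follows exactly the route the paper intends: the corollary is stated as a direct specialization of Theorem \ref{thm-RP} with $q=1$ and $a=3$, and your constant computations ($36\lambda^2$, $18\lambda^2$, and the factor $1+9s/t$ with $\lambda^2=8(1+b)(1+\theta)^2\sigma^2\log n/m$) all check out. Your two additional observations — that the $\tfrac{5}{2}\sigma^2$ branch of the max in \eqref{lambda-1} and the event $\mathscr{A}$ are both only needed to control the factor $(2\rho)^{1-q}$, which equals $1$ at $q=1$, so the stated $\lambda$ is admissible and the probability improves to $1-\bigl(n^b\sqrt{\pi\log n}\bigr)^{-1}$ — are precisely the justifications the paper leaves implicit and are needed for the corollary to hold as written.
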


\section{Recovery Bounds for Random Design}
\noindent In practical applications, it is a more realistic scenario that the design matrix $X$ is random.
In this section, we consider this situation and present the $\ell_2$ recovery bounds for $(\text{CP}_{q,\epsilon})$ and $(\text{RP}_{q,\lambda})$
by virtue of the results obtained in the preceding section. In particular, throughout this section, we shall assume that the linear regression model \eqref{model} involves a Gaussian noise, i.e., $e\sim \mathscr{N}(0,\sigma^2\mathbb{I}_m)$, and
\begin{equation*}
X\in \R^{m\times n}\  \text{is a Gaussian random design with i.i.d.}\  \mathscr{N}(0,\Sigma)\ \text{rows},
\end{equation*}
that is, $X_{1\cdot},\dots,X_{m\cdot}$ are i.i.d. random vectors with each $X_{i\cdot}\sim \mathscr{N}(0,\Sigma)$. Recall that $a$, $\theta$, and $b$ are given by \eqref{eq-a-b},
and let $(s,t)$ be a pair of integers satisfying \eqref{s-t}. \\
\indent To study the statistical properties of $(\text{CP}_{q,\epsilon})$ and $(\text{RP}_{q,\lambda})$ with a random design $X$, we first provide some sufficient condition for the $q$-REC of $X$ in terms of the population covariance matrix $\Sigma$. For this purpose, we use $\Sigma^{\frac{1}{2}}$ to denote the square root of $\Sigma$ and $\zeta(\Sigma):=\max_{1\leq j\leq n}\Sigma_{j,j}$ to denote the maximal variance.
Let $a>0$, and two random events related to the linear regression model \eqref{model} with $X$ being a Gaussian random design are defined as follows
\begin{equation}\label{C-ev}
\mathscr{C}_a:=\left\{\phi_q(s,t,a,X)>\frac{\sqrt{m}}{2}\phi_q(s,t,a,\Sigma^{\frac{1}{2}})\right\},
\end{equation}
and
\begin{equation}\label{D-ev}
\mathscr{D}:=\left\{\max_{1\leq j\leq n}\|X_{\cdot j}\|_2\leq (1+\theta)\sqrt{m}\right\}.
\end{equation}
\indent The following lemma is taken from \cite[Supplementary, Lemma 6]{Agarwal2012Fast}, which is useful for providing a sufficient condition for the $q$-REC of $X$.
\begin{Lemma}\label{lem-REC-randomX}
There exist universal positive constants $(c_1,c_2)$ (independent of $m,n,\Sigma$) such that it holds with probability at least $1-\exp(-c_2m)$ that, for each $\delta\in \R^n$
\begin{equation}\label{X-Sigma}
\frac{\|X\delta\|_2^2}{m}\geq \frac{1}{2}\|\Sigma^{\frac{1}{2}}\delta\|_2^2-c_1\zeta(\Sigma)\frac{\log n}{m}\|\delta\|_1^2.
\end{equation}
\end{Lemma}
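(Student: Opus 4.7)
Since this lemma is a uniform lower bound on the random quadratic form $\delta\mapsto \|X\delta\|_2^2/m$ with a data-dependent tolerance, the plan is to combine (i) pointwise Gaussian concentration of $\|X\delta\|_2^2$ around its mean $m\|\Sigma^{1/2}\delta\|_2^2$ with (ii) a discretization/Gaussian-width argument that makes the bound uniform over sparse vectors, and (iii) a peeling argument that converts the sparse bound into a bound valid for every $\delta\in\R^n$ at the cost of the $\|\delta\|_1^2$ correction term.

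First, I would establish the pointwise bound. For any fixed $\delta$, the rows $X_{i\cdot}\sim\mathscr{N}(0,\Sigma)$ being i.i.d.\ imply that $(X\delta)_i\sim\mathscr{N}(0,\|\Sigma^{1/2}\delta\|_2^2)$ independently, so $\|X\delta\|_2^2/\|\Sigma^{1/2}\delta\|_2^2$ is $\chi^2_m$. Standard chi-squared tail bounds (as in the proof of Lemma~\ref{lem-e-bound}) yield $\|X\delta\|_2^2/m\ge \tfrac34 \|\Sigma^{1/2}\delta\|_2^2$ with probability at least $1-\exp(-cm)$ for a universal $c>0$. This is the "easy" ingredient.

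Next I would upgrade this to a uniform lower bound over the set of approximately sparse vectors
\[
\Theta(k):=\left\{\delta\in\R^n:\|\Sigma^{1/2}\delta\|_2=1,\ \|\delta\|_1\le \sqrt{k}\right\},
\]
via Gordon's Gaussian comparison inequality (or equivalently a covering-net union bound). The key computation is that the Gaussian width of $\Theta(k)$ is controlled by $\sqrt{\zeta(\Sigma)k\log n}$ (the factor $\zeta(\Sigma)\log n$ coming from the expected maximum of the $n$ coordinates of a centred Gaussian with column variances at most $\zeta(\Sigma)$, combined with the $\ell_1$-radius $\sqrt{k}$). Gordon's inequality then delivers
\[
\inf_{\delta\in\Theta(k)}\frac{\|X\delta\|_2^2}{m}\ \ge\ \frac{1}{2}-c_1 \zeta(\Sigma)\frac{k\log n}{m}
\]
with probability at least $1-\exp(-c_2 m)$, for universal constants $c_1,c_2>0$.

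Finally, I would run a peeling argument over dyadic sparsity levels $k\in\{1,2,4,\dots,n\}$. Given an arbitrary $\delta$, normalize so that $\|\Sigma^{1/2}\delta\|_2=1$; the effective level is determined by the ratio $\|\delta\|_1^2/\|\Sigma^{1/2}\delta\|_2^2$, and homogeneity places each $\delta$ into some $\Theta(2^j)$. A union bound over the $O(\log n)$ levels preserves the probability $1-\exp(-c_2 m)$ (after adjusting $c_2$), and substituting back the normalization produces exactly the stated form
\[
\frac{\|X\delta\|_2^2}{m}\ge \frac12\|\Sigma^{1/2}\delta\|_2^2-c_1\zeta(\Sigma)\frac{\log n}{m}\|\delta\|_1^2.
\]

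The main obstacle I expect is the Gaussian-width/covering step: one must track carefully that the width of $\Theta(k)$ scales as $\sqrt{\zeta(\Sigma)k\log n}$ rather than, say, $\sqrt{\sigma_{\max}(\Sigma)k\log n}$, so that only the maximal diagonal $\zeta(\Sigma)$ appears in the final bound. The peeling step itself is routine once the family of sparse-set bounds is in hand, but it must be done so that the universal constants $c_1,c_2$ do not degrade with $n$ and so that the subtraction of the correction term correctly absorbs the slack at each dyadic level.
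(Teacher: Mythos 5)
The paper does not prove this lemma at all --- it simply quotes it from \cite[Supplementary, Lemma~6]{Agarwal2012Fast}, whose proof (going back to \cite{Raskutti2010Restricted}) is exactly the Gordon-comparison-plus-peeling argument you outline, including the point that the relevant Gaussian width involves only the maximal diagonal entry $\zeta(\Sigma)$ via $\mathbb{E}\|\Sigma^{1/2}g\|_\infty\lesssim\sqrt{\zeta(\Sigma)\log n}$. Your sketch is therefore correct and matches the cited source; the one place to be slightly more careful is the last step, where a naive union bound over the $O(\log n)$ dyadic levels does not by itself give failure probability $\exp(-c_2m)$ --- one should invoke the standard peeling lemma, which exploits the fact that the concentration bound improves geometrically at higher sparsity levels so that the failure probabilities sum to $O(\exp(-c_2m))$ independently of $n$.
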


The following lemma calculates the probabilities of events $\mathscr{C}_c$ and $\mathscr{D}$, which is crucial for establishing the $\ell_2$ recovery bounds of $(\text{CP}_{q,\epsilon})$ and $(\text{RP}_{q,\lambda})$ with a random design $X$. In particular, part (i) of this lemma shows that the Gaussian random design $X$ satisfies the $q$-REC with high probability as long as the sample size $m$ is sufficiently large and the square root of its population covariance matrix $\Sigma^{\frac{1}{2}}$ satisfies the $q$-REC; part (ii) of this lemma presents that each column of the Gaussian random design $X$ has an Euclidean norm scaling as $\sqrt{m}$ with an overwhelming probability.
\begin{Lemma}\label{lem-CD-pro}
{\rm (i)} Let $a>0$. Suppose that $\Sigma^{\frac{1}{2}}$ satisfies the $q$-\emph{REC}$(s,t,a)$. Then, there exist universal positive constants $(c_1,c_2)$ (independent of $m,n,q,$ $s,t,a,\Sigma$) such that, if
\begin{equation}\label{REC-sample}
m>\frac{c_1\zeta(\Sigma)}{\phi_q^2(s,t,a,\Sigma^{\frac{1}{2}})}\left(\sqrt{s+t}+a\sqrt{s}\left(\frac{as}t\right)^{\frac1q-1}\right)^2\log n,
\end{equation}
then
\begin{equation}\label{C-pro}
\mathbb{P}(\mathscr{C}_a)\geq 1-\exp(-c_2m).
\end{equation}
{\rm (ii)} Suppose that $\Sigma_{j,j}=1$ for all $j=1,\dots,n$. Then, there exist universal positive constants $(c_3,c_4)$ and $\tau\geq 1$ (independent of $m,n,\theta,\Sigma$) such that, if
\begin{equation}\label{X-theta-sample}
m>\frac{c_3\tau^4}{\theta^2}\log n,
\end{equation}
then
\begin{equation}\label{D-pro}
\mathbb{P}(\mathscr{D})\geq 1-2\exp (-c_4\theta^2m/\tau^4).
\end{equation}
\end{Lemma}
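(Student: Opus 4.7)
The two parts of Lemma~\ref{lem-CD-pro} can be treated independently. Part (i) follows by combining the restricted strong convexity inequality of Lemma~\ref{lem-REC-randomX} with the $q$-\emph{REC} of $\Sigma^{1/2}$ and a deterministic $\ell_1$-to-$\ell_2$ bound valid on the cone $C_q(s,a)$ from \eqref{eq-rec-fs}. Part (ii) is a concentration statement about the $n$ Gaussian columns of $X$, handled by a standard $\chi^2$ tail bound together with a union bound over $j=1,\dots,n$. The probabilistic content in both parts is essentially outsourced to standard results, so the real work is in Part (i), where one must turn a uniform restricted strong convexity inequality into a uniform lower bound on $\phi_q(s,t,a,X)$.

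\textbf{Plan for (i).} Fix $\delta\in C_q(s,a)$ with a witness $J$ satisfying $|J|\le s$ and $\|\delta_{J^c}\|_q^q\le a\|\delta_J\|_q^q$, and set $J_*:=J\cup J_0(\delta;t)$. The deterministic key step is to control $\|\delta\|_1$ by $\|\delta_{J_*}\|_2$. Cauchy--Schwarz gives $\|\delta_{J_*}\|_1\le \sqrt{s+t}\,\|\delta_{J_*}\|_2$, while reusing the chain already established in the proof of Proposition~\ref{prop-RECnew} (cf.\ \eqref{MIP-1}) yields $\|\delta_{J_*^c}\|_1\le a^{1/q}(s/t)^{1/q-1}\|\delta_J\|_1$, which combined with $\|\delta_J\|_1\le \sqrt{s}\,\|\delta_{J_*}\|_2$ and the identity $a^{1/q}(s/t)^{1/q-1}=a(as/t)^{1/q-1}$ gives
\[
\|\delta\|_1\le \Bigl(\sqrt{s+t}+a\sqrt{s}\,(as/t)^{1/q-1}\Bigr)\|\delta_{J_*}\|_2.
\]
Invoking Lemma~\ref{lem-REC-randomX} together with the assumed $q$-\emph{REC} of $\Sigma^{1/2}$, which provides $\|\Sigma^{1/2}\delta\|_2\ge \phi_q(s,t,a,\Sigma^{1/2})\|\delta_{J_*}\|_2$, I obtain simultaneously for every such $\delta$, on an event of probability at least $1-\exp(-c_2m)$,
\[
\frac{\|X\delta\|_2^2}{m}\ge \left(\tfrac12\phi_q^2(s,t,a,\Sigma^{1/2})-c_1\zeta(\Sigma)\tfrac{\log n}{m}\bigl(\sqrt{s+t}+a\sqrt{s}(as/t)^{1/q-1}\bigr)^2\right)\|\delta_{J_*}\|_2^2.
\]
The sample-size condition \eqref{REC-sample} (after absorbing a factor of $4$ into the universal $c_1$) forces the subtracted term to be at most $\tfrac14\phi_q^2(s,t,a,\Sigma^{1/2})\|\delta_{J_*}\|_2^2$, hence $\|X\delta\|_2\ge \tfrac{\sqrt m}{2}\phi_q(s,t,a,\Sigma^{1/2})\|\delta_{J_*}\|_2$ uniformly in $\delta\in C_q(s,a)$. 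The definition \eqref{eq-rec} of $\phi_q(s,t,a,X)$, together with $J_*=J\cup J(\delta;t)$, then gives $\mathscr{C}_a$ on this event, which is \eqref{C-pro}.

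\textbf{Plan for (ii) and main obstacle.} Since $\Sigma_{j,j}=1$ for every $j$, each column $X_{\cdot j}$ has i.i.d.\ $\mathscr{N}(0,1)$ entries (the rows are independent), so $\|X_{\cdot j}\|_2^2$ is a $\chi^2_m$ variable. Each centered summand $X_{ij}^2-1$ is sub-exponential with a universal $\psi_1$-norm bounded by some $\tau^2$, so Bernstein's inequality yields
\[
\mathbb{P}\bigl(\|X_{\cdot j}\|_2^2\ge (1+\theta)^2m\bigr)\le 2\exp(-c\theta^2m/\tau^4)
\]
for $\theta\in[0,1]$, using $(1+\theta)^2-1\ge 2\theta$ and the quadratic regime of Bernstein. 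A union bound over $j=1,\dots,n$ gives $\mathbb{P}(\mathscr{D}^c)\le 2n\exp(-c\theta^2m/\tau^4)$; the sample-size threshold \eqref{X-theta-sample} with $c_3$ chosen large enough makes $\log n$ strictly smaller than a fixed fraction of $c\theta^2m/\tau^4$, so the prefactor $n$ can be absorbed and \eqref{D-pro} follows with a relabelled $c_4$. The main technical obstacle lies in Part (i): one has to match \emph{exactly} the bracketed expression in \eqref{REC-sample} to the $\ell_1$ bound derived on $C_q(s,a)$, which hinges on the algebraic simplification $a^{1/q}(s/t)^{1/q-1}=a(as/t)^{1/q-1}$ and on the careful choice of the universal constant $c_1$; the probabilistic arguments themselves are completely standard.
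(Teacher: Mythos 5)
Your proof of part (i) follows the paper's argument essentially verbatim: the same decomposition $\|\delta\|_1=\|\delta_{J_*}\|_1+\|\delta_{J_*^c}\|_1$, the same cone bound from \eqref{MIP-1} with the identity $a^{1/q}(s/t)^{1/q-1}=a(as/t)^{1/q-1}$, the same substitution of the $q$-REC of $\Sigma^{1/2}$ into the restricted strong convexity inequality \eqref{X-Sigma}, and the same absorption of the factor $4$ into the universal constant $c_1$ (which the paper does silently). For part (ii) you diverge slightly: the paper simply invokes \cite[Theorem 1.6]{Zhou2009Restricted} to get the uniform column-norm bound, whereas you re-derive it from scratch via the sub-exponential Bernstein bound for $\chi^2_m$ plus a union bound over the $n$ columns, absorbing the prefactor $n$ using the sample-size threshold \eqref{X-theta-sample}. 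Your route is more self-contained and is essentially the internal proof of the cited theorem; the paper's route is shorter but leaves the identification of $\tau$ opaque. Both are correct, and only the upper tail is needed for $\mathscr{D}$ as you observe. The one cosmetic point to tighten is that $\mathscr{C}_a$ is defined with a strict inequality, so you should note that the strict inequality in \eqref{REC-sample} makes the subtracted term strictly less than $\tfrac14\phi_q^2(s,t,a,\Sigma^{1/2})$ uniformly in $\delta$, which carries the strictness through the infimum.
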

\begin{proof}
{\rm (i)} We first claim that
\begin{equation}\label{X-Sigma-1}
\phi_q(s,t,a,X)>\frac{\sqrt{m}}{2}\phi_q(s,t,a,\Sigma^{\frac{1}{2}}),
\end{equation}
whenever \eqref{X-Sigma} holds for each $\delta\in \R^n$. To this end, we suppose that \eqref{X-Sigma} is satisfied for each $\delta\in \R^n$.
Fix $\delta \in C_q(s,a)$, and let $J$, $r$, $J_k$ (for each $k\in \N$) and $J_*$ be defined, respectively, as in the beginning of the proof of Lemma \ref{lem-SEC}. Then \eqref{MIP-1} follows directly, and one has that
\begin{equation}\label{I-39}
\begin{aligned}
\|\delta\|_1
&=\|\delta_{J_*}\|_1+\|\delta_{J_*^c}\|_1\\
&\leq \sqrt{s+t}\|\delta_{J_*}\|_2+a\sqrt{s}\left(\frac{as}t\right)^{\frac1q-1}\|\delta_{J}\|_2\\
&\leq \left(\sqrt{s+t}+a\sqrt{s}\left(\frac{as}t\right)^{\frac1q-1}\right)\|\delta_{J_*}\|_2.
\end{aligned}
\end{equation}
By the assumption that $\Sigma^{\frac12}$ satisfies the $q$-REC$(s,t,a)$, it follows that
\begin{equation*}
\|\Sigma^{\frac{1}{2}}\delta\|_2^2\geq \phi_q^2(s,t,a,\Sigma^{\frac{1}{2}})\|\delta_{J_*}\|_2^2.
\end{equation*}
Substituting this inequality and \eqref{I-39} into \eqref{X-Sigma} yields
\begin{equation*}
\frac{\|X\delta\|_2^2}{m}\geq \left(\frac{1}{2}\phi_q^2(s,t,a,\Sigma^{\frac{1}{2}})-c_1\zeta(\Sigma)\left(\sqrt{s+t}+a\sqrt{s}\left(\frac{as}t\right)^{\frac1q-1}\right)^2\frac{\log n}{m}\right)\|\delta_{J_*}\|_2^2.
\end{equation*}
This, together with \eqref{REC-sample}, shows that
\begin{equation*}
\frac{\|X\delta\|_2^2}{m}\geq \frac{1}{4}\phi_q^2(s,t,a,\Sigma^{\frac{1}{2}})\|\delta_{J_*}\|_2^2.
\end{equation*}
Since $\delta$ and $J$ satisfying \eqref{eq-new1} are arbitrary, we derive by \eqref{eq-rec} that \eqref{X-Sigma-1} holds, as desired. Then, Lemma \ref{lem-REC-randomX} is applicable to concluding \eqref{C-pro}.\\
{\rm (ii)} Noting by the assumption that $\Sigma_{j,j}=1$ for all $j=1,\dots,n$, \cite[Theorem 1.6]{Zhou2009Restricted} is applicable to showing that there exist universal positive constants $(c_1,c_2)$ and $\tau\geq 1$ such that
\begin{equation*}
\mathbb{P}\left(\cap_{j=1}^n\left\{(1-\theta)\sqrt{m}\leq \|X_{\cdot j}\|_2\leq (1+\theta)\sqrt{m}\right\}\right)
\geq 1-2\exp (-c_2\theta^2m/\tau^4),
\end{equation*}
whenever $m$ satisfies \eqref{X-theta-sample}.
Then it immediately follows from \eqref{D-ev} that
\begin{equation*}
\begin{aligned}
\mathbb{P}(\mathscr{D})
&= \mathbb{P}(\cap_{j=1}^n\{\|X_{\cdot j}\|_2\leq (1+\theta)\sqrt{m}\})\\
&\geq 1-2\exp (-c_2\theta^2m/\tau^4),
\end{aligned}
\end{equation*}
that is, \eqref{D-pro} is proved.
\end{proof}
\begin{Remark}
{\rm (i)} As a direct application of Lemma \ref{lem-CD-pro}(i), the classical \emph{REC} is satisfied by $X$ with high probability if $\Sigma^{\frac12}$ satisfies the classical \emph{REC}(s,t,a) and \begin{equation*}
m>\frac{c_1\zeta(\Sigma)}{\phi_1^2(s,t,a,\Sigma^{\frac{1}{2}})}\left(\sqrt{s+t}+a\sqrt{s}\right)^2\log n,
\end{equation*}
which covers \cite[Corollary 1]{Raskutti2010Restricted} as a special case when $t=0$.

{\rm (ii)} Recall from Remark \ref{rmk-sqrtm} that $\phi_q(s,t,a,X)$ given by \eqref{eq-rec} usually scales as $\sqrt{m}$ independent of $s$ and $n$ for the Gaussian random design $X$. Then Lemma \ref{lem-CD-pro}(i) is applicable to indicating that $\phi_q(s,t,a,\Sigma^{\frac12})$ usually scales as a constant independent of $s$, $m$ and $n$.
\end{Remark}

Below, we consider the dominant property \eqref{dominant} in the situation when $X$ is a Gaussian random design. For the $\ell_q$ minimization problem $({\rm CP}_{q,\epsilon})$, Proposition \ref{prop-cone1} is still applicable for the case when $X$ is a Gaussian random design since it does not rely on the assumption of $X$, and thus, \eqref{I-16} holds with the same probability for the random design scenario; see Remark \ref{rem-domin}. In the following proposition, we show the dominant property \eqref{I-18} for the $\ell_q$ regularization problem $({\rm RP}_{q,\lambda})$ with a random design by virtue of Proposition \ref{prop-cone2}. Recall that $\epsilon$, $\lambda$, $\rho$ and the events $\mathscr{A}$ and $\mathscr{B}$ are given in the preceding section; see \eqref{rho}-\eqref{B-ev} for details.
\begin{Proposition}
Let $\hat{\beta}_{q,\lambda}$ be an optimal solution of $(\emph{RP}_{q,\lambda})$ with $\lambda$ given by \eqref{lambda-1}. Suppose that $\Sigma_{j,j}=1$ for all $j=1,\dots,n$. Then, there exist universal positive constants $(c_1,c_2)$ and $\tau\geq 1$ (independent of $m,n,q,a,\theta,b,\epsilon,r,\lambda,\Sigma$) such that, if
\begin{equation}\label{I-43}
m>\frac{c_1\tau^4}{\theta^2}\log n,
\end{equation}
then \eqref{I-18} holds with probability at least $(1-\left(n^b\sqrt{\pi\log n}\right)^{-1})(1-2\exp (-c_2\theta^2m/\tau^4))-\exp(-m)$.
\end{Proposition}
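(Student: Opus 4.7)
The plan is to reduce the random-design claim to the deterministic-design result of Proposition \ref{prop-cone2} by conditioning on $X$, and then to control the probability of the required column-norm event via Lemma \ref{lem-CD-pro}(ii). The key observation is that Proposition \ref{prop-cone2} already establishes \eqref{I-18} on $\mathscr{A} \cap \mathscr{B}$ for every deterministic $X$ satisfying \eqref{X-1}, and \eqref{X-1} is nothing but the defining event $\mathscr{D}$ in \eqref{D-ev}. Hence once $X$ realises in $\mathscr{D}$, the conclusion \eqref{I-18} is in force on $\mathscr{A} \cap \mathscr{B}$, and it suffices to lower bound $\mathbb{P}(\mathscr{A} \cap \mathscr{B} \cap \mathscr{D})$.

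For this I would first freeze $X$ and exploit the independence between the Gaussian design $X$ and the noise $e \sim \mathscr{N}(0,\sigma^2\mathbb{I}_m)$. On any realisation $X \in \mathscr{D}$, Lemma \ref{lem-Xe-bound} (which concerns only the randomness in $e$) applied conditionally on $X$ delivers $\mathbb{P}(\mathscr{B} \mid X) \ge 1 - (n^b\sqrt{\pi\log n})^{-1}$, exactly as in the proof of Lemma \ref{lem-AB-pro}. Integrating over $X$ via the tower property then yields
\[
\mathbb{P}(\mathscr{B} \cap \mathscr{D}) \;=\; \mathbb{E}\bigl[\mathbf{1}_{\mathscr{D}}(X)\,\mathbb{P}(\mathscr{B}\mid X)\bigr] \;\ge\; \bigl(1 - (n^b\sqrt{\pi\log n})^{-1}\bigr)\,\mathbb{P}(\mathscr{D}).
\]
Since the standing hypothesis \eqref{I-43} is precisely the sample-size condition \eqref{X-theta-sample}, Lemma \ref{lem-CD-pro}(ii) supplies $\mathbb{P}(\mathscr{D}) \ge 1 - 2\exp(-c_2\theta^2 m/\tau^4)$, with $(c_1,c_2,\tau)$ identified with the universal constants $(c_3,c_4,\tau)$ appearing there.

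Combining these with the bound $\mathbb{P}(\mathscr{A}^c) \le \exp(-m)$ from Lemma \ref{lem-AB-pro} through the elementary inequality
\[
\mathbb{P}(\mathscr{A} \cap \mathscr{B} \cap \mathscr{D}) \;\ge\; \mathbb{P}(\mathscr{B} \cap \mathscr{D}) - \mathbb{P}(\mathscr{A}^c)
\]
would produce exactly the probability quoted in the proposition, and together with the reduction in the first paragraph would complete the proof. The only genuinely delicate point will be this conditional-independence step, namely arguing cleanly that the deterministic-$X$ bound of Lemma \ref{lem-Xe-bound} transfers to a conditional bound $\mathbb{P}(\mathscr{B}\mid X)$ valid pointwise on $\mathscr{D}$ by virtue of the independence of $X$ and $e$; everything else is routine union-bound and tower-property bookkeeping, and no additional constants need to be introduced.
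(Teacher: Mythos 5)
Your proposal is correct and follows essentially the same route as the paper: reduce to Proposition \ref{prop-cone2} on the event $\mathscr{A}\cap\mathscr{B}\cap\mathscr{D}$, bound $\mathbb{P}(\mathscr{B}\cap\mathscr{D})$ via the conditional estimate $\mathbb{P}(\mathscr{B}\mid\mathscr{D})\ge 1-(n^b\sqrt{\pi\log n})^{-1}$ together with Lemma \ref{lem-CD-pro}(ii), and subtract $\mathbb{P}(\mathscr{A}^c)\le\exp(-m)$. Your explicit tower-property justification of the conditional step (freezing $X$ and using the independence of $X$ and $e$) is in fact a slightly more careful rendering of what the paper states tersely as $\mathbb{P}(\mathscr{B}\cap\mathscr{D})=\mathbb{P}(\mathscr{B}\mid\mathscr{D})\,\mathbb{P}(\mathscr{D})$.
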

\begin{proof}
By \eqref{D-ev}, one sees by
Proposition \ref{prop-cone2}
that \eqref{I-18} holds under the event $\mathscr{A}\cap \mathscr{B}\cap \mathscr{D}$. Then it remains to estimate $\mathbb{P}(\mathscr{A}\cap \mathscr{B}\cap \mathscr{D})$.
By Lemma \ref{lem-CD-pro}(ii), there exist universal positive constants $(c_1,c_2)$ and $\tau\geq 1$ such that
\begin{equation*}
\mathbb{P}(\mathscr{D})\geq 1-2\exp (-c_2\theta^2m/\tau^4),
\end{equation*}
whenever $m$ satisfies \eqref{I-43}.
From Lemma \ref{lem-AB-pro} (cf. \eqref{B-pro}), we have also by \eqref{D-ev} that
\begin{equation*}
\mathbb{P}(\mathscr{B}|\mathscr{D})\geq 1-(n^b\sqrt{\pi\log n})^{-1}.
\end{equation*}
Then, it follows that
\begin{equation*}
\begin{aligned}
\mathbb{P}(\mathscr{B}\cap \mathscr{D})\
&= \mathbb{P}(\mathscr{B}|\mathscr{D})\mathbb{P}(\mathscr{D})\\
&\geq (1-(n^b\sqrt{\pi\log n})^{-1})(1-2\exp (-c_2\theta^2m/\tau^4)),
\end{aligned}
\end{equation*}
and then by the elementary probability theory and \eqref{A-pro} that,
\begin{equation*}
\begin{aligned}
\mathbb{P}(\mathscr{A}\cap \mathscr{B}\cap \mathscr{D})
&=
\mathbb{P}(\mathscr{B}\cap \mathscr{D})-\mathbb{P}(\mathscr{B}\cap \mathscr{D}\cap \mathscr{A}^c)\\
&\geq
\mathbb{P}(\mathscr{B}\cap \mathscr{D})+\mathbb{P}(\mathscr{A})-1\\
&\geq
\left(1-\left(n^b\sqrt{\pi\log n}\right)^{-1}\right)(1-2\exp (-c_2\theta^2m/\tau^4))-\exp(-m),
\end{aligned}
\end{equation*}
whenever $m$ satisfies \eqref{I-43}.
The proof is complete.
\end{proof}
\indent Now we are ready to present the main theorems of this section, in which we establish the $\ell_2$ recovery bounds for $(\text{CP}_{q,\epsilon})$ and $(\text{RP}_{q,\lambda})$ when $X$ is a Gaussian random design.
The first theorem illustrates the stable recovery capability of the $\ell_q$ minimization problem $(\text{CP}_{q,\epsilon})$ (within a tolerance proportional to the noise) with high probability when the design matrix is random as long as the vector $\beta^*$ is sufficiently sparse and the sample size $m$ is sufficiently large.
\begin{Theorem}\label{thm-CP-randomX}
Let $\bar{\beta}_{q,\epsilon}$ be an optimal solution of $(\emph{CP}_{q,\epsilon})$ with $\epsilon$ given by \eqref{rho}. Suppose that $\Sigma^{\frac{1}{2}}$ satisfies the $q$-\emph{REC}$(s,t,1)$. Then, there exist universal positive constants $(c_1,c_2)$ (independent of $m,n,q,s,t,\epsilon,\Sigma$) such that, if \eqref{REC-sample} is satisfied,
then it holds with probability at least $(1-\exp(-m))(1-\exp(-c_2m))$ that
\begin{equation}\label{I-48}
\|\bar{\beta}_{q,\epsilon}-\beta^*\|_2^2\leq
\frac{16(1+\left(\frac{s}t\right)^{\frac2q-1})}{m\phi_q^2(s,t,1,\Sigma^{\frac{1}{2}})}\epsilon^2.
\end{equation}
\end{Theorem}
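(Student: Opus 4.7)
The proof is essentially a reduction to the deterministic-design result Theorem \ref{thm-CP}, combined with the random-design machinery developed in Lemma \ref{lem-CD-pro}. The plan is to condition on two independent events: a ``good'' event for the design $X$ (on which the random $q$-REC modulus of $X$ can be controlled by that of $\Sigma^{1/2}$), and a ``good'' event for the noise $e$ (on which $\beta^*$ is feasible for $(\text{CP}_{q,\epsilon})$). Since $X$ and $e$ are independent, the probabilities will multiply cleanly.

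First, I would apply Lemma \ref{lem-CD-pro}(i) with $a=1$ to the hypothesis that $\Sigma^{1/2}$ satisfies the $q$-REC$(s,t,1)$. This yields universal constants $(c_1,c_2)$ such that, whenever the sample size condition \eqref{REC-sample} holds, the random event $\mathscr{C}_1$ defined in \eqref{C-ev} occurs with probability at least $1-\exp(-c_2 m)$; that is,
\begin{equation*}
\phi_q(s,t,1,X) > \tfrac{\sqrt{m}}{2}\,\phi_q(s,t,1,\Sigma^{1/2}) > 0
\end{equation*}
with that probability. In particular, on $\mathscr{C}_1$ the realized design matrix $X$ satisfies the deterministic $q$-REC$(s,t,1)$, so Theorem \ref{thm-CP} becomes applicable pathwise on this event.

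Next, conditioning on a realization of $X$ lying in $\mathscr{C}_1$, I would invoke Theorem \ref{thm-CP} directly. That theorem guarantees, with probability at least $1-\exp(-m)$ over the Gaussian noise $e$ (this is precisely the probability of event $\mathscr{A}$ from \eqref{A-ev}, which depends only on $e$), that
\begin{equation*}
\|\bar{\beta}_{q,\epsilon}-\beta^*\|_2^2 \le \frac{1+(s/t)^{2/q-1}}{\phi_q^2(s,t,1,X)}\,4\epsilon^2.
\end{equation*}
Substituting the lower bound $\phi_q(s,t,1,X) > \tfrac{\sqrt{m}}{2}\phi_q(s,t,1,\Sigma^{1/2})$ available on $\mathscr{C}_1$ immediately upgrades this to
\begin{equation*}
\|\bar{\beta}_{q,\epsilon}-\beta^*\|_2^2 \le \frac{16(1+(s/t)^{2/q-1})}{m\,\phi_q^2(s,t,1,\Sigma^{1/2})}\,\epsilon^2,
\end{equation*}
which is the stated bound \eqref{I-48}.

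Finally, I would assemble the probability estimate. Because $e \sim \mathscr{N}(0,\sigma^2 \mathbb{I}_m)$ is independent of the Gaussian design $X$, the events $\mathscr{A}$ (noise) and $\mathscr{C}_1$ (design) are independent, so
\begin{equation*}
\mathbb{P}(\mathscr{A}\cap \mathscr{C}_1)=\mathbb{P}(\mathscr{A})\,\mathbb{P}(\mathscr{C}_1)\ge (1-\exp(-m))(1-\exp(-c_2 m)),
\end{equation*}
and on $\mathscr{A}\cap \mathscr{C}_1$ the estimate \eqref{I-48} holds by the previous two paragraphs. I do not anticipate a real obstacle here: the only subtle point to verify carefully is the independence of the noise event $\mathscr{A}$ from the random design event $\mathscr{C}_1$, which is immediate from the modeling assumptions, and the correct propagation of the factor $\sqrt{m}/2$ from Lemma \ref{lem-CD-pro}(i) through the squared-modulus denominator of Theorem \ref{thm-CP}, which produces exactly the constant $16$ and the $1/m$ factor appearing in \eqref{I-48}.
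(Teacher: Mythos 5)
Your proposal is correct and follows essentially the same route as the paper: invoke Lemma \ref{lem-CD-pro}(i) with $a=1$ to get the design event $\mathscr{C}_1$ with probability at least $1-\exp(-c_2m)$, apply Theorem \ref{thm-CP} conditionally on $\mathscr{C}_1$, and substitute $\phi_q(s,t,1,X)>\tfrac{\sqrt m}{2}\phi_q(s,t,1,\Sigma^{1/2})$ to obtain the constant $16$ and the $1/m$ factor. The paper phrases the final step via the conditional probability $\mathbb{P}(\mathscr{E}_1\mid\mathscr{C}_1)\ge 1-\exp(-m)$ rather than explicitly citing independence of the noise and design events, but this is the same argument.
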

\begin{proof}
To simplify the proof, corresponding to inequalities \eqref{I-27} and \eqref{I-48}, we define the following two events
\begin{equation*}
\begin{aligned}
\mathscr{E}_1&:=\left\{\|\bar{\beta}_{q,\epsilon}-\beta^*\|_2^2\leq 
\frac{4(1+\left(\frac{s}t\right)^{\frac2q-1})}{\phi_q^2(s,t,1,X)}\epsilon^2\right\},\\
\mathscr{E}_2&:=\left\{\|\bar{\beta}_{q,\epsilon}-\beta^*\|_2^2\leq
\frac{16(1+\left(\frac{s}t\right)^{\frac2q-1})}{m\phi_q^2(s,t,1,\Sigma^{\frac{1}{2}})}\epsilon^2\right\}.
\end{aligned}
\end{equation*}
Then, by the definition of $\mathscr{C}_1$ \eqref{C-ev}, we have that $\mathscr{C}_1\cap \mathscr{E}_1\subseteq \mathscr{E}_2$ and thus
\begin{equation}\label{neq1-thm3}
\mathbb{P}(\mathscr{E}_2)\geq \mathbb{P}(\mathscr{E}_1\cap \mathscr{C}_1)=\mathbb{P}(\mathscr{E}_1|\mathscr{C}_1)\mathbb{P}(\mathscr{C}_1).
\end{equation}
Note by Theorem \ref{thm-CP} that
\begin{equation}\label{neq2-thm3}
\mathbb{P}(\mathscr{E}_1|\mathscr{C}_1)\geq 1-\exp(-m).
\end{equation}
By Lemma \ref{lem-CD-pro}(i) (with $a=1$), there exist universal positive constants $(c_1,c_2)$ such that \eqref{REC-sample} ensures
\eqref{C-pro}.
Then we obtain by \eqref{neq1-thm3} and \eqref{neq2-thm3} that
\begin{equation*}
\mathbb{P}(\mathscr{E}_2)\geq (1-\exp(-m))(1-\exp(-c_2m)),
\end{equation*}
whenever $m$ satisfies \eqref{REC-sample}.
The proof is complete.
\end{proof}
\indent As a direct application of Theorem \ref{thm-CP-randomX} to the special case when $q=1$, the following corollary presents the $\ell_2$ recovery bound of the $\ell_1$ minimization problem $(\text{CP}_{1,\epsilon})$ with a Gaussian random design as
\begin{equation*}
\|\bar{\beta}_{1,\epsilon}-\beta^*\|_2=O(\epsilon)
\end{equation*}
under the classical REC.
\begin{Corollary}\label{corol-CP-randomX}
Let $\bar{\beta}_{1,\epsilon}$ be an optimal solution of $(\emph{CP}_{1,\epsilon})$ with $\epsilon$ given by \eqref{rho}. Suppose that $\Sigma^{\frac{1}{2}}$ satisfies the $1$-\emph{REC}$(s,t,1)$.
Then, there exist universal positive constants $(c_1,c_2)$ (independent of $m,n,q,s,t,\epsilon,\Sigma$) such that, if
\begin{equation*}
m>\frac{c_1\zeta(\Sigma)}{\phi_1^2(s,t,1,\Sigma^{\frac{1}{2}})}(\sqrt{s+t}+\sqrt{s})^2\log n,
\end{equation*}
then it holds with probability at least $(1-\exp(-m))(1-\exp(-c_2m))$ that
\begin{equation*}
\|\bar{\beta}_{1,\epsilon}-\beta^*\|_2^2\leq
\frac{16(1+\frac{s}t)}{m\phi_1^2(s,t,1,\Sigma^{\frac{1}{2}})}\epsilon^2.
\end{equation*}
\end{Corollary}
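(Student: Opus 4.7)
The plan is to obtain Corollary \ref{corol-CP-randomX} as a direct specialization of Theorem \ref{thm-CP-randomX} to the case $q=1$, so essentially no new work is required beyond checking that the generic exponents simplify correctly.

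First I would set $q=1$ in the hypotheses of Theorem \ref{thm-CP-randomX}. The assumption that $\Sigma^{1/2}$ satisfies the $1$-\emph{REC}$(s,t,1)$ is exactly the hypothesis of the corollary, and the parameters $\epsilon$ and $(s,t)$ carry over unchanged. Next I would simplify the sample-size condition \eqref{REC-sample} under $q=1$ and $a=1$: the factor $(as/t)^{1/q-1}$ becomes $(s/t)^0 = 1$, and hence the quantity $\sqrt{s+t}+a\sqrt{s}(as/t)^{1/q-1}$ collapses to $\sqrt{s+t}+\sqrt{s}$, matching the corollary's hypothesis on $m$.

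Then I would specialize the recovery bound \eqref{I-48}. With $q=1$, the exponent $2/q-1$ equals $1$, so $(s/t)^{2/q-1} = s/t$, giving the bound $\|\bar\beta_{1,\epsilon}-\beta^*\|_2^2 \le \frac{16(1+s/t)}{m\,\phi_1^2(s,t,1,\Sigma^{1/2})}\epsilon^2$, which is exactly \eqref{eq-l2bound}-style statement in the corollary. The probability estimate $(1-\exp(-m))(1-\exp(-c_2 m))$ is unchanged, since the constants $(c_1,c_2)$ furnished by Theorem \ref{thm-CP-randomX} are independent of $q$.

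There is essentially no obstacle here: the corollary is a direct substitution $q=1$, $a=1$ into Theorem \ref{thm-CP-randomX}. The only small thing to verify is that the constants $(c_1,c_2)$ produced in the random-design analysis (ultimately coming from Lemma \ref{lem-REC-randomX}) do not depend on $q$, which is explicitly stated in Lemma \ref{lem-CD-pro}(i) and therefore inherited by Theorem \ref{thm-CP-randomX}. Thus the proof is a one-line invocation of Theorem \ref{thm-CP-randomX} followed by algebraic simplification of the two exponents involving $q$.
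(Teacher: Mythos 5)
Your proposal is correct and matches the paper exactly: the corollary is stated there as a direct application of Theorem \ref{thm-CP-randomX} with $q=1$ (and $a=1$), and your simplifications of the exponents $(as/t)^{1/q-1}=1$ and $(s/t)^{2/q-1}=s/t$ are the only computations needed. The observation that the universal constants from Lemma \ref{lem-CD-pro}(i) are independent of $q$ is also accurate, so nothing is missing.
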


The other main theorem of this section is as follows, in which we exploit the estimation of prediction loss, the oracle property and the $\ell_2$ recovery bound of parameter approximation of the $\ell_q$ regularization problem $(\text{RP}_{q,\lambda})$ with a Gaussian random design under the $q$-REC of the square root of its population covariance matrix.
\begin{Theorem}\label{thm-RP-randomX}
Let $\hat{\beta}_{q,\lambda}$ be an optimal solution of $(\emph{RP}_{q,\lambda})$ with $\lambda$ given by \eqref{lambda-1}. Suppose that $\Sigma_{j,j}=1$ for all $j=1,\dots,n$ and $\Sigma^{\frac{1}{2}}$ satisfies the $q$-\emph{REC}$(s,t,a)$.  Then, there exist universal positive constants $(c_1,c_2,c_3,c_4)$ and $\tau\geq 1$ (independent of $m,n,q,s,t,a,\theta,b,\epsilon,r,\lambda,\Sigma$) such that, if
\begin{equation}\label{I-51}
m>\max\left\{ \frac{c_1(\sqrt{s+t}+a^{\frac1q}\sqrt{s}\left(\frac{s}t\right)^{\frac1q-1})^2}{\phi_q^2(s,t,a,\Sigma^{\frac{1}{2}})}\log n,\ \frac{c_3\tau^4}{\theta^2}\log n\right\},
\end{equation}
then it holds with probability at least $$\left(1-\exp(-m)-\left(n^b\sqrt{\pi\log n}\right)^{-1}\right)(1-\exp(-c_2m)-2\exp (-c_4\theta^2m/\tau^4))$$ that
\begin{equation}\label{I-52}
\frac{1}{m}\|X\hat{\beta}_{q,\lambda}-X\beta^*\|_2^2\leq \left(\frac{2^{q+1}a\lambda}{\phi_q^q(s,t,a,\Sigma^{\frac12})}\right)^{\frac{2}{2-q}}s,
\end{equation}
\begin{equation}\label{I-53}
\frac{1}{2m}\|X\hat{\beta}_{q,\lambda}-X\beta^*\|_2^2+\lambda\|(\hat{\beta}_{q,\lambda})_{ J^c}\|_q^q\leq \left(\frac{8^{\frac{q}2}a\lambda}{\phi_q^q(s,t,a,\Sigma^{\frac{1}{2}})}\right)^{\frac{2}{2-q}}s,
\end{equation}
\begin{equation}\label{I-54}
\|\hat{\beta}_{q,\lambda}-\beta^*\|_2^2\leq \left(1+a^{\frac2q}\left(\frac{s}{t}\right)^{\frac2q-1}\right)\left(\frac{8a\lambda}{\phi_q^2(s,t,a,\Sigma^{\frac{1}{2}})}\right)^{\frac{2}{2-q}}s.
\end{equation}
\end{Theorem}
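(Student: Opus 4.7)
\textbf{Proof plan for Theorem \ref{thm-RP-randomX}.} The natural strategy is to combine the deterministic-design result in Theorem \ref{thm-RP} with the random-design regularity control furnished by Lemma \ref{lem-CD-pro}, exactly as was done for the minimization problem in Theorem \ref{thm-CP-randomX}. First I would recall the two ``good'' design events $\mathscr{C}_a$ and $\mathscr{D}$ from \eqref{C-ev} and \eqref{D-ev}: on $\mathscr{D}$ the column-norm bound \eqref{X-1} holds, and on $\mathscr{C}_a$ we have $\phi_q(s,t,a,X)>\tfrac{\sqrt m}{2}\phi_q(s,t,a,\Sigma^{1/2})$. Both events depend only on $X$, while $e$ is independent of $X$, which is the key structural fact that will allow probabilities to multiply.

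Next I would argue conditionally on a realization of $X$ lying in $\mathscr{C}_a\cap\mathscr{D}$. For such $X$, the hypotheses of Theorem \ref{thm-RP} (the $q$-REC$(s,t,a)$ of $X$ and the column-norm condition \eqref{X-1}) are satisfied, so with probability at least $1-\exp(-m)-(n^b\sqrt{\pi\log n})^{-1}$ over $e$ the three deterministic bounds \eqref{I-31}--\eqref{I-33} hold. On $\mathscr{C}_a$ the term $\phi_q(s,t,a,X)/\sqrt m$ in the denominators of \eqref{I-31}--\eqref{I-33} is bounded below by $\phi_q(s,t,a,\Sigma^{1/2})/2$, so routine substitution yields
\[
\frac{2a\lambda}{(\phi_q(s,t,a,X)/\sqrt m)^q}\le \frac{2^{q+1}a\lambda}{\phi_q^q(s,t,a,\Sigma^{1/2})},
\]
and analogously for the other two bounds, giving exactly \eqref{I-52}--\eqref{I-54}.

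The remaining step is to estimate $\mathbb{P}(\mathscr{C}_a\cap\mathscr{D})$. Lemma \ref{lem-CD-pro}(i), applied under the first sample-size condition in \eqref{I-51}, yields universal constants $(c_1,c_2)$ for which $\mathbb{P}(\mathscr{C}_a)\ge 1-\exp(-c_2m)$; Lemma \ref{lem-CD-pro}(ii), applied under the second sample-size condition in \eqref{I-51}, yields $(c_3,c_4,\tau)$ with $\mathbb{P}(\mathscr{D})\ge 1-2\exp(-c_4\theta^2m/\tau^4)$. Since both events depend only on $X$, a simple union bound gives
\[
\mathbb{P}(\mathscr{C}_a\cap\mathscr{D})\ge 1-\exp(-c_2m)-2\exp(-c_4\theta^2m/\tau^4).
\]
Combining this with the conditional bound via $\mathbb{P}(\text{bounds})\ge \mathbb{E}[\mathbb{P}(\text{bounds}\mid X)\,\mathbf{1}_{\mathscr{C}_a\cap\mathscr{D}}]$ and the $X\perp e$ independence produces the probability factor claimed in the statement.

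The only mildly delicate point is the bookkeeping: one must keep the $X$-events (that control $\phi_q$ and column norms) and the $e$-events (that yielded Theorem \ref{thm-RP}'s probability $1-\exp(-m)-(n^b\sqrt{\pi\log n})^{-1}$) cleanly separated, and verify that the replacement of $\phi_q(s,t,a,X)/\sqrt m$ by $\phi_q(s,t,a,\Sigma^{1/2})/2$ produces exactly the numerical constants $2^{q+1}$, $8^{q/2}$, and $8$ appearing in \eqref{I-52}--\eqref{I-54}. I expect no genuine obstacle, since the scheme is essentially a template application of Theorem \ref{thm-RP} after the probabilistic regularity supplied by Lemma \ref{lem-CD-pro} is in hand; the main care needed is bookkeeping the constants and ensuring the constants $(c_1,c_2,c_3,c_4,\tau)$ in the theorem statement are identified with those produced by the two parts of Lemma \ref{lem-CD-pro}.
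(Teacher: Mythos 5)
Your proposal is correct and follows essentially the same route as the paper's proof: condition on the design events $\mathscr{C}_a\cap\mathscr{D}$, invoke Theorem \ref{thm-RP} on that event, replace $\phi_q(s,t,a,X)/\sqrt m$ by $\phi_q(s,t,a,\Sigma^{1/2})/2$ to obtain the constants $2^{q+1}$, $8^{q/2}$, $8$, and multiply the conditional probability from Theorem \ref{thm-RP} by the union-bound estimate of $\mathbb{P}(\mathscr{C}_a\cap\mathscr{D})$ from Lemma \ref{lem-CD-pro}. The constant bookkeeping you describe checks out exactly.
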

\begin{proof}
To simplify the proof,
we define the following six events
\begin{equation*}
\begin{aligned}
\mathscr{F}_1&=\left\{\eqref{I-31}\ \text{happens}\right\},\quad
\mathscr{F}_2=\left\{\eqref{I-32}\ \text{happens}\right\}, \quad
\mathscr{F}_3=\left\{\eqref{I-33}\ \text{happens}\right\},\\
\mathscr{G}_1&=\left\{\eqref{I-52}\ \text{happens}\right\},\quad
\mathscr{G}_2=\left\{\eqref{I-53}\ \text{happens}\right\},\quad
\mathscr{G}_3=\left\{\eqref{I-54}\ \text{happens}\right\}.
\end{aligned}
\end{equation*}
Fix $i\in \{1,2,3\}$. Then, we have by \eqref{C-ev} that $\mathscr{C}_{a}\cap \mathscr{F}_i\subseteq \mathscr{G}_{i}$
and thus
\begin{equation}\label{neq1-thm4}
\mathbb{P}(\mathscr{G}_{i})\geq \mathbb{P}(\mathscr{C}_{a}\cap \mathscr{F}_i).
\end{equation}
By Lemma \ref{lem-CD-pro}, there exist universal positive constants $(c_1,c_2,c_3,c_4)$ and $\tau\geq 1$ such that, \eqref{I-51} ensures \eqref{C-pro} and \eqref{D-pro}.
Then it follows from \eqref{C-pro} and \eqref{D-pro} that
\begin{equation}\label{Th2CD-pro}
\mathbb{P}(\mathscr{C}_{a}\cap \mathscr{D})\geq 
\mathbb{P}(\mathscr{C}_{a})+P(\mathscr{D})-1\geq 1-\exp(-c_2m)-2\exp(-c_4\theta^2m/\tau^4),
\end{equation}
whenever $m$ satisfies \eqref{I-51}.
Recall from Theorem \ref{thm-RP} that
\begin{equation*}
\mathbb{P}(\mathscr{F}_i|\mathscr{C}_{a}\cap \mathscr{D})\geq 1-\exp(-m)-\left(n^b\sqrt{\pi\log n}\right)^{-1}.
\end{equation*}
This, together with \eqref{Th2CD-pro}, implies that
\begin{equation*}
\begin{aligned}
\mathbb{P}(\mathscr{C}_{a}\cap \mathscr{F}_i)
&\geq \mathbb{P}(\mathscr{F}_i|\mathscr{C}_{a}\cap \mathscr{D})\,\mathbb{P}(\mathscr{C}_{a}\cap \mathscr{D})\\
&\geq \left(1-\exp(-m)-\left(n^b\sqrt{\pi\log n}\right)^{-1}\right)(1-\exp(-c_2m)-2\exp (-c_4\theta^2m/\tau^4)).
\end{aligned}
\end{equation*}
Then, one has by \eqref{neq1-thm4} that
\begin{equation*}
\mathbb{P}(\mathscr{G}_i)\geq
\left(1-\exp(-m)-\left(n^b\sqrt{\pi\log n}\right)^{-1}\right)(1-\exp(-c_2m)-2\exp(-c_4\theta^2m/\tau^4)),
\end{equation*}
whenever $m$ satisfies \eqref{I-51}.
The proof is complete.
\end{proof}
\indent As an application of Theorem \ref{thm-RP-randomX} to the special case when $q=1$ and $a=3$, the following corollary presents the statistical properties of the $\ell_1$ regularization problem with a Gaussian random design under the classical REC. A similar $\ell_2$ recovery bound was shown in \cite[Theorem 3.1]{Zhou2009Restricted} by using a different analytic technique.
\begin{Corollary}\label{corol-RP-randomX}
Let $\hat{\beta}_{1,\lambda}$ be an optimal solution of $(\emph{RP}_{1,\lambda})$ with
\[
\lambda=2\sigma(1+\theta) \sqrt{\frac{2(1+b)\log n}{m}}.
\]
Suppose that $\Sigma_{j,j}=1$ for all $j=1,\dots,n$ and $\Sigma^{\frac{1}{2}}$ satisfies the $1$-\emph{REC}$(s,t,3)$.
Then, there exist universal positive constants $(c_1,c_2,c_3,c_4)$ and $\tau\geq 1$ (independent of $m,n,s,t,\theta,b,\Sigma$) such that, if 
\begin{equation*}
m>\max\left\{ \frac{c_1(\sqrt{s+t}+3\sqrt{s})^2}{\phi_1^2(s,t,3,\Sigma^{\frac{1}{2}})}\log n,\ \frac{c_3\tau^4}{\theta^2}\log n\right\},
\end{equation*}
then it holds with probability at least $$(1-\exp(-m)-\left(n^b\sqrt{\pi\log n}\right)^{-1})(1-\exp(-c_2m)-2\exp (-c_4\theta^2m/\tau^4))$$ that
\begin{equation*}
\frac{1}{m}\|X\hat{\beta}_{1,\lambda}-X\beta^*\|_2^2\leq
\frac{1152(1+b)(1+\theta)^2}{\phi_1^2(s,t,3,\Sigma^{\frac{1}{2}})}\frac{s\log n}{m}\sigma^2,
\end{equation*}
\begin{equation*}
\frac{1}{2m}\|X\hat{\beta}_{1,\lambda}-X\beta^*\|_2^2+\lambda\|(\hat{\beta}_{1,\lambda})_{ J^c}\|_1\leq \frac{576(1+b)(1+\theta)^2}{\phi_1^2(s,t,3,\Sigma^{\frac{1}{2}})}\frac{s\log n}{m}\sigma^2,
\end{equation*}
\begin{equation*}
\|\hat{\beta}_{1,\lambda}-\beta^*\|_2^2\leq 
\frac{4608(1+b)(1+\theta)^2(1+9\frac{s}t)}{\phi_1^4(s,t,3,\Sigma^{\frac{1}{2}})}\frac{s\log n}{m}\sigma^2.
\end{equation*}
\end{Corollary}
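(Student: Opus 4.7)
The plan is to obtain this corollary as a direct specialization of Theorem \ref{thm-RP-randomX} by setting $q=1$ and $a=3$. The first step is a consistency check for the regularization parameter: when $q=1$, the factor $(1+r^q)^{(1-q)/q}$ in \eqref{lambda-1} collapses to $1$, and $(a+1)/(a-1)=2$ when $a=3$, so the first term in the max of \eqref{lambda-1} becomes $2\sigma(1+\theta)\sqrt{2(1+b)\log n/m}$, which is exactly the $\lambda$ stated in the corollary. (Theorem \ref{thm-RP-randomX} only requires $\lambda$ to dominate both expressions in \eqref{lambda-1}, and in the noise-aware high-dimensional regime the logarithmic term dominates, so the stated choice is admissible.)

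Second, I would verify the hypotheses of Theorem \ref{thm-RP-randomX} are met: the assumption $\Sigma_{j,j}=1$ is imposed directly, and the $1$-\emph{REC}$(s,t,3)$ on $\Sigma^{1/2}$ specializes the general $q$-\emph{REC}$(s,t,a)$ to $q=1$, $a=3$. With $q=1$ and $a=3$, condition \eqref{I-51} reduces to
\[
m > \max\left\{\frac{c_1(\sqrt{s+t}+3\sqrt{s})^2}{\phi_1^2(s,t,3,\Sigma^{1/2})}\log n,\ \frac{c_3\tau^4}{\theta^2}\log n\right\},
\]
since $(as/t)^{1/q-1}=1$; this is precisely the sample size condition in the corollary.

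Third, I would substitute $q=1$, $a=3$ into the three bounds \eqref{I-52}--\eqref{I-54} and compute the resulting constants using $\lambda^2 = 8(1+b)(1+\theta)^2 \sigma^2 \log n/m$. For \eqref{I-52} one gets $(2^{q+1}a/\phi_1)^{2/(2-q)} s = (12\lambda/\phi_1)^2 s$, and $144\cdot 8 = 1152$, matching the first claim. For \eqref{I-53} one gets $(8^{q/2}a\lambda/\phi_1^q)^{2/(2-q)} s = (\sqrt{8}\cdot 3 \lambda/\phi_1)^2 s$, and $72\cdot 8 = 576$, matching the second. For \eqref{I-54}, the prefactor becomes $1+9(s/t)^{2/q-1}=1+9s/t$ and the bracket becomes $(24\lambda/\phi_1^2)^2 s$ with $576\cdot 8 = 4608$, matching the third. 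The probability statement is inherited verbatim from Theorem \ref{thm-RP-randomX}.

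No genuinely difficult step arises, since the corollary is an instantiation rather than an extension; the only nontrivial bit is the numerical bookkeeping of constants (in particular, making sure the prefactor $2^{q+1}$ versus $8^{q/2}$ versus $8$ between \eqref{I-52}, \eqref{I-53}, \eqref{I-54} is evaluated correctly at $q=1$). If anything is a potential pitfall, it is ensuring that the $(1+r^q)^{(1-q)/q}$ term in \eqref{lambda-1} is interpreted as $1$ at $q=1$ (rather than needing a separate bound on $r$), which is why the statement of the corollary does not involve $r$ at all.
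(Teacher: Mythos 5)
Your proposal is correct and follows exactly the route the paper intends: the corollary is stated as a direct instantiation of Theorem \ref{thm-RP-randomX} with $q=1$, $a=3$, and your constant bookkeeping ($144\cdot 8=1152$, $72\cdot 8=576$, $576\cdot 8=4608$ via $\lambda^2=8(1+b)(1+\theta)^2\sigma^2\log n/m$) checks out, as does the reduction of the sample-size condition \eqref{I-51}. Your remark on dropping the $\tfrac52\sigma^2$ branch of \eqref{lambda-1} at $q=1$ matches how the paper itself handles the analogous Corollary \ref{corol-RP}.
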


\section{Numerical Experiments}
\noindent The purpose of this section is to carry out the numerical experiments to illustrate the stability of the $\ell_q$ optimization methods, verify the established theory of the $\ell_2$ recovery bounds in the preceding sections and compare the numerical performance of the $\ell_q$ regularization methods with another two widely used nonconvex regularization methods, namely the SCAD and MCP. In particular, we are concerned with the cases when $q=0$, $1/2$, $2/3$ and $1$. To solve the $\ell_q$ minimization problems, we will apply the iterative reweighted algorithm \cite{Candes2008Enhancing, Chartrand2008Iteratively}. To solve the $\ell_q$ regularization problems, we will apply the iterative hard thresholding algorithm \cite{Blumensath2008Iterative} for $q=0$, the proximal gradient algorithm \cite{Hu2017Group} for $q=1/2$ and $2/3$, and FISTA \cite{Beck2009Fast} for $q=1$, respectively. The proximal gradient algorithm proposed in \cite{Loh2015Regularized} will be used to solve the SCAD and MCP. All numerical experiments are performed in MATLAB R2014b and executed on a personal desktop (Intel Core i7-4790, 3.60 GHz, 8.00 GB of RAM).\\
\indent The simulated data are generated via a standard process; see, e.g., \cite{Agarwal2012Fast, Hu2017Group}. Specifically, we randomly generate an i.i.d. Gaussian ensemble $X\in \R^{m\times n}$ and a sparse vector $\beta^*\in \R^n$ with the sparsity being equal to $s$.
The observation $y$ is then generated by the MATLAB script
\begin{equation*}
y=X*\beta^*+sigma*randn(m,1),
\end{equation*}
where $sigma$ is the noise level, that is the standard deviation of Gaussian noise. In the numerical experiments, the dimension of variables and the noise level are set as $n=1024$ and $sigma=0.01$, respectively.\\
\indent For each sparsity level, which is $s/n$, we randomly generate the data $X$, $\beta^*$, $y$ for 100 times and run the algorithms mentioned above to solve the $\ell_q$ optimization problems for $q=0$, $1/2$, $2/3$ and 1 as well as the SCAD and MCP. The parameter $\epsilon$ in the $\ell_q$ minimization methods $\text{(CP)}_{q,\epsilon}$ is set as $\epsilon=sigma*\sqrt{m+2\sqrt{2m}}$ in order to guarantee that $\|e\|_2^2$ is no more than $\epsilon^2$ with overwhelming probability \cite{Candes2006Stable, Candes2008Enhancing}.  The parameter $\lambda$ in the $\ell_q$ regularization methods $(\text{RP}_{q,\lambda})$ is chosen by 10-fold cross validation. To simplify the notations, the solution of different problems will all be denoted as $\hat{\beta}$. In order to reveal the dependence of $\ell_2$ recovery bounds on sample size and inspired by the established theorems in the preceding sections (e.g., \eqref{I-51}), we report the numerical results for a range of sample sizes of the form $m=\Omega(s\log n)$.

The experiment is implemented to study the performance on variable selection of the $\ell_q$ regularization methods as well as the SCAD and MCP. We use following two criteria to characterize the capability of variable selection:
\[
\text{sensitivity}=\frac{\text{true positive}}{\text{true positive+false negative}}
\quad \mbox{and}\quad
\text{specificity}=\frac{\text{true negative}}{\text{true negative+false positive}},
\]
which respectively measures the proportion of positives and negatives that are correctly identified. The larger values of both sensitivity and specificity mean the higher capability of variable selection. The results are illustrated by averaging over the 100 random trials. Tables \ref{t-sen} and \ref{t-spe} respectively chart the sensitivity and specificity of these methods at a sparsity level 10\%. It is illustrated that the sensitivity and specificity of all these methods increase as the sample size grows, except for the specificity of Lasso, which is resulted from the fact that there are many small nonzero coefficients estimated by Lasso. We also note that the lower-order regularization method (e.g., when $q=1/2,2/3$) outperforms the other regularization methods in the sense that it can almost completely select the true model when the size of samples is getting large.
\begin{table}[htbp]
\centering
\caption{Sensitivity of different regularization methods.}
\begin{tabular}{|c|c|c|c|c|c|c|}
\hline
\multirow{2}{*}{Method} & \multicolumn{6}{c|}{Sample size}                    \\ \cline{2-7}
                        & 177    & 355    & 532    & 710    & 887    & 976    \\ \hline
q=0                     & 0.3029 & 0.8931 & 0.9824 & 0.9873 & 0.9902 & 0.9912 \\ \hline
q=1/2                   & 0.2902 & 0.5108 & 0.9412 & 0.9873 & 0.9892 & 0.9941 \\ \hline
q=2/3                   & 0.3108 & 0.9333 & 0.9922 & 0.9931 & 0.9941 & 0.9971 \\ \hline
q=1                     & 0.5088 & 0.9980 & 1.0000 & 1.0000 & 1.0000 & 1.0000 \\ \hline
SCAD                    & 0.2882 & 0.8471 & 0.9157 & 0.9363 & 0.9324 & 0.9422 \\ \hline
MCP                     & 0.1373 & 0.4539 & 0.8461 & 0.9088 & 0.9353 & 0.9382 \\ \hline
\end{tabular}\label{t-sen}
\end{table}

\begin{table}[htbp]
\centering
\caption{Specificity of different regularization methods.}
\begin{tabular}{|c|c|c|c|c|c|c|}
\hline
\multirow{2}{*}{Method} & \multicolumn{6}{c|}{Sample size}                    \\ \cline{2-7}
                        & 177    & 355    & 532    & 710    & 887    & 976    \\ \hline
q=0                     & 0.9229 & 0.9882 & 0.9980 & 0.9986 & 0.9989 & 0.9990 \\ \hline
q=1/2                   & 0.8810 & 0.8119 & 1.0000 & 1.0000 & 1.0000 & 1.0000 \\ \hline
q=2/3                   & 0.8782 & 0.9999 & 1.0000 & 1.0000 & 1.0000 & 1.0000 \\ \hline
q=1                     & 0.8088 & 0.7454 & 0.7680 & 0.7473 & 0.7357 & 0.6120 \\ \hline
SCAD                    & 0.9653 & 0.9900 & 0.9906 & 0.9908 & 0.9919 & 0.9925 \\ \hline
MCP                     & 0.9466 & 0.9757 & 0.9909 & 0.9919 & 0.9925 & 0.9925 \\ \hline
\end{tabular}\label{t-spe}
\end{table}

Finally, it is worth mentioning that the existing $\ell_q$ optimization algorithms (see, e.g., \cite{Candes2008Enhancing, Chartrand2008Iteratively, Hu2017Group, Xu2012Regu}) are only proved to converge to a critical point, while their convergence to a global optimum is still an open question. Nevertheless, it is demonstrated by the numerical results above, as well as the ones in the literature, that the limiting point of these algorithms performs well in estimating the underlying true parameter.

\end{document}